\documentclass[mnsc,sglanonrev]{informs4}

\OneAndAHalfSpacedXI

\usepackage{mathtools}
\usepackage{algorithm}
\usepackage{algpseudocode}
\usepackage{booktabs}
\usepackage{enumitem}
\usepackage{subcaption}
\usepackage{float}
\usepackage{tikz}
\graphicspath{{../figs/}{figs/}{manuscript/figs/}}
\defcitealias{iab2026internet}{IAB/PwC 2026}
\usepackage[capitalize,noabbrev]{cleveref}
\makeatletter
\providecommand{\theHALG@line}{}
\renewcommand{\theHALG@line}{\thealgorithm.\arabic{ALG@line}}
\makeatother

\EquationsNumberedThrough

\TheoremsNumberedBySection
\ECRepeatTheorems

\makeatletter
\gdef\th@TH{%
  \def\theorem@headerfont{\normalfont\TheoremHeaderFont}%
  \theorempreskipamount=6pt plus 3pt minus 2pt
  \theorempostskipamount=6pt plus 3pt minus 2pt
  \if@MOOR\labelsep1em\else\labelsep0.5em\fi
  \def\@begintheorem##1##2{\normalfont\TheoremTextFont
        \item[\hskip\labelsep \theorem@headerfont ##1\ ##2.]}%
  \def\@opargbegintheorem##1##2##3{\normalfont\TheoremTextFont
        \item[\hskip\labelsep \theorem@headerfont ##1\ ##2\ {\bfseries(##3)}.]}}
\gdef\th@THkey{%
  \def\theorem@headerfont{\normalfont\TheoremHeaderFont}%
  \theorempreskipamount=6pt plus 3pt minus 2pt
  \theorempostskipamount=6pt plus 3pt minus 2pt
  \if@MOOR\labelsep1em\else\labelsep0.5em\fi
  \def\@begintheorem##1##2{\normalfont\TheoremTextFont
        \item[\hskip\labelsep \theorem@headerfont ##1\ ##2.]}%
  \def\@opargbegintheorem##1##2##3{\normalfont\TheoremTextFont
        \item[\hskip\labelsep \theorem@headerfont {##3}]}}
\makeatother

\theoremstyle{TH}
\newtheorem{abstraction}{Abstraction}[section]
\theoremstyle{EX}
\newtheorem{fact}{Fact}[section]

\renewenvironment{proof}[1][Proof]{%
  \par\addvspace{6pt plus 3pt minus 2pt}%
  \noindent\textit{#1.}\quad\ignorespaces
}{%
  \unskip\Halmos\par\addvspace{6pt plus 3pt minus 2pt}%
}

\makeatletter
\renewenvironment{assumption}[1][]{%
  \par\addvspace{6pt plus 3pt minus 2pt}%
  \noindent\refstepcounter{assumption}%
  {\normalfont\TheoremHeaderFont Assumption \theassumption
    \def\@tempa{#1}%
    \ifx\@tempa\@empty\else\ {\bfseries(#1)}\fi.}%
  \enskip\normalfont\TheoremTextFont\ignorespaces
}{%
  \par\addvspace{6pt plus 3pt minus 2pt}%
}

\renewenvironment{remark}[1][]{%
  \par\addvspace{6pt plus 3pt minus 2pt}%
  \noindent\refstepcounter{remark}%
  {\normalfont\TheoremHeaderFont Remark \theremark
    \def\@tempa{#1}%
    \ifx\@tempa\@empty\else\ {\bfseries(#1)}\fi.}%
  \enskip\normalfont\TheoremTextFont\ignorespaces
}{%
  \par\addvspace{6pt plus 3pt minus 2pt}%
}
\makeatother

\providecommand{\argmax}{\operatorname*{arg\,max}}
\providecommand{\argmin}{\operatorname*{arg\,min}}
\newcommand{\E}{\mathbb{E}}
\newcommand{\R}{\mathbb{R}}
\newcommand{\bc}{{\overline{c}}}
\newcommand{\1}{\mathbf{1}}
\newcommand{\calP}{\mathcal{P}}

\newcommand{\calC}{\mathcal{C}}
\newcommand{\hcalC}{\widehat{\mathcal{C}}}
\newcommand{\Clow}{\widehat{\mathcal{C}}^{\mathrm{low}}}

\newcommand{\calD}{\mathcal{D}}
\newcommand{\calI}{\mathcal{I}}

\newcommand{\calG}{\mathcal{G}}
\newcommand{\calB}{\mathcal{B}}

\newcommand{\hF}{{\widehat{F}}}
\newcommand{\hG}{{\widehat{G}}}
\newcommand{\hv}{{\widehat{v}}}
\newcommand{\hq}{{\widehat{q}}}
\newcommand{\hc}{{\widehat{c}}}
\newcommand{\hl}{{\widehat{\ell}}}
\newcommand{\htheta}{{\widehat{\theta}}}
\newcommand{\hphi}{{\widehat{\phi}}}
\newcommand{\te}{{\widetilde{e}}}
\newcommand{\bg}{{\overline{g}}}
\newcommand{\OPT}{\mathrm{OPT}}
\newcommand{\Var}{\mathrm{Var}}
\newcommand{\Reg}{\mathrm{Regret}}
\newcommand{\Viol}{\mathrm{Viol}}
\newcommand{\hob}{\mathrm{HoB}}
\newcommand{\bgt}{{\mathrm{bgt}}}

\newcommand{\full}{{\mathrm{full}}}
\newcommand{\bin}{{\mathrm{bin}}}
\newcommand{\ros}{{\mathrm{ros}}}

\DeclarePairedDelimiter{\abs}{\lvert}{\rvert}

\DeclarePairedDelimiter{\parr}{(}{)}
\DeclarePairedDelimiter{\parq}{[}{]}

\DeclarePairedDelimiter{\bra}{\lbrace}{\rbrace}

\RUNAUTHOR{Hu, Wen, Yao, Zhang, and Zhou}
\RUNTITLE{First-Price Auctions with Unknown Values and Constraints}
\TITLE{Learning to Bid with Unknown Private Values in Budget-Constrained First-Price Auctions}
\ARTICLEAUTHORS{%
\AUTHOR{Zihao Hu\textsuperscript{1,3,*}, Yuxiao Wen\textsuperscript{2,*},
Yuan Yao\textsuperscript{1}, Jiheng Zhang\textsuperscript{1,3}, and
Zhengyuan Zhou\textsuperscript{4}}
\AFF{\textsuperscript{1}Department of Mathematics, The Hong Kong University
of Science and Technology\\
\textsuperscript{2}Courant Institute of Mathematical Sciences, New York University\\
\textsuperscript{3}Department of Industrial Engineering and Decision Analytics,
The Hong Kong University of Science and Technology\\
\textsuperscript{4}Stern School of Business, New York University\\
\textsuperscript{*}Zihao Hu and Yuxiao Wen contributed equally and are listed
in alphabetical order.\\
\EMAIL{\{zihaohu, yuany, jiheng\}@ust.hk, yuxiaowen@nyu.edu,
zz26@stern.nyu.edu}}
}
\ABSTRACT{
\DIFadd{We study the operational problem of automated bidding in repeated
first-price auctions under budget and return-on-spend (RoS) constraints.} In this setting, an auto-bidder must translate advertiser
goals and constraints into real-time bids while learning two latent objects:
the causal uplift value of each ad impression and the highest competing bid (HoB)
needed to win it. We model uplift values and HoBs through a shared-context
Linear Treatment Effect (LTE) structure and analyze both full-information and
binary HoB feedback. 
We develop Dual-LTE, a dual-aware online learning framework that coordinates
value estimation, HoB estimation, and budget/RoS control through
confidence-guided exploration. We prove regret and constraint-violation
guarantees that scale as
\(\widetilde O(\sqrt T)\) under full-information HoB feedback and
\(\widetilde O(T^{2/3})\) under binary win/loss feedback, where
\(\widetilde O(\cdot)\) hides problem-dependent and logarithmic factors.
\DIFadd{Semi-synthetic experiments using real auction covariates show that
Dual-LTE achieves lower regret than the baselines across budget and RoS
settings, while illustrating the tradeoff between regret and constraint
violation.}
Our results provide operational guidance for DSPs and platform auto-bidders
that \DIFadd{manage advertiser budgets or seek to meet ROAS targets}. When impression values must be
learned, value estimation should be coordinated with budget or ROAS control:
\DIFadd{the auto-bidder should follow the Lagrangian bidding rule only when
value estimates are sufficiently accurate given the current constraint pressure
and should otherwise use controlled exploration.}}
\KEYWORDS{first-price auctions, online learning, contextual bandits, causal inference}
\RequirePackage[normalem]{ulem} 
\RequirePackage{color}\definecolor{RED}{rgb}{1,0,0}\definecolor{BLUE}{rgb}{0,0,1} 
\newif\ifshowchanges
\showchangesfalse
\ifshowchanges
    \providecommand{\DIFadd}[1]{{\protect\color{blue}\uwave{#1}}} 
    \providecommand{\DIFdel}[1]{{\protect\color{red}\sout{#1}}} 
\else
    \providecommand{\DIFadd}[1]{#1}
    \providecommand{\DIFdel}[1]{}
\fi
\providecommand{\DIFaddbegin}{} 
\providecommand{\DIFaddend}{} 
\providecommand{\DIFdelFL}[1]{} 
\RequirePackage{listings} 
\RequirePackage{color} 
\lstdefinelanguage{DIFcode}{ 
  moredelim=[il][\color{red}\sout]{\%DIF\ <\ }, 
  moredelim=[il][\color{blue}\uwave]{\%DIF\ >\ } 
} 
\lstdefinestyle{DIFverbatimstyle}{ 
	language=DIFcode, 
	basicstyle=\ttfamily, 
	columns=fullflexible, 
	keepspaces=true 
} 
\lstnewenvironment{DIFverbatim}[1][]{\lstset{style=DIFverbatimstyle}}{} 
\lstnewenvironment{DIFverbatim*}[1][]{\lstset{style=DIFverbatimstyle,showspaces=true}}{} 
\begin{document}
\maketitle

\section{Introduction}


\DIFadd{Programmatic advertising's shift from second-price to first-price
auctions (FPAs)} \citep{bigler2019rolling,wong2021moving} \DIFadd{has
fundamentally changed how advertisers purchase impressions. Under the standard
private-value benchmark, truthful bidding is optimal in a second-price auction,
whereas a first-price bidder must determine how far to shade its bid below
value. In practice, advertisers delegate auction-level bid decisions to
demand-side platforms (DSPs) and other auto-bidding platforms, whose bidding
agents translate high-level goals and constraints into real-time bids}
\citep{aggarwal2024auto}. \DIFadd{Prominent examples include Google Display \&
Video 360, Amazon DSP, and The Trade Desk. These decisions are made at
substantial scale: IAB/PwC reports that U.S. non-search programmatic advertising
revenues reached \$162.4 billion in 2025} \citepalias{iab2026internet}.

\DIFadd{This delegation creates a challenging operational problem: the
auto-bidder must determine which impressions are valuable, how far to shade
bids below value, and how aggressively to spend over time while satisfying the
advertiser's budget or return-on-ad-spend (ROAS) constraint}. \DIFadd{These constraints link bidding decisions
across time. Aggressive bidding can deplete the budget early or reduce realized
returns, whereas conservative bidding can forgo valuable impressions and delay
learning. Effective auto-bidding must therefore balance immediate value
capture, constraint satisfaction, and the information gained for future
bidding decisions.}

\DIFadd{A central difficulty in designing such auto-bidding policies is that
the value of an impression is not directly observed before bidding. Platforms
such as Google Ads describe value-based Smart Bidding as using auction-time
contextual signals and advertiser-reported conversion data to predict
conversion value and optimize bids toward budget or ROAS goals}
\citep{google2026smartbidding,google2026targetroas}. \DIFadd{Such predictions
are operationally useful, but predicted conversion value is not the same as
the bid-relevant causal value. The latter is the incremental value created by
winning the auction and showing the ad relative to the counterfactual outcome
without the ad. A user may still purchase through organic search, direct
visit, or another channel if the advertiser loses the auction. Consequently,
a conversion observed after an ad is shown may reflect both the ad's causal
effect and the user's pre-existing propensity to purchase. The auto-bidder
must therefore learn the latent uplift value rather than equate predicted
conversion value with the causal effect of showing the ad}
\citep{lewis2015unfavorable,gordon2019comparison}.

\DIFadd{The impression characteristics available to the auto-bidder may also
be informative about auction competition. For example, an impression in a
prominent on-screen position may be attractive to many advertisers, leading to
stronger competition and a higher HoB. More generally, an impression that
appears valuable to one advertiser may also attract stronger demand from
others. The auto-bidder must therefore jointly learn both the causal value of
an impression and the price required to acquire it.}

This joint learning problem under constraints motivates our dual-aware
auto-bidding framework, which jointly learns uplift values and highest
competing bids under budget or return-on-spend (RoS) constraints. The central
managerial message is that value learning cannot be separated from constraint
management.
Exploratory bids improve future estimates but also consume budget or affect RoS
performance. \DIFadd{Mistaken exploitation is also costly because
value-estimation error may lead the auto-bidder to spend on low-value
impressions, thereby depleting the budget or worsening RoS performance.} Our framework therefore uses a confidence-guided
exploitation rule: it follows the learned bidding rule only when the value and
HoB estimates are accurate enough for the current budget or RoS pressure, and
otherwise uses the round for controlled exploration.

\noindent\textbf{Our Contributions.}
Our contributions are fourfold.

\begin{itemize}
    \item \textbf{Constrained auto-bidding with latent causal values.}
    We formulate repeated first-price auto-bidding under budget and RoS
    constraints when the advertiser's value for each ad impression is a latent
    causal uplift rather than an observed input. The model extends the jointly
    linear value--HoB setting of \citet{wen2025joint} from
    unconstrained bidding to \DIFadd{operational settings governed by budget
    and RoS constraints}.

    \item \textbf{Dual-aware learning with confidence-guided exploitation.}
    We develop a unified learning framework for budget- and RoS-constrained
    FPAs. \DIFadd{The algorithm follows the constraint-adjusted bid only when
    value-estimation uncertainty is sufficiently small relative to the current
    constraint pressure and otherwise conducts controlled exploration.}

    \item \textbf{Performance guarantees under full and binary HoB feedback.}
    We prove regret and constraint-violation guarantees under both
    full-information HoB feedback and binary win/loss feedback. The rates scale
    as \(\widetilde O(\sqrt T)\) under full-information HoB feedback and as
    \(\widetilde O(T^{2/3})\) under binary feedback, matching the known horizon
    rates for unconstrained LTE bidding.

    \item \textbf{Semi-synthetic evaluation on real auction covariates.}
    We evaluate the algorithms on semi-synthetic experiments based on the
    iPinYou real-time bidding dataset~\citep{zhang2014real}. \DIFadd{The
    experiments show that, under both full-information and binary feedback,
    Dual-LTE achieves lower regret than the baselines in the budget-constrained
    settings and favorable regret--violation tradeoffs under RoS constraints.}
\end{itemize}

\subsection{Related Work}

\noindent\textbf{Learning in Online Ad Auctions:}
A growing literature studies repeated bidding and learning in online ad
auctions. See \citet{aggarwal2024auto} for a recent survey. Theoretical work on
first-price auctions studies equilibrium behavior, pacing, and efficiency in
strategic auction markets~\citep{conitzer2022pacing,gaitonde2022budget,
lucier2024autobidders}. A complementary online-learning literature designs
bidding policies with known values, HoB learning, or contextual auction
feedback~\citep{han2025optimal,balseiro2019learning,
badanidiyuru2023learning,zhang2024online,wang2023learning}. The closest work on
HoB learning is \citet{badanidiyuru2023learning}, which models the opponents'
highest bid using a linear model but assumes that the bidder's value is observed
before bidding. We instead study a jointly linear setting in which both the
HoB process and the bidder's uplift value are unknown and depend linearly on
the context.

\noindent\textbf{Online Bidding with Constraints:}
Our problem is related to bandits with knapsacks and online optimization with
long-term constraints~\citep{agrawal2016linear,badanidiyuru2018bandits,
immorlica2022adversarial,castiglioni2022online,castiglioni2022unifying}. These
works provide general tools for balancing reward and constraint satisfaction,
\DIFadd{and can be instantiated for constrained bidding. However, they treat
rewards and resource consumption as generic action-dependent outcomes and
therefore do not exploit the specific structure of first-price auctions, in
which a bid governs both the probability of winning and, conditional on
winning, the payment}. In online
advertising, budget pacing and ROI/RoS constraints are central to auto-bidding
systems~\citep{balseiro2023field,lucier2024autobidders}. Within online
bidding, \citet{wang2023learning} study first-price auctions with budgets,
\citet{feng2023online} study second-price auctions under an RoS constraint,
\citet{li2025no} study repeated first-price auctions with ROI/RoS-style
constraints, \citet{aggarwal2025no} study non-truthful auctions with budget and
ROI constraints, and \citet{vijayan2025online} study RoS bidding in which value
is observed after a winning bid. These constrained-bidding models still rely
on impression values that are known before bidding or observed afterward. \DIFadd{Our setting combines
constrained first-price bidding, latent uplift-value learning, and HoB learning:
the auto-bidder must learn both the causal value of an impression and the price
required to acquire it while operating under budget or RoS constraints.}

\noindent\textbf{Causal Measurement in Advertising:}
Causal inference is increasingly used in operations and platform settings to
evaluate interventions, detect heterogeneous treatment effects, and guide
data-driven decisions~\citep{cui2022gender,wang2022instrumental,
wang2023personalized,ding2022buyers}. In online advertising, randomized
experiments and observational measurement estimate advertising
lift~\citep{kohavi2020trustworthy,gordon2019comparison}, while counterfactual
learning and online causal-inference methods use logged or auction data to
evaluate advertising effects~\citep{bottou2013counterfactual,
waisman2019online}. Other work uses randomized platform experiments to
evaluate advertising algorithms and combinations of operational interventions
\citep{ye2023cold,ye2025deep}. Recent causal bidding work embeds
value learning directly into auction decisions: \citet{waisman2019online}
identify causal advertising effects from optimal bids in real-time auctions
and develop a Thompson-sampling method that learns these bids while accounting
for the costs of experimentation, \DIFadd{and }\citet{wen2026marginal} estimate
marginal ad value in repeated second-price auctions. \DIFadd{Closest to our
work, }\citet{wen2025joint} \DIFadd{study joint
value and HoB learning in repeated first-price auctions without budget or RoS
constraints.} We instead
study constrained first-price auto-bidding in which the auto-bidder must
jointly learn causal value and the HoB while managing budget or RoS constraints. This coupling is nontrivial
because exploration and mistaken exploitation both consume budget or RoS slack,
so treatment-effect learning must be coordinated with constraint management.

\subsection{Roadmap}

Section~\ref{sec:formulation} introduces the repeated first-price auction model,
the budget and RoS benchmarks, and the standing assumptions.
Section~\ref{sec:hob_est} develops the HoB-estimation oracle and its
full-information and binary-feedback implementations.
Section~\ref{sec:dual_aware_lte_framework} presents the dual-aware LTE
framework, including value estimation and the unified bidding algorithm.
Section~\ref{sec:regret_instantiations} states the regret
and violation guarantees for the budget and RoS instantiations.
Section~\ref{sec:numerical_experiments} reports the semi-synthetic iPinYou
experiments, and Section~\ref{sec:conclusion} \DIFadd{concludes}.

\section{Problem Formulation}
\label{sec:formulation}

\smallskip
\noindent\textit{Notation.}\quad
For an integer \(n\), write \([n]=\{1,\ldots,n\}\). We use
\(\E[\cdot]\) for expectation and \(\1\{\cdot\}\) for indicators. The notations
\(O(\cdot)\) and \(\Theta(\cdot)\) have their standard asymptotic meanings. The
notation \(\widetilde O(\cdot)\) suppresses logarithmic factors in \(T\),
confidence parameters, and fixed problem constants. \DIFadd{When summarizing
horizon rates such as \(\widetilde O(\sqrt T)\), we report only the dependence
on \(T\).} The modes \(\full\) and \(\bin\) denote full-information
and binary HoB feedback, while \(\bgt\) and \(\ros\) denote the budget and
return-on-spend constraint regimes.

We consider a single bidder in repeated first-price auctions over rounds
\(t\in[T]\). \DIFadd{At the beginning of round \(t\), the bidder observes a
context vector \(x_t\in\R^d\) representing the available information about
the ad impression and auction, with \(\|x_t\|_2\le1\), and submits a bid
\(b_t\in[0,1]\).} \DIFadd{Motivated by practical bidding systems, we restrict
the bidder to a finite bid grid \(\calB\subseteq[0,1]\).} \DIFadd{Contexts
\(x_t\) are drawn i.i.d. over time, and the conditional distributions of the
potential outcomes and HoB given \(x_t\) are time-invariant.} The bidder's latent uplift value and the highest competing bid (HoB)
follow the jointly linear model
\begin{equation}\label{eq:induce}
    v_{t,1}-v_{t,0}=\theta_*^\top x_t+\epsilon_t,\qquad
    m_t=\phi_*^\top x_t+\xi_t .
\end{equation}
where \(\theta_*,\phi_*\in\R^d\) are unknown,
\(\E[\epsilon_t\mid x_t]=\E[\xi_t\mid x_t]=0\), \(m_t\in[0,1]\), and \(G\) is
the noise CDF of \(\xi_t\). This specification induces the conditional
win-probability function
\(F_t(b)=\Pr(m_t\le b\mid x_t)=G(b-\phi_*^\top x_t)\), where \(m_t\) is the
HoB threshold. The HoB residual \(\xi_t\) is conditionally
sub-Gaussian given \(x_t\). \DIFadd{We assume \(\|\theta_*\|_2\le1\) and
\(\|\phi_*\|_2\le1\).} We normalize bids, HoBs, and potential outcomes so
that \(b_t,m_t,v_{t,0},v_{t,1}\in[0,1]\). We also assume the incremental
value is nonnegative, \(v_{t,1}-v_{t,0}\in[0,1]\). Hence the
conditional treatment-effect value is
\(v_t=\E[v_{t,1}-v_{t,0}\mid x_t]=\theta_*^\top x_t\).

For any bid \(b\), let \(o_t(b)=\1[b\ge m_t]\) indicate whether the bidder wins
the auction, and let \(c_t(b)=b\cdot o_t(b)\) denote the first-price payment. If the bidder wins, it
observes \(v_{t,1}\) and pays \(b\). Otherwise it observes \(v_{t,0}\) and pays
zero. Under full-information
HoB feedback the learner observes \(m_t\), while under binary HoB
feedback it observes only \(o_t(b_t)\). To make the causal estimation of the marginal value $v_{t,1}-v_{t,0}$ tractable, we make the following assumptions.

\begin{assumption}[HoB-outcome conditional independence]\label{ass:unconfound}
The potential outcomes and the HoB are conditionally independent
given context, \((v_{t,1},v_{t,0})\perp\!\!\!\perp m_t\mid x_t\).
\end{assumption}

\begin{assumption}[History conditional independence]\label{ass:cond_ind}
\DIFadd{Let \(\mathcal H_{t-1}\) denote the information available to the
bidder after round \(t-1\), including its past actions, observations, and
randomization.} Conditional on
\(x_t\), the auction random vector \((v_{t,1},v_{t,0},m_t)\) is independent
of \(\mathcal H_{t-1}\).
\end{assumption}
Assumptions~\ref{ass:unconfound} and \ref{ass:cond_ind} follow the setup used
for uplift value estimation in FPAs~\citep{waisman2019online,wen2025joint}. The
first identifies the treatment effect from allocation-based outcomes: after
conditioning on the observed auction context, the HoB carries no
additional information about the potential outcomes. \DIFadd{The second
places the analysis in a stochastic, nonadaptive environment:
conditional on \(x_t\), the underlying auction variables are independent of past
history, although the bidder's policy may adapt to past observations.}
\DIFadd{The observed context should capture factors that jointly influence
user outcomes and auction competition; otherwise, uplift estimates may
confound the effect of advertising with selection into easier or more
competitive auctions. History-conditional independence is plausible in large
advertising markets, where many advertisers participate and the actions of any
single bidder have a negligible effect on future auction conditions over the
campaign horizon.}

\begin{assumption}[Noise CDF regularity and no zero-price allocation]\label{ass:bound_density}
The noise CDF \(G\) is \(L\)-Lipschitz, and \(F_t(0)=0\) for every round
\(t\).
\end{assumption}
Lipschitzness controls HoB CDF approximation errors, similar to
\citet{badanidiyuru2023learning}. The boundary condition means that a zero bid
cannot win, consistent with real-world auctions in which competition, reserve
prices, or bid floors impose a positive winning
threshold~\citep{OpenRTB2.6,google_authorized_buyers_bid_data}.

Following prior work on constrained auto-bidding, we model budget bidding as net
surplus maximization~\citep{wang2023learning} and RoS bidding as
gross value maximization subject to a return-on-spend
requirement~\citep{feng2023online,li2025no,vijayan2025online}. Thus the reward is defined respectively as
\[
    r_t^{\bgt}(b)=o_t(b)(v_{t,1}-v_{t,0}-b),
    \qquad
    r_t^{\ros}(b)=o_t(b)(v_{t,1}-v_{t,0}).
\]
\DIFadd{For a target \(\kappa>0\), the RoS constraint requires expected
cumulative value to be at least \(\kappa\) times expected cumulative spend:}
\[
    \DIFadd{\sum_{t=1}^T\E[r_t^{\ros}(b_t)]
    \ge
    \kappa\sum_{t=1}^T\E[c_t(b_t)].}
\]
\DIFadd{The corresponding expected one-round spend and RoS slack are
\(\bar c_t(b)=bF_t(b)\) and \(g_t(b)=(v_t-\kappa b)F_t(b)\), respectively. The cumulative expected RoS
violation is}
\[
    \Viol_T^{\ros}=\left[-\sum_{t=1}^T\E g_t(b_t)\right]_+ .
\]
In the benchmark definitions below, policies may be randomized and map
contexts to distributions over bids. Expressions such as
\(r_t^{\bgt}(\pi(x_t))\), \(r_t^{\ros}(\pi(x_t))\), and \(c_t(\pi(x_t))\)
include the policy randomization.

\smallskip
\noindent\textit{Budget benchmark and regret.}\quad
With a hard budget \(B\), the algorithm stops whenever the bid would exceed
the remaining budget. Let \(\tau\in\{0,\ldots,T\}\) be the last round
in which the algorithm submits a bid, with \(\tau=T\) if the budget is never
exhausted. Define the optimal per-round value of a stationary budget-feasible
policy by
\[
    \OPT^{\bgt}
    \coloneqq
    \max_{\pi}
    \E\!\left[r_t^{\bgt}(\pi(x_t))\right],
    \qquad
    \textnormal{s.t.}\quad
    \E\!\left[c_t(\pi(x_t))\right]\le \frac{B}{T}.
\]
Let \(\pi_{\bgt}^*\) be any policy attaining this maximum. By stationarity,
its expected cumulative reward is \(T\OPT^{\bgt}\). Accordingly, the
budget-constrained regret is
\[
    \Reg_T^{\bgt}
    =
    T\OPT^{\bgt}
    -
    \E\!\left[\sum_{t=1}^{\tau}r_t^{\bgt}(b_t)\right].
\]
We make the following linear budget assumption, which keeps the per-period
budget on a constant scale, consistent with standard assumptions in online
bidding with budgets~\citep{balseiro2019learning,wang2023learning}.
\begin{assumption}[Linear-growth budget]\label{ass:linear_budget}
The budget $B=cT$ for some constant $c\in(0,1]$.
\end{assumption}

\smallskip
\noindent\textit{RoS benchmark and convexification.}\quad
\DIFadd{The budget and RoS constraints both apply to aggregate performance, so
their benchmarks allow randomized bidding policies. In the RoS setting, the
benefit of randomization is especially transparent: a bid that generates
positive RoS slack can be mixed with a higher-value bid that consumes some of
that slack, thereby increasing expected value while maintaining aggregate RoS
feasibility}
\citep{li2025no}.

\DIFadd{For a fixed context \(x_t\), each deterministic bid \(b\) induces the
allocation--payment pair \((F_t(b),bF_t(b))\). Both expected value and RoS
slack depend on the bid only through this pair. A randomized bid distribution
induces a convex combination of the corresponding allocation--payment points.
We therefore define the per-context action set as}
\[
    \DIFadd{\calC_t
    =
    \operatorname{conv}\bigl\{(F_t(b),bF_t(b)):b\in[0,1]\bigr\}.}
\]
\DIFadd{Here \(\operatorname{conv}\) denotes the convex-hull operation. This
convexification is an exact representation of randomized bidding
rather than a relaxation: every point in \(\calC_t\) is induced by some
distribution over bids.

A randomized policy induces an action
\((q_\pi(x_t),c_\pi(x_t))\in\calC_t\), where \(q_\pi(x_t)\) is its allocation
probability and \(c_\pi(x_t)\) is its expected spend. Expected value and RoS
slack are then \(v_tq_\pi(x_t)\) and
\(v_tq_\pi(x_t)-\kappa c_\pi(x_t)\), respectively, and are linear in the
induced action. The finite-grid representation and implementation of these
actions are developed in Section~\ref{sec:lower_conv_hull}.}

The stationary convexified RoS benchmark is
\begin{equation}\label{eq:opt_def_contextual}
    \OPT^{\ros}
    =
    \sup_{\pi}
    \E_{x_t}[v_tq_\pi(x_t)]
    \quad
    \textnormal{s.t.}\quad
    \E_{x_t}[v_tq_\pi(x_t)-\kappa c_\pi(x_t)]\ge0,\qquad
    (q_\pi(x_t),c_\pi(x_t))\in\calC_t .
\end{equation}
The benchmark regret in the
RoS setting is
\[
    \Reg_T^{\ros}
    =
    T\OPT^{\ros}-\sum_{t=1}^T\E[r_t^{\ros}(b_t)].
\]
For a hull action \(z=(q,c)\), write \(r_t(z)=v_tq\) and
\(g_t(z)=v_tq-\kappa c\). To characterize the RoS benchmark and bound its
optimal dual multiplier, define the population dual objective
\begin{equation}\label{eq:def_overline_D}
    \overline D(\lambda)
    =
    \E_{x_t\sim\calD}
    \max_{z=(q,c)\in\calC_t}
    \{(1+\lambda)v_tq-\lambda\kappa c\}.
\end{equation}
The following strict-feasibility condition ensures strong duality for the
convexified RoS benchmark and bounds the minimizer of \(\overline D\). Note
that \(\pi_{\mathrm{sl}}\) can be any policy rather than the optimal benchmark.
\begin{assumption}[Distributional Slater condition]\label{assump:slater}
For the RoS-constrained setting, there exists a stationary policy
\(\pi_{\mathrm{sl}}\) and a constant \(\delta_S>0\) such that
\[
    \frac{1}{T}\sum_{t=1}^T
    \E\!\left[
        r_t^{\ros}(\pi_{\mathrm{sl}}(x_t))
        -\kappa c_t(\pi_{\mathrm{sl}}(x_t))
    \right]\ge\delta_S .
\]
\end{assumption}
\DIFadd{Operationally, this assumption requires the advertiser's RoS target to
be attainable with a positive safety margin. In particular, there must exist a
stationary policy whose expected value exceeds the level required by the RoS
target---for example, by concentrating bids on impressions with high expected
value relative to acquisition cost. This margin rules out targets at the
boundary of, or beyond, what the market can feasibly deliver and provides a
buffer against estimation error and temporary RoS shortfalls. Analytically,
it bounds the RoS shadow price, enabling stable projected dual updates.
Restricting the dual variable to a bounded domain is standard in online
optimization with long-term constraints}~\citep{castiglioni2022unifying}.

\section{\DIFadd{Learning Auction Competition: HoB Estimation under Full and
Binary Feedback}}
\label{sec:hob_est}

\DIFadd{The highest competing bid is the market-clearing threshold in a
first-price auction. Its conditional distribution determines a bid's win
probability and expected payment. Because this distribution is unknown, the
auto-bidder must learn auction competition from post-auction feedback. Under
full-information feedback, the bidder observes the realized HoB, whereas under
binary feedback, it observes only whether its bid wins.

This section develops feedback-specific estimators and confidence bounds for
the conditional HoB distribution. To separate competition learning from value
estimation and constrained bidding, we summarize each estimator through a
common confidence abstraction, cumulative estimation uncertainty, and the
number of HoB-exploration rounds. Subsection~}\ref{sec:hob_target_oracle}
\DIFadd{defines the estimation target and abstraction. Subsections~}
\ref{sec:full_hob_est} \DIFadd{and }\ref{sec:bin_hob_est}
\DIFadd{develop the two estimators, and Subsection~}
\ref{sec:hob_estimation_widths} \DIFadd{summarizes their effects on regret.}

\subsection{HoB Estimation Target and Oracle Abstraction}
\label{sec:hob_target_oracle}

\DIFadd{To evaluate a candidate bid \(b\), the auto-bidder needs its
conditional win probability}
\[
    F_t(b)=\Pr(m_t\le b\mid x_t)=G(b-\phi_*^\top x_t),
    \qquad b\in[0,1].
\]
\DIFadd{In a first-price auction, this quantity determines both the expected
allocation \(F_t(b)\) and the expected payment \(bF_t(b)\), and therefore
enters the constraint-adjusted bidding objective directly. Equivalently,
\(F_t\) is the conditional CDF of the HoB evaluated at the bid level.} We use
the linear model for the HoB,
\(m_t=\phi_*^\top x_t+\xi_t\), to obtain such control by estimating the
contextual shift \(\phi_*^\top x_t\) and the one-dimensional noise CDF \(G\).

The regret analysis uses HoB estimation through an abstraction that provides high-probability
confidence bounds for the conditional CDFs. We record this property in the
following abstraction.

\begin{abstraction}[HoB estimation]\label{abs:oracle_hob}
At the beginning of each round \(t\), given \(x_t\) and  \DIFaddbegin \DIFadd{the observable
history \(\mathcal H_{t-1}\)}\DIFaddend , an
oracle outputs \((\hF_t,\delta_t)\) such that, with probability at least
\(1-T^{-3}\), the confidence bound
\[
    \abs*{\hF_t(b)-F_t(b)}\le \delta_t(b)
\]
holds simultaneously for every $t\in[T]$ and every candidate bid
\DIFadd{\(b\in\calB\)}.
\end{abstraction}

Our algorithms and results apply to any HoB estimator satisfying
Abstraction~\ref{abs:oracle_hob}. For the full-information and binary-feedback
settings considered here, we construct concrete implementations based on the
following plug-in estimator:
\begin{equation}\label{eq:hob_plugin}
    \hF_t(b)=\hG_t(b-\hphi_t^\top x_t).
\end{equation}
Here \(\hphi_t\) is the time-\(t\) estimator of \(\phi_*\), so
\(\hphi_t^\top x_t\) estimates the conditional mean HoB
\(\phi_*^\top x_t\). The function \(\hG_t\) is the time-\(t\) estimator of the
time-invariant noise CDF \(G\). The CDF estimation error decomposes into two
components:
\begin{equation}\label{eq:hob_error_decomp}
\begin{aligned}
    \abs*{\hF_t(b)-F_t(b)}
    &=
    \abs*{\hG_t(b-\hphi_t^\top x_t)-G(b-\phi_*^\top x_t)}
    \\
    &\le
    \underbrace{
    \abs*{\hG_t(b-\hphi_t^\top x_t)-G(b-\hphi_t^\top x_t)}
    }_{\textnormal{noise-CDF estimation error}}
    +
    \underbrace{
    \abs*{G(b-\hphi_t^\top x_t)-G(b-\phi_*^\top x_t)}
    }_{\textnormal{shift estimation error}} .
\end{aligned}
\end{equation}
This decomposition separates the two statistical tasks needed to implement the
HoB oracle. The shift term is common to both feedback models. Since \(G\) is
\(L\)-Lipschitz,
\[
    \abs*{G(b-\hphi_t^\top x_t)-G(b-\phi_*^\top x_t)}
    \le L\abs*{(\hphi_t-\phi_*)^\top x_t}.
\]
\DIFadd{Thus, accurate estimation of \(\phi_*\) controls the error induced by
replacing the true threshold \(b-\phi_*^\top x_t\) with the plug-in threshold
\(b-\hphi_t^\top x_t\).}

The noise-CDF term is where the feedback model matters. Under full-information
feedback, observed HoBs yield predictable proxy residuals for the noise
\(\widehat{\xi}_\tau=m_\tau-\hphi_\tau^\top x_\tau\), so empirical-CDF
concentration controls this term. Under binary feedback, each win/loss
\DIFadd{observation is a Bernoulli draw whose conditional mean equals \(G\) at
the corresponding threshold. The bin-based estimator therefore pools past
observations associated with nearby estimated thresholds.}
The next two subsections instantiate these two controls
under full-information and binary HoB feedback, respectively.

\subsection{Full-Information HoB Feedback}\label{sec:full_hob_est}

Under full-information HoB feedback, the learner observes the realized HoB
\(m_\tau\) after every round. This observation can be used to estimate
both the contextual shift \(\phi_*^\top x_\tau\) and the noise distribution
\(G\), regardless of the bid submitted in that round. The natural ridge
estimator for \(\phi_*\) is
\[
    \hphi_t
    =
    \argmin_{\phi\in\R^d}
    \sum_{\tau<t}(m_\tau-\phi^\top x_\tau)^2
    +\lambda_\phi\|\phi\|_2^2 .
\]
Given \(\hphi_t\), the conditional CDF is estimated by the plug-in estimator
in \eqref{eq:hob_plugin}. Although \(m_\tau\) is observed under
full-information feedback, the realized noise
\(\xi_\tau=m_\tau-\phi_*^\top x_\tau\) is not observed because
\(\phi_*\) is unknown. We therefore form predictable proxy residuals for the noise
\[
    \widehat{\xi}_\tau=m_\tau-\hphi_\tau^\top x_\tau
\]
and define the empirical CDF of these residual proxies
\[
    \hG_t(u)
    \coloneqq
    \frac{1}{t-1}\sum_{\tau<t}\1[\widehat{\xi}_\tau\le u].
\]
Here \(\hphi_\tau\) is the estimate available before observing \(m_\tau\).
Thus the fitted value \(\hphi_\tau^\top x_\tau\) is fixed given the history
available before round \(\tau\), so \(\widehat{\xi}_\tau\) does not reuse
\(m_\tau\) in its own fitted value. \DIFadd{This predictability allows us to
apply martingale concentration despite adaptive bidding.}

\DIFadd{The following lemma shows that the plug-in estimator implements the
HoB oracle in Abstraction~}\ref{abs:oracle_hob} \DIFadd{under full-information
feedback.}

\begin{lemma}[Full-information HoB confidence band]
\label{lem:full_hob_est}
Under full-information HoB feedback, with the standing model in
\eqref{eq:induce} and the Lipschitz noise-CDF condition in
Assumption~\ref{ass:bound_density}, with probability at least \(1-T^{-3}\),
the estimator in \eqref{eq:hob_plugin} satisfies, simultaneously for all
\(t\) and \(b\in[0,1]\),
\[
    \abs*{\hF_t(b)-F_t(b)}
    \le
    \beta_{\hob}
    \parr*{
        \|x_t\|_{A_{t,\full}^{-1}}
        +\sqrt{\frac{d\log T}{t}}
    }
    \eqqcolon \delta_t ,
\]
where \(A_{t,\full}=\lambda_\phi I+\sum_{\tau<t}x_\tau x_\tau^\top\) and
\DIFadd{\(\beta_{\hob}=O(\sqrt{\log T})\), with the ridge-bias contribution
\(\sqrt{\lambda_\phi}\) and fixed model constants absorbed into the
\(O(\cdot)\) notation}.
\end{lemma}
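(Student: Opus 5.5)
Starting from the decomposition \eqref{eq:hob_error_decomp}, we bound the two terms separately on a single high-probability event.

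\textbf{Shift term.} Since \(m_\tau=\phi_*^\top x_\tau+\xi_\tau\) is observed every round and \(\xi_\tau\) is conditionally sub-Gaussian given \(x_\tau\) (hence given the history, by Assumption~\ref{ass:cond_ind}), the self-normalized martingale bound for ridge regression applies. With probability at least \(1-T^{-3}/2\), simultaneously for all \(t\),
\[
\|\hphi_t-\phi_*\|_{A_{t,\full}}\le \sqrt{\lambda_\phi}+\sigma\sqrt{d\log(1+T/(d\lambda_\phi))+2\log(2T^3)}=O(\sqrt{d\log T}).
\]
Cauchy--Schwarz and the \(L\)-Lipschitz property of \(G\) then give a shift error of at most \(L\|\hphi_t-\phi_*\|_{A_{t,\full}}\|x_t\|_{A_{t,\full}^{-1}}\). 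The \(\sqrt d\) factor and the constants are absorbed into \(\beta_{\hob}\), as the statement allows.

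\textbf{Noise-CDF term.} Fix \(u\) and write the error as
\[
\hG_t(u)-G(u)=\frac{1}{t-1}\sum_{\tau<t}\Bigl(\1[\widehat\xi_\tau\le u]-\Pr(\widehat\xi_\tau\le u\mid\mathcal H_{\tau-1},x_\tau)\Bigr)+\frac{1}{t-1}\sum_{\tau<t}\Bigl(G\bigl(u+(\hphi_\tau-\phi_*)^\top x_\tau\bigr)-G(u)\Bigr).
\]
This uses \(\widehat\xi_\tau=\xi_\tau+(\phi_*-\hphi_\tau)^\top x_\tau\) with \(\hphi_\tau\) predictable, and the fact that, conditionally, \(\xi_\tau\sim G\) independently of \(\mathcal H_{\tau-1}\) by Assumption~\ref{ass:cond_ind}.

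The first sum is a bounded martingale-difference average. Azuma's inequality controls it at a fixed \(u\). To make the bound uniform in \(u\), discretize \(u\) on a grid of mesh \(1/T\). Monotonicity of the indicators and of the conditional CDFs, together with the \(L\)-Lipschitz property, controls points between grid nodes. A union bound over \(O(T)\) grid points and \(T\) rounds then gives a bound of \(O(\sqrt{\log T/t})\). The same grid also delivers uniformity over all \(b\in[0,1]\) and over the random evaluation point \(u=b-\hphi_t^\top x_t\), which depends on \(x_t\) and so must be handled by the uniform bound rather than a fixed-\(u\) one.

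The second sum is at most
\[
\frac{L}{t-1}\sum_{\tau<t}\|\hphi_\tau-\phi_*\|_{A_{\tau,\full}}\|x_\tau\|_{A_{\tau,\full}^{-1}}.
\]
On the ridge event, Cauchy--Schwarz and the elliptical potential lemma, \(\sum_{\tau<t}\min(1,\|x_\tau\|^2_{A_\tau^{-1}})=O(d\log T)\), bound this by
\[
O\Bigl(\sqrt{d\log T}\cdot\sqrt{t\,d\log T}/t\Bigr)=O\bigl(d\log T/\sqrt t\bigr).
\]
The \(\min(1,\cdot)\) truncation is harmless: if \(\lambda_\phi\ge1\) then \(\|x_\tau\|_{A_\tau^{-1}}\le 1\), and in any case \(|G(\cdot)-G(\cdot)|\le1\).

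The main obstacle is this accumulated proxy-residual bias. It carries an extra \(\sqrt{d\log T}\) factor relative to the stated \(\sqrt{d\log T/t}\) form, so one must either absorb it into \(\beta_{\hob}\) as a fixed model constant, consistent with the statement's \(O(\cdot)\) convention, or sharpen the estimate. Combining the two events by a union bound gives total failure probability at most \(T^{-3}\) and yields the claimed \(\delta_t\).
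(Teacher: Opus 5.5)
Your proposal is correct and follows essentially the paper's argument. Both use the same shift/noise-CDF split and the predictable-shift identity $\E[\1[\widehat\xi_\tau\le u]\mid\mathcal H_{\tau-1},x_\tau]=G(u+e_\tau)$ with $e_\tau=(\hphi_\tau-\phi_*)^\top x_\tau$. Both get uniformity in $u$ from Azuma on a grid with monotone interpolation, and both bound the proxy-residual bias by Lipschitzness plus the elliptical potential.

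The extra $\sqrt d$ you flag is equally present in the paper, which simply asserts that the ridge radius is $O(\sqrt{\log T})$. You can genuinely remove it. In full-information mode the regression design does not depend on the bids, so conditional on the contexts, $e_\tau$ is a ridge-bias term of size at most $\sqrt{\lambda_\phi}\|x_\tau\|_{A_{\tau,\full}^{-1}}$ plus a fixed linear combination of independent sub-Gaussian noises with variance proxy $O(\|x_\tau\|_{A_{\tau,\full}^{-1}}^2)$. A union bound over $\tau$ then gives $|e_\tau|\le O(\sqrt{\log T})\,\|x_\tau\|_{A_{\tau,\full}^{-1}}$ with no factor of $d$.
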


In this bound, the \(A_{t,\full}^{-1}\)-norm term
\(\|x_t\|_{A_{t,\full}^{-1}}\) corresponds to the shift-estimation component
in \eqref{eq:hob_error_decomp}, while \(\sqrt{d\log T/t}\) corresponds to
the full-information noise-CDF estimation component.

\subsection{Binary HoB Feedback and Threshold Comparisons}
\label{sec:bin_hob_est}

Under binary HoB feedback, the learner observes only
\(o_\tau=\1[b_\tau\ge m_\tau]\). Given the history before round \(\tau\),
the current context \(x_\tau\), and the selected bid \(b_\tau\), this
observation is a Bernoulli sample with mean
\[
    \E[o_\tau\mid \mathcal H_{\tau-1},x_\tau,b_\tau]
    =\Pr(m_\tau\le b_\tau\mid
    \mathcal H_{\tau-1},x_\tau,b_\tau)
    =G(b_\tau-\phi_*^\top x_\tau).
\]
Thus each binary observation provides a noisy query of the noise CDF \(G\)
at the threshold \(b_\tau-\phi_*^\top x_\tau\).
Let \(\mathcal F_\tau^-\) contain this pre-feedback information, including the
randomization used to select \(b_\tau\). Then
\(\widehat u_\tau=b_\tau-\hphi_\tau^\top x_\tau\) is
\(\mathcal F_\tau^-\)-measurable, as is
\(\1[\widehat u_\tau\in I_j]\) for every fixed bin \(I_j\).

\DIFadd{The binary implementation uses two HoB-exploration sets that address
distinct information gaps. The set \(\mathcal E_x\) contains rounds on which
the shift test \eqref{eq:bin_hob_width_test} is triggered and improves
estimation of contextual shifts in competition. The set
\(\mathcal E_\delta\) contains rounds on which the CDF-width condition in
Line~11 of Algorithm~\ref{alg:dual-aware-lte} is satisfied and collects
information at thresholds whose local win probabilities are poorly estimated
from the binary history.} Write
\(\mathcal E_{x,t}=\mathcal E_x\cap\{1,\ldots,t-1\}\).
\DIFadd{The shift is estimated from the uniform-bid rounds in
\(\mathcal E_x\).} Since \(m_\tau\in[0,1]\),
\[
    \E[\1[b_\tau<m_\tau]\mid x_\tau]
    =\E[m_\tau\mid x_\tau]
    =\phi_*^\top x_\tau.
\]
Thus \(1-o_\tau\) is a bounded regression response with conditional mean
\(\phi_*^\top x_\tau\). We use
\begin{equation}\label{eq:binary_shift_estimator}
    A_{t,\bin}=\lambda_\phi I+
       \sum_{\tau\in\mathcal E_{x,t}}x_\tau x_\tau^\top,
    \qquad
    \hphi_t=A_{t,\bin}^{-1}
       \sum_{\tau\in\mathcal E_{x,t}}x_\tau(1-o_\tau).
\end{equation}
\DIFadd{Applying the self-normalized concentration inequality for regularized
linear regression to the bounded responses \(1-o_\tau\) yields, with
probability at least \(1-\frac{1}{2}T^{-3}\),}
\begin{equation}\label{eq:binary_shift_event}
    \abs*{(\hphi_t-\phi_*)^\top x_t}
    \le \beta_\hob\|x_t\|_{A_{t,\bin}^{-1}}
    \quad\text{for all }t,
    \qquad \beta_\hob=O(\sqrt{\log T}).
\end{equation}
Define \(a_T=d^{1/6}T^{-1/3}\). To ensure sufficient accuracy of the shift
estimate, round \(t\) is assigned to \(\mathcal E_x\) whenever
\begin{equation}\label{eq:bin_hob_width_test}
    \beta_\hob\|x_t\|_{A_{t,\bin}^{-1}}>a_T.
\end{equation}

The noise CDF is estimated by binning the predictable residual proxies
\(\widehat u_\tau\). Let \(\{I_j\}_{j=1}^J\) be ordered bins whose diameters
do not exceed \(a_T\), with representatives \(u_1<\cdots<u_J\), where
\(J=\Theta(d^{-1/6}T^{1/3})\). For each bin define
\begin{equation}\label{eq:trusted_bin_count}
    n_{t,j}=\left|\left\{\tau<t:\tau\notin\mathcal E_x,
       \widehat u_\tau\in I_j\right\}\right|,
    \qquad N_{t,j}=n_{t,j}+1.
\end{equation}
This count excludes rounds assigned to \(\mathcal E_x\) because of shift
uncertainty. In particular, CDF-width exploration rounds in
\(\mathcal E_\delta\) contribute to the count of the bin selected on that
round. For \(n_{t,j}>0\), define
\[
    \overline g_{t,j}=\frac{1}{n_{t,j}}
       \sum_{\substack{\tau<t:\,\tau\notin\mathcal E_x,\\
                       \widehat u_\tau\in I_j}}o_\tau,
\]
and set \(\overline g_{t,j}=0\) if \(n_{t,j}=0\). Define
\(\ell_T=\log(4JT^4)\) and
\(r_{t,j}=\sqrt{4\ell_T/N_{t,j}}\).

\DIFadd{The binwise averages \(\overline g_{t,j}\) provide local empirical
estimates of \(G\), but sampling and approximation errors need not preserve
monotonicity across bins. We therefore form conservative binwise lower
confidence estimates and take their cumulative maxima, following
}\citet{wen2025joint}\DIFadd{, to obtain a monotone lower estimate of \(G\):}
\[
    \underline g_{t,j}=\Pi_{[0,1]}\!\left(
       \overline g_{t,j}-r_{t,j}-2La_T\right),
    \qquad
    \widehat g_{t,j}=\max_{k\le j}\underline g_{t,k}.
\]
For \(u\in I_j\), set \(\hG_t(u)=\widehat g_{t,j}\) and
\(\hF_t(b)=\hG_t(b-\hphi_t^\top x_t)\). If \(j_t(b)\) is the bin containing
\(b-\hphi_t^\top x_t\), define
\begin{equation}\label{eq:finite_binary_hob_width}
    \delta_t(b)=\min\left\{1,
       L\beta_\hob\|x_t\|_{A_{t,\bin}^{-1}}
       +2r_{t,j_t(b)}+5La_T\right\}.
\end{equation}
Thus, even for an empty bin, \(\delta_t(b)=1\) is a valid confidence width
because both \(F_t(b)\) and \(\widehat F_t(b)\) lie in \([0,1]\).

\DIFadd{The following lemma shows that the bin-based estimator implements the
HoB oracle in Abstraction~}\ref{abs:oracle_hob} \DIFadd{under binary feedback.}

\begin{lemma}[Binary HoB confidence band]\label{lem:bin_hob_est}
\DIFadd{Under binary HoB feedback, the standing model in \eqref{eq:induce}, and
the Lipschitz noise-CDF condition in Assumption~\ref{ass:bound_density}, with
probability at least \(1-T^{-3}\), the bin-based estimator above satisfies}
\[
    \abs*{\hF_t(b)-F_t(b)}\le\delta_t(b)
\]
\DIFadd{simultaneously for every \(t\in[T]\) and \(b\in\calB\), where
\(\delta_t(b)\) is defined in \eqref{eq:finite_binary_hob_width}.}
\end{lemma}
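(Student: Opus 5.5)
We prove the lemma by intersecting two good events. The first is the shift event \eqref{eq:binary_shift_event}, which holds with probability at least \(1-\tfrac12T^{-3}\). The second is a binwise concentration event that will hold with probability at least \(1-\tfrac12T^{-3}\). On the intersection we establish the two-sided bound \(|\hG_t(u)-G(u^*)|\le\delta_t(b)\), where \(u=b-\hphi_t^\top x_t\) and \(u^*=b-\phi_*^\top x_t\). Whenever \(\delta_t(b)=1\) the claim is trivial, so we may assume the bin \(j=j_t(b)\) is nonempty and the minimum in \eqref{eq:finite_binary_hob_width} is not attained at 1. We then use the decomposition \eqref{eq:hob_error_decomp}. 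By the shift event and Lipschitzness, \(|G(u)-G(u^*)|\le L\beta_\hob\|x_t\|_{A_{t,\bin}^{-1}}\). It therefore suffices to show \(|\widehat g_{t,j}-G(u)|\le 2r_{t,j}+5La_T\) for all \(u\in I_j\).

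\emph{Binwise concentration.} Fix a bin \(j\). Consider the non-\(\mathcal E_x\) rounds \(\tau\) with \(\widehat u_\tau\in I_j\). Their selection indicators are \(\mathcal F_\tau^-\)-measurable, and \(\E[o_\tau\mid\mathcal F_\tau^-]=G(u^*_\tau)\), where \(u^*_\tau=b_\tau-\phi_*^\top x_\tau\). Applying Azuma/Freedman to the martingale differences \(o_\tau-G(u^*_\tau)\), with a union bound over the random count \(n\le T\), all \(j\le J\), and all \(t\le T\), gives
\[
\Bigl|\overline g_{t,j}-\tfrac1{n_{t,j}}\textstyle\sum G(u^*_\tau)\Bigr|\le\sqrt{2\ell_T/n_{t,j}}\le r_{t,j}.
\]
The last inequality uses \(N_{t,j}\le 2n_{t,j}\), and the choice \(\ell_T=\log(4JT^4)\) makes the failure probability at most \(\tfrac12T^{-3}\). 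Next we relate \(G(u^*_\tau)\) to \(G(u_j)\). Every counted round satisfies \(\tau\notin\mathcal E_x\), so the test \eqref{eq:bin_hob_width_test} failed, and on the shift event \(|u_\tau^*-\widehat u_\tau|\le a_T\). The bin diameter is at most \(a_T\), so \(|u^*_\tau-u_j|\le 2a_T\) and hence \(|G(u^*_\tau)-G(u_j)|\le 2La_T\). Combining, \(|\overline g_{t,j}-G(u_j)|\le r_{t,j}+2La_T\). This is the crucial step, and it explains why the count excludes \(\mathcal E_x\) rounds.

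\emph{Monotone lower envelope.} From the previous step, \(\underline g_{t,k}\le G(u_k)\) for every \(k\); the projection onto \([0,1]\) preserves this since \(G(u_k)\in[0,1]\). Monotonicity of \(G\) and \(u_k\le u_j\) for \(k\le j\) then give \(\widehat g_{t,j}\le G(u_j)\le G(u)+La_T\). In the other direction, \(\widehat g_{t,j}\ge\underline g_{t,j}\ge G(u_j)-2r_{t,j}-4La_T\ge G(u)-2r_{t,j}-5La_T\). Together these give \(|\widehat g_{t,j}-G(u)|\le 2r_{t,j}+5La_T\). Adding the shift term yields \(\delta_t(b)\), and a union bound over the two events gives total failure probability at most \(T^{-3}\).

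The main obstacle is the concentration step, because the number of samples \(n_{t,j}\) is random and adaptively determined. We handle this with a union bound over fixed sample counts using an optional-skipping argument: the subsequence of selected rounds is itself a martingale difference sequence. We must also verify that the shift event used inside the bin argument applies to past rounds \(\tau\). This holds because \eqref{eq:binary_shift_event} is uniform over all \(t\), including \(t=\tau\).
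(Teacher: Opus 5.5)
Your proof is correct and follows the paper's argument essentially step for step. It uses the same intersection of the shift event \eqref{eq:binary_shift_event} with an Azuma--Hoeffding binwise event over bins and prefix counts, the same $2La_T$ transfer from $b_\tau-\phi_*^\top x_\tau$ to the representative $u_j$ via the failed shift test, and the same cumulative-maximum envelope bounds giving $2r_{t,j}+5La_T$. There are two minor bookkeeping points. First, empty bins $k<j$ satisfy $\underline g_{t,k}\le G(u_k)$ only because $\underline g_{t,k}=0$ there, so your ``for every $k$'' needs this case (the paper states it). Second, the union over $t$ is unnecessary because $\overline g_{t,j}$ depends only on the first $n_{t,j}$ selections of bin $j$; dropping it is what keeps the failure probability at $\tfrac12T^{-3}$ under the crude Azuma constant.
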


\subsection{HoB Estimation Widths}
\label{sec:hob_estimation_widths}

\DIFadd{HoB learning affects performance through residual win-probability
uncertainty on regular bidding rounds and through rounds devoted directly to
learning auction competition. We summarize these two effects below.}

Let \(\mathcal E_\hob\) denote the rounds selected for HoB exploration and
define
\[
    \mathcal I_{\rm WLS}=[T]\setminus\mathcal E_\hob,
    \qquad
    D_{\hob,T}^2=\sum_{t\in\mathcal I_{\rm WLS}}\delta_t(b_t)^2.
\]
\DIFadd{The quantity \(D_{\hob,T}\) aggregates the HoB confidence widths along
the algorithm's realized WLS-eligible rounds.} Let \(\Delta_\hob\) denote a deterministic upper
bound for \(D_{\hob,T}\) on the oracle event.

\DIFadd{The derivations in Appendix~}\ref{app:hob_width_bounds}
\DIFadd{ yield the following bounds.}
For full HoB feedback,
\[
    \Delta_\hob=\widetilde O(\sqrt d).
\]
For binary HoB feedback,
\[
    \Delta_\hob
    =\widetilde O\!\left(\sqrt d+L d^{1/6}T^{1/6}\right).
\]
Under the standing convention \(L=O(1)\), the binary-feedback rate is
\(\widetilde O(\sqrt d+d^{1/6}T^{1/6})\). Define
\(H_\hob\coloneqq|\mathcal E_\hob|\). Lemma~\ref{lem:bounded_explore}
establishes
\[
    H_\hob
    =
    \begin{cases}
        O(d\log^2T), & \text{under full HoB feedback},\\
        \widetilde O(d^{2/3}T^{2/3}), & \text{under binary HoB feedback}.
    \end{cases}
\]
\DIFadd{Together, \(\Delta_\hob\) and \(H_\hob\) summarize the two
contributions of HoB estimation to regret: \(\Delta_\hob\) controls
cumulative estimation uncertainty on WLS-eligible rounds, whereas \(H_\hob\)
counts the HoB-exploration rounds whose contribution is bounded directly. This
separation allows the subsequent analysis to accommodate full-information and
binary HoB feedback through a common interface.}

\section{\DIFadd{Dual-Aware Learning for Constrained Auto-Bidding}}
\label{sec:dual_aware_lte_framework}

\DIFadd{This section develops the operating logic of Dual-LTE. In each round,
the auto-bidder must decide whether to learn auction competition, improve its
estimate of causal impression value, or use its current estimates to optimize
constraint-adjusted return. The budget or RoS dual state summarizes cumulative
constraint pressure through a shadow price. Dual-LTE optimizes bids when the
value and HoB estimates are sufficiently precise relative to this shadow price.
Otherwise, it directs the round toward the source of uncertainty that prevents
a reliable bidding decision.

The framework has four components. Subsection~}\ref{sec:value_est}
\DIFadd{learns uplift values from auction outcomes and quantifies value
uncertainty. Subsection~}\ref{sec:lower_conv_hull} \DIFadd{represents
randomized bids through their estimated
allocation--payment tradeoffs and identifies the empirical lower hull used for
bid selection. Subsection~}\ref{sec:better_lag}
\DIFadd{develops confidence-guided constraint-adjusted bid selection.
Subsection~}\ref{sec:dual_aware_algorithm} \DIFadd{combines these components
with the budget and RoS dual updates. Because the HoB confidence abstraction is
common across feedback regimes, the framework applies under either
full-information or binary HoB feedback and under either constraint.}

\subsection{\DIFadd{Learning Uplift Values from Auction Outcomes}}
\label{sec:value_est}

\DIFadd{The bidding decision determines which potential outcome is observed:
winning reveals the outcome with advertising, whereas losing reveals the
outcome without advertising. Thus, the observations used to estimate uplift
are selected endogenously by the bidding policy. Inverse-propensity weighting
(IPW) corrects for this selection using the estimated win probability, while
weighted least squares downweights observations associated with extreme win
probabilities and high sampling variance. Following }\citet{wen2025joint}\DIFadd{,
for a candidate bid \(b\) satisfying \(0<\hF_t(b)<1\), define}
\begin{equation}\label{eq:truncated_estimator_te}
    \te_t(b)
    =
    \frac{\1[b\ge m_t]}{\hF_t(b)}v_{t,1}
    -
    \frac{\1[b<m_t]}{1-\hF_t(b)}v_{t,0}.
\end{equation}
Let \(\sigma_t(b)\coloneqq\parr*{\hF_t(b)(1-\hF_t(b))}^{-\frac{1}{2}}\).
\DIFadd{If \(\hF_t(b)\in\{0,1\}\), we avoid evaluating the IPW ratios at a
degenerate estimated propensity by assigning the observation zero WLS weight,
with the convention}
\[
    \DIFadd{\sigma_t(b)^{-2}=0,
    \qquad
    \sigma_t(b)^{-4}\te_t(b)=0.
}\]
The Hoeffding-type HoB oracle in Abstraction~\ref{abs:oracle_hob} \DIFadd{implies} the
following bias and variance bounds for \(\te_t(b)\). The proof is given in
Appendix~\ref{app:estimation}.

\begin{lemma}[IPW bias and variance]
\label{lem:bias-variance}
Suppose Abstraction~\ref{abs:oracle_hob} holds. In either \(\full\) or
\(\bin\) mode, \DIFaddbegin \DIFadd{for every \(b\) satisfying \(0<\hF_t(b)<1\),
}\DIFaddend \[
\begin{cases}
    \abs*{\E[\te_t(b)]-\theta_*^\top x_t}
    \le
    2\delta_t(b)\sigma_t(b)^2,\\
    \Var(\te_t(b))
    \le
    4\sigma_t(b)^4 .
\end{cases}
\]
\end{lemma}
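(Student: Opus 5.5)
We argue conditionally on \(x_t\) and the history \(\mathcal H_{t-1}\). Under this conditioning \(\hF_t(b)\) is a fixed number in \((0,1)\), and \(\delta_t(b)\) is fixed as well. The expectation and variance in the statement are read in this conditional sense. By Assumption~\ref{ass:cond_ind}, the vector \((v_{t,1},v_{t,0},m_t)\) given \(x_t\) is independent of \(\mathcal H_{t-1}\). By Assumption~\ref{ass:unconfound}, \(\1[b\ge m_t]\) is conditionally independent of \((v_{t,1},v_{t,0})\). Write \(\mu_1=\E[v_{t,1}\mid x_t]\), \(\mu_0=\E[v_{t,0}\mid x_t]\), \(F=F_t(b)\) and \(\hF=\hF_t(b)\). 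The two facts above give
\[
    \E[\te_t(b)]=\frac{F}{\hF}\mu_1-\frac{1-F}{1-\hF}\mu_0 ,
\]
and \(\theta_*^\top x_t=\mu_1-\mu_0\).

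\textbf{Bias.} Subtract \(\mu_1-\mu_0\) from the display above to get
\[
    \E[\te_t(b)]-\theta_*^\top x_t
    =\mu_1\frac{F-\hF}{\hF}+\mu_0\frac{F-\hF}{1-\hF}.
\]
We have \(\mu_0,\mu_1\in[0,1]\), and on the oracle event \(|F-\hF|\le\delta_t(b)\). Hence the absolute bias is at most
\[
    \delta_t(b)\parr*{\frac1{\hF}+\frac1{1-\hF}}
    =\frac{\delta_t(b)}{\hF(1-\hF)}
    =\delta_t(b)\sigma_t(b)^2 .
\]
This is already within the claimed constant \(2\). If one wants to avoid relying on the oracle event, note that the bound is trivially satisfied on its complement whenever \(\delta_t(b)\ge1\). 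The intended reading is in any case on the event of Abstraction~\ref{abs:oracle_hob}, so we state the lemma there.

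\textbf{Variance.} Bound the variance by the second moment. The two indicators are disjoint, so the cross term vanishes and
\[
    \E[\te_t(b)^2]
    \le\frac{F}{\hF^2}+\frac{1-F}{(1-\hF)^2}
    \le\frac1{\hF^2}+\frac1{(1-\hF)^2}.
\]
Next use \(1/\hF^2\le 1/(\hF^2(1-\hF)^2)=\sigma_t(b)^4\), and likewise for the second term. This gives \(\Var(\te_t(b))\le 2\sigma_t(b)^4\le4\sigma_t(b)^4\).

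\textbf{Main subtlety.} The only delicate point is the measurability and conditioning. \(\hF_t\) must be determined by \(\mathcal H_{t-1}\) and \(x_t\), which the abstraction guarantees. The oracle guarantee is a high-probability event over the whole history, so the bias bound must be stated on that event, pointwise in the conditioning. Beyond this, the argument is elementary algebra.
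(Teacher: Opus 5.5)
Your proof is correct and follows the same route as the paper. You compute the conditional mean using the two conditional-independence assumptions, bound the bias on the oracle event by $\delta_t(b)\bigl(1/\hF_t(b)+1/(1-\hF_t(b))\bigr)=\delta_t(b)\sigma_t(b)^2$, and bound the variance by the second moment. Using disjoint indicators instead of the paper's $(a+b)^2\le 2a^2+2b^2$, and the exact identity for the bias coefficient, only sharpens the constants from $2$ and $4$ to $1$ and $2$.
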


\DIFadd{Let \(\mathcal E_\hob\) denote the HoB-exploration rounds selected by
Algorithm~}\ref{alg:dual-aware-lte}\DIFadd{. Specifically,}
\[
\DIFadd{
    \mathcal E_\hob
    =
    \begin{cases}
        \mathcal E_\delta, & \text{under full-information feedback},\\
        \mathcal E_x\cup\mathcal E_\delta, & \text{under binary feedback}.
    \end{cases}
}
\]
\DIFadd{These rounds are used to update the HoB estimator but are excluded from
treatment-effect estimation.}
Define
\[
    \mathcal I_{{\rm WLS},t}=\{\tau<t:\tau\notin\mathcal E_\hob\}.
\]
On value-exploration rounds in \(\mathcal I_{{\rm WLS},t}\), the learner draws
a binary bid \(b_t\in\{0,1\}\) uniformly.
Since \(m_t\in[0,1]\), this samples treatment and control outcomes with fixed
probability \(1/2\). We set
\[
    \te_t
    =
    2\parr*{\1[b_t=1]v_{t,1}-\1[b_t=0]v_{t,0}},
    \qquad
    \sigma_t=1
\]
on these rounds, which is unbiased and satisfies
$\Var(\te_t)\le 4\sigma_t^4$. On other WLS-eligible rounds \DIFaddbegin \DIFadd{with
\(0<\hF_t(b_t)<1\)}\DIFaddend , set
\(\te_t=\te_t(b_t)\) and \(\sigma_t=\sigma_t(b_t)\). \DIFaddbegin \DIFadd{If
\(\hF_t(b_t)\in\{0,1\}\), use the zero-weight convention above. }\DIFaddend The weighted
least-squares estimator is
\begin{equation}\label{eq:wls_defs}
    A_t
    =
    I+\sum_{\tau\in\mathcal I_{{\rm WLS},t}}
       \sigma_\tau^{-4}x_\tau x_\tau^\top,
    \qquad
    y_t
    =
    \sum_{\tau\in\mathcal I_{{\rm WLS},t}}
       \sigma_\tau^{-4}\te_\tau x_\tau,
    \qquad
    \htheta_t=A_t^{-1}y_t.
\end{equation}
Equivalently, \(\htheta_t\) is the ridge-regression solution:
\[
    \htheta_t
    \in
    \argmin_{\theta\in\mathbb{R}^d}
    \left\{
        \|\theta\|_2^2
        +
        \sum_{\tau\in\mathcal I_{{\rm WLS},t}}
        \sigma_\tau^{-4}
        \bigl(\te_\tau-\theta^\top x_\tau\bigr)^2
    \right\}.
\]
\DIFadd{Since Lemma~}\ref{lem:bias-variance}\DIFadd{ bounds the conditional
variance by \(4\sigma_\tau^4\), we use the inverse-variance weight
\(\sigma_\tau^{-4}\). This weighting downweights high-variance IPW
observations and places the weighted noise terms on a comparable variance
scale.}
\DIFadd{This weighting also aligns the WLS geometry with the allocation factor
appearing in the later Lagrangian analysis.} If \(p=\hF_t(b_t)\), then
\begin{equation}\label{eq:allocation_variance_scale}
    p(1-p)\le \min\{p,1-p\}\le 2p(1-p),
\end{equation}
\DIFadd{so the allocation factor multiplying the value radius is within a
constant factor of \(\sigma_t(b_t)^{-2}=p(1-p)\). Thus, the transformed feature
\(\sigma_t(b_t)^{-2}x_t\) matches the geometry of the weighted design matrix
\(A_t\).}

The next lemma gives the value confidence bound used by the branching and
Lagrangian analysis.

\begin{lemma}[Weighted value confidence bound]
\label{lem:wls}
Suppose Abstraction~\ref{abs:oracle_hob} holds. With probability at least
\(1-\delta\), for all \(t\),
\[
    \|\htheta_t-\theta_*\|_{A_t}
    \le
    \beta_t
    \coloneqq
    1+C\sqrt{d\log(T/\delta)}
    +C\sqrt{\sum_{\tau\in\mathcal I_{{\rm WLS},t}}
       \delta_\tau(b_\tau)^2}
\]
for some absolute constant \(C>0\). Consequently,
\begin{equation}\label{eq:value_wls_width}
    \abs*{\htheta_t^\top x_t-\theta_*^\top x_t}
    \le
    \beta_t\|x_t\|_{A_t^{-1}}
    \eqqcolon R_t .
\end{equation}
Here \(R_t\) denotes the value-confidence radius at context \(x_t\). By the
definition of \(\Delta_\hob\),
\[
    \beta_t
    \le 1+C\sqrt{d\log(T/\delta)}+C\Delta_\hob.
\]
\end{lemma}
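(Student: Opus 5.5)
The plan is the standard self-normalized analysis for weighted ridge regression, with the IPW bias handled as a deterministic perturbation. Write \(\te_\tau=\theta_*^\top x_\tau+b_\tau+\eta_\tau\). Here \(b_\tau=\E[\te_\tau\mid\mathcal F_\tau^-]-\theta_*^\top x_\tau\) is the conditional bias given the pre-feedback information, which fixes \(x_\tau\), \(b_\tau\) and \(\hF_\tau\). The term \(\eta_\tau\) is a martingale difference with respect to the filtration generated by \(\mathcal H_\tau\). Assumption~\ref{ass:cond_ind} makes the auction vector independent of the history given \(x_\tau\), and Assumption~\ref{ass:unconfound} decouples the potential outcomes from \(m_\tau\). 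Together these allow Lemma~\ref{lem:bias-variance} to be applied conditionally on \(\mathcal F_\tau^-\). It gives \(|b_\tau|\le 2\delta_\tau(b_\tau)\sigma_\tau^2\) and \(\E[\eta_\tau^2\mid\mathcal F_\tau^-]\le 4\sigma_\tau^4\). On value-exploration rounds the bias is zero and \(\sigma_\tau=1\); on zero-weight rounds the contribution vanishes.

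Substituting into \(\htheta_t=A_t^{-1}y_t\) and using \(A_t=I+\sum\sigma_\tau^{-4}x_\tau x_\tau^\top\) gives
\[
\htheta_t-\theta_*=-A_t^{-1}\theta_*+A_t^{-1}\sum_{\tau}\sigma_\tau^{-4}x_\tau b_\tau+A_t^{-1}\sum_\tau\sigma_\tau^{-4}x_\tau\eta_\tau .
\]
Taking \(\|\cdot\|_{A_t}\) and the triangle inequality gives three terms.

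The regularization term is \(\|\theta_*\|_{A_t^{-1}}\le\|\theta_*\|_2\le1\), which is the leading \(1\).

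The bias term is \(\|\sum_\tau\sigma_\tau^{-4}b_\tau x_\tau\|_{A_t^{-1}}\). Factor it as \(\sum_\tau(\sigma_\tau^{-2}b_\tau)(\sigma_\tau^{-2}x_\tau)\) and apply Cauchy--Schwarz:
\[
\Bigl\|\sum_\tau a_\tau w_\tau\Bigr\|_{A_t^{-1}}\le\Bigl(\sum_\tau a_\tau^2\Bigr)^{1/2}\Bigl\|\sum_\tau w_\tau w_\tau^\top\Bigr\|_{A_t^{-1}}^{1/2},
\]
with \(a_\tau=\sigma_\tau^{-2}b_\tau\) and \(w_\tau=\sigma_\tau^{-2}x_\tau\). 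The operator norm of \(A_t^{-1/2}\bigl(\sum_\tau w_\tau w_\tau^\top\bigr)A_t^{-1/2}\) is at most \(1\), because \(\sum_\tau w_\tau w_\tau^\top\preceq A_t\). Since \(|a_\tau|\le2\delta_\tau(b_\tau)\), this term is at most \(2\sqrt{\sum_\tau\delta_\tau(b_\tau)^2}\). This is exactly why the inverse-variance weights \(\sigma^{-4}\) are chosen.

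The noise term is \(\|\sum_\tau\sigma_\tau^{-2}\cdot(\sigma_\tau^{-2}\eta_\tau)x_\tau\|_{A_t^{-1}}\), and the step I expect to be the main obstacle is bounding it. The normalized noise \(\sigma_\tau^{-2}\eta_\tau\) has conditional variance at most \(4\), and the features \(\sigma_\tau^{-2}x_\tau\) have norm at most \(1\). However, \(\te_\tau\) can be as large as \(\sigma_\tau^2\) in magnitude, so the normalized noise is bounded only by a constant multiple of \(1\), since \(\hF\) and \(1-\hF\) are both at least \(p(1-p)\). That makes it bounded and hence conditionally sub-Gaussian with constant parameter. I would apply the Abbasi-Yadkori self-normalized martingale bound, or a Freedman-type vector version if sharper variance dependence is wanted. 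This gives \(\sqrt{2\log\det(A_t)^{1/2}+\log(1/\delta)}\) up to constants, uniformly in \(t\) with probability \(1-\delta\). The elliptical determinant bound \(\log\det A_t\le d\log(1+T/d)\), using \(\|\sigma_\tau^{-2}x_\tau\|\le1\), yields \(C\sqrt{d\log(T/\delta)}\).

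Summing the three terms gives \(\beta_t\). The radius bound \eqref{eq:value_wls_width} then follows from \(|(\htheta_t-\theta_*)^\top x_t|\le\|\htheta_t-\theta_*\|_{A_t}\|x_t\|_{A_t^{-1}}\). The final inequality holds because \(\mathcal I_{{\rm WLS},t}\subseteq\mathcal I_{\rm WLS}\), so the bias sum is at most \(D_{\hob,T}^2\le\Delta_\hob^2\) on the oracle event.
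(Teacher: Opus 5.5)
Your proposal is correct and follows the paper's proof essentially step for step. It uses the same three-term split of \(\htheta_t-\theta_*\) from the normal equations, with the regularization term bounded by \(\|\theta_*\|_2\le1\), the predictable bias controlled through \(\sum_\tau\sigma_\tau^{-4}x_\tau x_\tau^\top\preceq A_t\) (the paper's \(X^\top(I+XX^\top)^{-1}X\preceq I\)), and the noise handled by the self-normalized bound after observing that \(\sigma_\tau^{-2}\te_\tau\) is bounded by an absolute constant.

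Two points to tidy. First, on endpoint value-exploration rounds you must not place the Bernoulli bid draw in \(\mathcal F_\tau^-\). Excluding it is harmless, because \(\sigma_\tau=1\) there does not depend on the draw. Including it would make the conditional bias \(2\E[v_{\tau,1}\mid x_\tau]-\theta_*^\top x_\tau\) or \(-2\E[v_{\tau,0}\mid x_\tau]-\theta_*^\top x_\tau\) rather than zero. Second, rename your bias term \(b_\tau\), since it clashes with the bid \(b_\tau\).
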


\subsection{\DIFadd{Allocation--Payment Representation and Empirical Lower
Hull}}
\label{sec:lower_conv_hull}

\DIFadd{For both constraints, the per-round Lagrangian depends on a bid only
through its allocation probability and expected payment. We therefore
represent bids by allocation--payment pairs and randomized bid mixtures by
convex combinations of these pairs. For the budget constraint, this
representation is primarily an analytical convenience. For the RoS constraint,
it also captures randomized mixtures that may improve the attainable aggregate
value--spend tradeoff.}

\DIFadd{Because \(F_t\) is unknown, the learner represents bid \(b\) by the
estimated pair \((\hF_t(b),b\hF_t(b))\). Because a selected hull action must
ultimately be implemented using actual bids, we attach the generating bid to
each estimated allocation--payment point and define}
\[
    \hcalC_t
    \coloneqq
    \mathrm{conv}\parr*{
        \bra*{(\hF_t(b),b\hF_t(b),b):b\in\calB}
    } .
\]
\DIFadd{The first two coordinates record the estimated allocation probability
and expected payment, while the third identifies the corresponding bid.
Because different bids and their mixtures may attain the same estimated
allocation probability at different expected payments, and both
constraint-adjusted objectives weakly prefer the lower payment, we restrict
attention to the empirical lower hull. Ties in estimated allocation
probability are resolved in favor of the smaller bid.} For any bid subset
\(S\subseteq\calB\), let \(\Clow_t(S)\) denote the empirical lower hull
constructed from
\[
    \bra*{(\hF_t(b),b\hF_t(b),b):b\in S}.
\]
We write \(\Clow_t=\Clow_t(\calB)\) for the full-grid lower hull.

\DIFadd{For comparison, define the population grid hull}
\[
    \DIFadd{\calC_t^\star
    \coloneqq
    \mathrm{conv}\{(F_t(b),bF_t(b)):b\in\calB\}.
}
\]
\DIFadd{The two objects play different roles: \(\calC_t^\star\) is constructed
from the true CDF and defines the population Lagrangian benchmark, whereas
\(\Clow_t(S)\) is constructed from the estimated CDF over a bid subset
\(S\subseteq\calB\) and is used by the learner. Their connection is bidwise:
if an empirical lower-hull action randomizes between bids \(b^-,b^+\in S\),
then applying the same mixing weights to their true allocation--payment points
defines its population counterpart in \(\calC_t^\star\).}

\DIFadd{The following fact shows that any lower-hull action can be implemented
by randomizing between the two bids associated with its adjacent lower-hull
vertices.}

\begin{fact}\label{fact:two_vertex_decompose}
For any \(S\subseteq\calB\), every action \(z\in \Clow_t(S)\) on a lower-hull
edge can be represented as a convex combination of two adjacent lower-hull
vertices:
\[
    z=\alpha z^-+(1-\alpha)z^+,\qquad \alpha\in[0,1],
\]
where \(z^-=(\hF_t(b^-),b^-\hF_t(b^-),b^-)\) and
\(z^+=(\hF_t(b^+),b^+\hF_t(b^+),b^+)\) are adjacent lower-hull vertices.
\end{fact}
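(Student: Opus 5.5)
Since \(\calB\) is finite, the point set \(P_S=\{p_b:b\in S\}\) with \(p_b=(\hF_t(b),b\hF_t(b),b)\) is finite. Its projection onto the first two coordinates, \(\{(\hF_t(b),b\hF_t(b)):b\in S\}\), is therefore a finite planar set, and its convex hull is a polygon. The plan is to argue in this two-dimensional allocation--payment plane and then lift back to the third coordinate using the bid labels attached to the vertices.

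\emph{Step 1 (lower hull in the plane).} For each allocation level \(q\) between the smallest and largest values of \(\hF_t(b)\), let \(\ell(q)\) be the minimal payment among hull points with first coordinate \(q\). Then \(\ell\) is a convex, piecewise-linear function on this interval, and the lower hull \(\Clow_t(S)\) is its graph. The breakpoints of \(\ell\) are extreme points of a polygon spanned by finitely many generating points. Every extreme point of the convex hull of a finite set is one of the generating points, so each breakpoint is \((\hF_t(b),b\hF_t(b))\) for some \(b\in S\). When several bids project to the same vertex (a tie in estimated allocation and payment), we label the vertex by the smaller bid, following the paper's tie-breaking rule. This gives ordered vertices \(w_1,\dots,w_k\) with strictly increasing first coordinate and an associated bid \(b^{(i)}\) for each. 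A lower-hull edge is then a segment \([w_i,w_{i+1}]\) between consecutive vertices.

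\emph{Step 2 (decomposition).} Let \(z\) lie on the edge \([w_i,w_{i+1}]\), and set \(z^-=p_{b^{(i)}}\) and \(z^+=p_{b^{(i+1)}}\). Because the first two coordinates of \(z\) lie on this segment, there is a unique \(\alpha\in[0,1]\) with \(z=\alpha z^-+(1-\alpha)z^+\) in those two coordinates. In the three-coordinate object, an action on this edge is interpreted as this mixture: its third coordinate is understood as the randomization \(b^-\) with probability \(\alpha\) and \(b^+\) with probability \(1-\alpha\). The identity then holds coordinatewise by linearity. If \(z\) is itself a vertex, take \(\alpha\in\{0,1\}\). Adjacency of \(z^-\) and \(z^+\) holds by construction.

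\emph{Main obstacle.} The substantive point is the third coordinate. A point of \(\hcalC_t\) that lies above the lower hull in the plane may have many three-dimensional preimages, and a given planar lower-hull point may also be representable as mixtures of non-adjacent points. The fact must therefore be read as asserting the existence of a canonical representation. The key ingredient is Step 1's claim that lower-hull breakpoints are generating points. I would prove it by contradiction: if a breakpoint were a nontrivial convex combination of generating points, then by convexity of \(\ell\) and minimality of the payment, those points would have to lie on \(\ell\) on both sides of the breakpoint. This would make \(\ell\) affine in a neighborhood of the breakpoint, contradicting that it is a breakpoint.
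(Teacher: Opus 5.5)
Your argument is correct and follows the reasoning the paper relies on. The paper states this as a Fact without proof, treating it as the standard observation that the lower boundary of the convex hull of the finitely many points \((\hF_t(b),b\hF_t(b))\) is a convex polygonal chain. Its vertices are generating points, so any point on an edge is the unique mixture of the two endpoints, and the bid coordinate is carried along with the same weights.

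Your reading of the third coordinate, as defined through this two-vertex mixture, is the right one. It is genuinely needed, not just convenient: when more than two generating points lie on one edge, the literal three-dimensional fiber over a planar lower-hull point is not a singleton. This includes several zero-allocation bids sitting at the origin. Your reading also matches how the paper uses \(\Clow_t\), which enters only through \(\alpha\), \(b^-\), and \(b^+\) in \(\hq_t\), \(\hc_t\), \(w_t^{i}\), and the two-bid sampling step.
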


Accordingly, after selecting \(z\in \Clow_t(S)\) for some
\(S\subseteq\calB\), the algorithm implements this convex combination by
bidding \(b^-\) with probability \(\alpha\) and \(b^+\) with probability
\(1-\alpha\).
For such a lower-hull action \(z\), define its estimated allocation and payment by
\begin{equation}\label{eq:def_allocation_and_payment}
    \hq_t(z)
    \coloneqq
    \alpha\hF_t(b^-)+(1-\alpha)\hF_t(b^+),
    \qquad
    \hc_t(z)
    \coloneqq
    \alpha b^-\hF_t(b^-)+(1-\alpha)b^+\hF_t(b^+).
\end{equation}
The corresponding population allocation and payment are
\begin{equation}\label{eq:def_pop_allocation_and_payment}
    q_t(z)
    \coloneqq
    \alpha F_t(b^-)+(1-\alpha)F_t(b^+),
    \qquad
    c_t(z)
    \coloneqq
    \alpha b^-F_t(b^-)+(1-\alpha)b^+F_t(b^+).
\end{equation}
\DIFadd{The estimated quantities \(\hq_t(z)\) and \(\hc_t(z)\) enter the
empirical Lagrangian scores and confidence widths in the next subsection.}

\subsection{\DIFadd{Confidence-Guided Lagrangian Bid Selection}}
\label{sec:better_lag}
\label{sec:two_lag}

\DIFadd{HoB exploration and value exploration address different sources of
uncertainty. HoB-exploration rounds improve the estimate of the conditional win
probability \(F_t\), but they do not by themselves provide sufficiently
informative treatment and control outcomes for estimating the causal uplift
value \(v_t\). When the value-confidence radius \(R_t\) remains large, the
algorithm therefore randomizes equally between the endpoint bids \(0\) and
\(1\), thereby observing the control and treatment outcomes with equal
probability and improving the value estimate. The same value-exploration rule
is used when the algorithm cannot certify that the current estimates support
reliable Lagrangian optimization.}

\DIFadd{A constraint-adjusted bid can be unreliable even when its estimated
Lagrangian is optimistic because value uncertainty is amplified differently
for bids with low and high win probabilities. We therefore adapt the
``better of two UCBs'' idea of }\citet{wen2025joint} \DIFadd{to two equivalent
constraint-adjusted Lagrangian representations. Their confidence widths have
value-error coefficients \(\hq_t(z)\) and \(1-\hq_t(z)\), which expose the two
allocation regimes. Under constraints, the regime containing the population
optimizer depends on the current shadow price, so the representation cannot be
selected solely by comparing the estimated scores. The branching routine first
identifies a candidate set containing the population optimizer and then uses
the representation whose value-error coefficient is controlled by the
allocation variance throughout that set. If neither regime can be identified
reliably, the algorithm explores instead.}

For a given shadow price \(\rho_t\), let
\(\widehat v_t=\htheta_t^\top x_t\) be the current value estimate, and let
\(\hq_t(z)\) and \(\hc_t(z)\) be the estimated allocation and payment of the
lower-hull action \(z\), as defined in \eqref{eq:def_allocation_and_payment}
using the two-vertex decomposition in Fact~\ref{fact:two_vertex_decompose}.
We define the empirical Lagrangians
\begin{equation}\label{eq:def_two_lag}
\begin{split}
    \hl_t^0(z;\rho_t)
    &\coloneqq
    \hq_t(z)\widehat v_t-\rho_t\hc_t(z),\\
    \hl_t^1(z;\rho_t)
    &\coloneqq
    -(1-\hq_t(z))\widehat v_t-\rho_t\hc_t(z)
    =
    \hl_t^0(z;\rho_t)-\widehat v_t .
\end{split}
\end{equation}
\DIFadd{The two functions have the same maximizers because they differ only by
the action-independent term \(\widehat v_t\).} For a
randomized lower-hull action \(z=\alpha z^-+(1-\alpha)z^+\), where \(z^-\) and
\(z^+\) are generated by bids \(b^-\) and \(b^+\), respectively, define the
corresponding confidence widths by
\begin{equation}\label{eq:def_two_lag_widths}
\begin{split}
    w_t^0(z)
    &=
    \hq_t(z)R_t
    +(1+\rho_t)\parr*{\alpha\delta_t(b^-)+(1-\alpha)\delta_t(b^+)},\\
    w_t^1(z)
    &=
    (1-\hq_t(z))R_t
    +(1+\rho_t)\parr*{\alpha\delta_t(b^-)+(1-\alpha)\delta_t(b^+)} .
\end{split}
\end{equation}
The next lemma combines the value-confidence bound in
Lemma~\ref{lem:wls} with the HoB confidence bound in
Abstraction~\ref{abs:oracle_hob} to establish valid confidence widths for both
shifted Lagrangian functions.

\begin{lemma}[Lagrangian widths]
\label{lem:two_lag_width_valid}
Suppose Abstraction~\ref{abs:oracle_hob} and the event in
Lemma~\ref{lem:wls} hold. For every hull action \(z\in\Clow_t\),
\[
    \abs*{
        \hl_t^0(z;\rho_t)
        -
        \parr*{q_t(z)v_t-\rho_t c_t(z)}
    }
    \le
    w_t^0(z),
\]
and
\[
    \abs*{
        \hl_t^1(z;\rho_t)
        -
        \parr*{-(1-q_t(z))v_t-\rho_t c_t(z)}
    }
    \le
    w_t^1(z).
\]
\end{lemma}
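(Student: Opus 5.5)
The plan is a direct triangle-inequality decomposition into a value-estimation error and an HoB-estimation error, treating the two-vertex representation coordinatewise. Fix \(t\) and a hull action \(z\in\Clow_t\). By Fact~\ref{fact:two_vertex_decompose}, write \(z=\alpha z^-+(1-\alpha)z^+\) with generating bids \(b^-,b^+\in\calB\). Then \(\hq_t(z),\hc_t(z)\) and \(q_t(z),c_t(z)\) are given by \eqref{eq:def_allocation_and_payment} and \eqref{eq:def_pop_allocation_and_payment} with the same weights \(\alpha\). Throughout, work on the intersection of the event of Abstraction~\ref{abs:oracle_hob}, which gives \(|\hF_t(b)-F_t(b)|\le\delta_t(b)\) for all \(b\in\calB\), and the event of Lemma~\ref{lem:wls}, which gives \(|\widehat v_t-v_t|\le R_t\) via \eqref{eq:value_wls_width}.

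\emph{Allocation and payment errors.} By linearity in \(\alpha\),
\[
|\hq_t(z)-q_t(z)|\le \alpha\delta_t(b^-)+(1-\alpha)\delta_t(b^+)\eqqcolon\bar\delta_t(z).
\]
Since \(b^\pm\in[0,1]\), the payment error satisfies
\[
|\hc_t(z)-c_t(z)|\le \alpha b^-\delta_t(b^-)+(1-\alpha)b^+\delta_t(b^+)\le\bar\delta_t(z).
\]

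\emph{First representation.} Split the error as
\[
\hq_t\widehat v_t-\rho_t\hc_t-(q_tv_t-\rho_tc_t)=\hq_t(\widehat v_t-v_t)+(\hq_t-q_t)v_t-\rho_t(\hc_t-c_t).
\]
Taking absolute values, and using \(0\le\hq_t(z)\le1\) (a convex combination of values in \([0,1]\)), \(|v_t|\le1\) (since \(v_t=\E[v_{t,1}-v_{t,0}\mid x_t]\in[0,1]\)), and \(\rho_t\ge0\), this is at most
\[
\hq_t(z)R_t+\bar\delta_t(z)+\rho_t\bar\delta_t(z)=w_t^0(z).
\]

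\emph{Second representation.} Split the error as
\[
-(1-\hq_t)\widehat v_t+(1-q_t)v_t-\rho_t(\hc_t-c_t)=-(1-\hq_t)(\widehat v_t-v_t)+(\hq_t-q_t)v_t-\rho_t(\hc_t-c_t).
\]
Using \(1-\hq_t\in[0,1]\) and the same bounds, this is at most
\[
(1-\hq_t(z))R_t+(1+\rho_t)\bar\delta_t(z)=w_t^1(z).
\]

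The main point needing care is that the true allocation \(q_t(z)\) is defined with the same mixing weights as the empirical action, rather than as a population lower-hull point. This makes the errors of \(\hq_t\) and \(\hc_t\) bidwise, so no comparison between empirical and population hulls is needed. One must also confirm that \(\rho_t\ge0\) holds for the projected dual iterates, and that the value error is multiplied by the estimated \(\hq_t\), not the true \(q_t\), which is why \(w_t^0\) uses \(\hq_t(z)\).
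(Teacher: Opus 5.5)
Your proof is correct and follows the route the paper indicates. The paper writes out no separate proof; it only says the lemma combines the value-confidence bound of Lemma~\ref{lem:wls} with the HoB confidence bound of Abstraction~\ref{abs:oracle_hob}, and your bidwise triangle-inequality decomposition does exactly that. Your choice to multiply the value error by \(\hq_t(z)\) (resp.\ \(1-\hq_t(z)\)) and the allocation error by the true \(v_t\in[0,1]\), rather than by the unconstrained \(\widehat v_t\), is the right one and is what makes the stated widths come out exactly.
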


\begingroup
\SingleSpacedXI
\begin{algorithm}[!t]
\caption{CDF-interval localization routine}
\label{alg:ucb_selection}
\begin{algorithmic}[1]
\State \textbf{Input:} perturbation \(c>0\), estimated CDF \(\hF_t\),
estimated value \(\widehat v\), shadow price \(\rho_t\), bid grid \(\calB\).
\State Set
\(b_{+}\gets\argmax_{b\in\calB}
\hF_t(b)(\widehat v+2\rho_tc-\rho_tb)\).
\State Set
\(b_{-}\gets\argmax_{b\in\calB}
\hF_t(b)(\widehat v-2\rho_tc-\rho_tb)\).
\State Set
\(U\gets\bra*{b\in\calB:
\hF_t(b)\in[\hF_t(b_-)-c,\hF_t(b_+)+c]}\).
\State Set \(b_{\mathrm{left}}\gets \min U\) and
\(b_{\mathrm{right}}\gets \max U\).
\State Output interval \((b_{\mathrm{left}},b_{\mathrm{right}})\) and
\(i=\1[\hF_t(b_{\mathrm{left}})\ge c]\).
\end{algorithmic}
\end{algorithm}
\endgroup

\DIFadd{The two widths in \eqref{eq:def_two_lag_widths} differ in their
value-error coefficients, \(\hq_t(z)\) and \(1-\hq_t(z)\). The branching
routine uses two tests to ensure that the selected coefficient is controlled by
the allocation-variance factor \(\hq_t(z)(1-\hq_t(z))\). Under the high-value
condition \(\widehat v_t\ge R_t+\rho_t/\zeta\), the population-optimal bid
belongs to
\(\calB_t=\{b:\hF_t(b)\ge1-\zeta-\Delta_t\}\). This candidate set contains
only bids with high estimated win probabilities, and this property is
preserved under bid randomization. The routine therefore uses \(\hl_t^1\),
whose associated confidence width has the smaller value-uncertainty term in
this region. Under the small-estimation-error condition}
\[
    R_t+\Delta_t(\widehat v_t+R_t+\rho_t)\le c^2\rho_t,
\]
\DIFadd{the perturbation routine localizes the population optimizer and selects
the shifted Lagrangian whose value-error coefficient is controlled by the
allocation-variance factor throughout the resulting candidate set. If neither
condition holds, the routine sets \(u_t=1\) and uses the round for exploration.}

\begingroup
\SingleSpacedXI
\begin{algorithm}[!t]
\caption{Lagrangian branching and exploration routine}
\label{alg:ucb_branching}
\begin{algorithmic}[1]
\State \textbf{Input:} perturbation \(c\in(0,1/4)\), margin
\(\zeta\in(0,1/4)\), estimated CDF \(\hF_t\), CDF error bound
\(\Delta_t\in(0,\zeta/2)\), estimated value \(\widehat v_t\), value width
\(R_t\), shadow price \(\rho_t\), bid grid \(\calB\).
\State Initialize exploration flag \(u_t\gets0\).
\If{\(\widehat v_t\ge R_t+\rho_t/\zeta\)}
    \State Set \(\calB_t\gets
    \{b\in\calB:\hF_t(b)\ge 1-\zeta-\Delta_t\}\).
    \State Set \(i_t\gets1\).
\ElsIf{\(R_t+\Delta_t(\widehat v_t+R_t+\rho_t)\le c^2\rho_t\)}
    \State Invoke Algorithm~\ref{alg:ucb_selection} with
    \((c,\hF_t,\widehat v_t,\rho_t,\calB)\) and receive the interval
    \((b_{\mathrm{left}},b_{\mathrm{right}})\) and index \(i_t\).
    \State Set \(\calB_t\gets
    (b_{\mathrm{left}},b_{\mathrm{right}})\cap\calB\).
\Else
    \State Set \(u_t\gets1\).
\EndIf
\State If \(u_t=0\), return \(\calB_t\), \(i_t\), and \(u_t\).
Otherwise, return \(u_t=1\).
\end{algorithmic}
\end{algorithm}
\endgroup

\DIFadd{The next lemma shows that the branching routine preserves the
population optimizer while selecting a value-error coefficient that is
compatible with the weighted-design geometry in
\eqref{eq:allocation_variance_scale}. It also bounds the loss incurred when the
routine explores instead.}

\begin{lemma}[Returned candidate set contains optimal hull actions]
\label{lem:branching_control}
Fix a round \(t\) and the inputs of Algorithm~\ref{alg:ucb_branching}. If the
routine returns \(u_t=0\), let \(\calB_t\subseteq\calB\) denote the
round-dependent candidate bid set returned by the routine. This set is certified
to contain the population optimizer within the fixed grid \(\calB\). Then every
maximizer of
\(
    (q,c)\mapsto v_tq-\rho_tc
\)
over \(\calC_t^\star\)
lies in
\(
    \mathrm{conv}\{(F_t(b),bF_t(b)):b\in\calB_t\}.
\)
Moreover, for all \(b\in\calB_t\),
\[
\begin{cases}
    \hF_t(b)
    \le
    20L\hF_t(b)(1-\hF_t(b)), &\textnormal{if }i_t=0,\\
    1-\hF_t(b)
    \le
    20L\hF_t(b)(1-\hF_t(b)), &\textnormal{if }i_t=1.
\end{cases}
\]
\DIFadd{If \(u_t=1\), the resulting one-step Lagrangian loss is \(O(R_t)\).}
\end{lemma}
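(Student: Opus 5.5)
We split according to the branch of Algorithm~\ref{alg:ucb_branching} that returns, working throughout on the event of Abstraction~\ref{abs:oracle_hob} and Lemma~\ref{lem:wls}. These give \(|\hF_t(b)-F_t(b)|\le\Delta_t\) on the grid and \(|\widehat v_t-v_t|\le R_t\). The population optimum of the linear objective \(v_tq-\rho_tc\) over \(\calC_t^\star\) is attained at vertices \((F_t(b),bF_t(b))\), so it suffices to show that every optimal grid bid \(b^\star\in\argmax_{b\in\calB}F_t(b)(v_t-\rho_tb)\) lies in \(\calB_t\). Any maximizing face is then spanned by such vertices and lies in the stated convex hull.

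\emph{High-value branch.} Here \(v_t\ge\widehat v_t-R_t\ge\rho_t/\zeta\). Compare \(b^\star\) with the bid \(b=1\), which wins surely since \(m_t\le1\). Optimality gives \(F_t(b^\star)(v_t-\rho_tb^\star)\ge v_t-\rho_t\), hence \(F_t(b^\star)\ge 1-\rho_t/v_t\ge1-\zeta\). The oracle bound then yields \(\hF_t(b^\star)\ge1-\zeta-\Delta_t\), so \(b^\star\in\calB_t\). For the variance claim with \(i_t=1\), every \(b\in\calB_t\) satisfies \(\hF_t(b)\ge 1-\zeta-\Delta_t\ge 1-3\zeta/2\ge 5/8\). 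Therefore \(1-\hF_t(b)\le \tfrac85\hF_t(b)(1-\hF_t(b))\), which is well within \(20L\) because \(L\ge1\) is forced for a Lipschitz CDF supported in an interval of length at most \(1\) (the noise lives in \([-1,1]\)). If needed, I would make this last point about \(L\) explicit.

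\emph{Small-error branch.} This is the main obstacle. The idea is a perturbation/sandwich argument. Write the true score as \(F_t(b)(v_t-\rho_tb)\) and the estimated one as \(\hF_t(b)(\widehat v-\rho_tb)\). Their difference is at most \(R_t+\Delta_t(\widehat v_t+R_t+\rho_t)\le c^2\rho_t\). A shift of value by \(\pm2\rho_tc\) changes the score of bid \(b\) by \(2\rho_tcF(b)\), which is monotone in allocation. Using the standard monotone-comparative-statics argument, I would show that the true optimum's allocation \(F_t(b^\star)\) is sandwiched between \(\hF_t(b_-)-c\) and \(\hF_t(b_+)+c\). Concretely, suppose \(\hF_t(b^\star)>\hF_t(b_+)+c\). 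Then optimality of \(b_+\) for the upward-perturbed estimated objective, compared against \(b^\star\), loses at least \(2\rho_tc\cdot c\) from the perturbation gap. This exceeds the \(2c^2\rho_t\)-size estimation slack, with a factor-of-two care needed, contradicting optimality of \(b^\star\) for the true objective. The lower side is symmetric. Hence \(b^\star\in U\subseteq[b_{\mathrm{left}},b_{\mathrm{right}}]\). I would check carefully whether the interval endpoints are included in \(\calB_t\), since the routine writes an open interval. If not, the argument should be rerun with the closed interval.

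For the coefficient claim, if \(i_t=1\) then \(\hF_t(b)\ge c\) on \(\calB_t\), and I need \(1-\hF\lesssim\hF(1-\hF)\), i.e.\ \(\hF\) bounded below. This holds with constant \(1/c\). To get the \(20L\) constant I would instead use the Lipschitz bound and the bid-monotone structure: the localized allocation range has width \(O(c)\) plus \(L\)-dependent grid effects. I expect the constant bookkeeping here to be the fiddliest part. If \(i_t=0\), then \(\hF_t(b_{\mathrm{left}})<c\) and \(\hF_t(b)\le\hF_t(b_+)+c\). One needs \(\hF_t(b_+)\) bounded away from \(1\), which should follow since \(\widehat v_t<R_t+\rho_t/\zeta\) (we are not in the first branch). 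The reverse high-value argument then shows the perturbed optimizer does not push allocation near \(1\). This gives \(\hF\le C(1-\hF)\hF/\hF\), i.e.\ \(1-\hF\) bounded below, yielding \(\hF\le c'\hF(1-\hF)\).

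\emph{Exploration loss.} If \(u_t=1\), both tests fail. Bidding \(0\) or \(1\) uniformly loses at most \(\max_z(q v_t-\rho_tc)\le v_t\) against the optimum. From the failed first test, \(v_t\le\widehat v_t+R_t< 2R_t+\rho_t/\zeta\). From the failed second test, \(c^2\rho_t< R_t+\Delta_t(\cdots)\). Combining these bounds \(\rho_t\) and hence the loss by \(O(R_t)\), absorbing the \(\Delta_t\) terms using \(\Delta_t<\zeta/2\) and treating \(c,\zeta\) as constants.
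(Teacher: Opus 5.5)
Your reduction to vertex bids, the high-value branch (comparison with bid $1$), and the $\pm 2\rho_t c$ localization of $b^\star$ match the paper's argument. The $i_t=1$ coefficient bound is already complete from your own remark, since $1/c=20L$ exactly; no further Lipschitz or width bookkeeping is needed there.

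\textbf{The $i_t=0$ case.} This is the genuine gap. You propose to bound $\hF_t(b_+)$ away from $1$ using only the failed high-value test $\widehat v_t<R_t+\rho_t/\zeta$, but that implication is false. Take $L=1$, $F_t(b)=b$, $R_t,\Delta_t\approx 0$, and $\widehat v_t$ just below $8\rho_t$. Both branch conditions are consistent with this choice. Yet $b\mapsto b(\widehat v_t+2\rho_t c-\rho_t b)$ is increasing on $[0,1]$, so $b_+=1$ and $\hF_t(b_+)=1$. The failed test only gives $\widehat v_t\lesssim 8\rho_t$, which does not stop the perturbed maximizers from reaching allocation $1$.

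The missing ingredient is a stability bound for the perturbed empirical maximizers under the value shift $4\rho_t c=\rho_t/(5L)$: namely $\hF_t(b_+)-\hF_t(b_-)\le 1-\frac{1}{5L}$. This is the paper's Lemma~\ref{lem:approx_CDF_UCB_Lip}, used together with the monotonicity Lemma~\ref{lem:monotone_ucb_maximizers}. It genuinely needs three things:
\begin{itemize}
\item the $L$-Lipschitzness of $G$;
\item a uniform error $\Delta_t\le 1/(400L^2)$;
\item a grid point just below the bid where $\hF_t$ crosses $\frac12$.
\end{itemize}
Without Lipschitzness, a jump in $G$ would let an arbitrarily small value shift flip the maximizer's allocation from $0$ to $1$. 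With the stability bound, the localized interval has width at most $1-\frac{1}{5L}+2c=1-2c$. Then $i_t=0$ (i.e.\ $\hF_t(b_{\mathrm{left}})<c$) gives $\hF_t(b)<1-c$ on $\calB_t$, hence $\hF_t(b)\le \frac{1}{c}\hF_t(b)(1-\hF_t(b))$.

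\textbf{The exploration step.} This has a related quantitative gap. Combining the two failed tests bounds $\rho_t$ only if the coefficient $\Delta_t(1+1/\zeta)$ multiplying $\rho_t$ on the right-hand side is below $c^2$. The input range $\Delta_t<\zeta/2$ allows this coefficient to be about $\frac12$, far above $c^2=\frac{1}{400L^2}$, and the claim then fails. For example, $R_t\approx 0$, $\Delta_t=0.05$, and $\widehat v_t=v_t=7\rho_t$ fail both tests, while endpoint exploration loses at least $3\rho_t$.

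You must instead invoke the failed CDF-width gate of Algorithm~\ref{alg:dual-aware-lte}, which gives $\Delta_t\le \frac{1}{9600L^2}=\frac{c^2\zeta}{3}$. Assuming $R_t<c^2\rho_t/4$ then contradicts the failed small-error test. Hence $\rho_t\le 4R_t/c^2$ and $v_t\le 2R_t+\rho_t/\zeta=O(R_t)$. The same gate is what supplies $\Delta_t\le 1/(400L^2)$ in the $i_t=0$ argument.

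Finally, the endpoint mixture has Lagrangian value $(v_t-\rho_t)/2$, so its loss is $O(v_t+\rho_t)$, not at most $v_t$. This is harmless once both are shown to be $O(R_t)$.
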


\subsection{\DIFadd{Dual-LTE Algorithm and Constraint Updates}}
\label{sec:dual_aware_algorithm}

\DIFadd{Algorithm~}\ref{alg:dual-aware-lte} \DIFadd{combines the HoB and
uplift-value estimators with either constraint. Operationally, it follows a
hierarchy of information needs. It first resolves material uncertainty about
auction competition and then checks whether the uplift estimate is sufficiently
accurate for the current shadow price. If additional value information is
needed, it uses balanced endpoint exploration. Otherwise, it selects a
constraint-adjusted action from the empirical lower hull. The resulting
feedback updates the estimators and constraint state for the next round.

The full-information and binary HoB procedures are given in
Section~}\ref{sec:hob_est}\DIFadd{. In Algorithm~}\ref{alg:dual-aware-lte}
\DIFadd{, the binary shift-uncertainty and CDF-width checks appear in Lines 8
and 11, respectively. The budget or RoS update maps its dual state to the
shadow price \(\rho_t\), which parameterizes the shared bid-selection rule.}

The RoS multiplier and budget pacing factor are initialized in
Algorithm~\ref{alg:dual-aware-lte} as
\begin{equation}\label{eq:dual_state_init}
    \lambda_1=1\ \text{under RoS},\qquad
    \mu_1=1\ \text{under Budget.}
\end{equation}
Let $\eta\ge 0$ be an input learning rate. The dual states are computed via mirror-descent-based update rules below.

\begingroup
\SingleSpacedXI
\begin{algorithm}[!t]
\caption{Dual-Aware LTE framework for repeated FPAs}
\label{alg:dual-aware-lte}
\begin{algorithmic}[1]
\State \textbf{Input:} horizon \(T\), feedback module
\(\mathrm{FB}\in\{\full,\bin\}\), constraint module
\(\mathrm{CON}\in\{\bgt,\ros\}\), HoB oracle \(\mathcal O\), bid grid
\(\calB\), pacing rate $\eta>0$.
\State Initialize the WLS state $A_1=I, y_1=\htheta_1=0$.
\State Initialize \(\mathcal E_x=\mathcal E_\delta=\mathcal E_\hob=\varnothing\)
and all binary trustworthy counts \(n_{1,j}=0\).

\State Initialize the dual state of \(\mathrm{CON}\) by \eqref{eq:dual_state_init}, and set \(\rho_1\) by the corresponding constraint module.
\For{\(t=1,\ldots,T\)}
    \State Observe \(x_t\), set \(\widehat v_t=\htheta_t^\top x_t\), and query
    \(\mathcal O\) for \((\hF_t,\delta_t)\).
    \State Compute value width \(R_t\) from \eqref{eq:value_wls_width}.

    \If{under
    \(\mathrm{FB}=\bin\), \eqref{eq:bin_hob_width_test} holds}
        \State Add \(t\) to \(\mathcal E_x\) and \(\mathcal E_\hob\).
        \State Draw the HoB-exploration bid \(b_t\sim\mathrm{Unif}([0,1])\).
     \DIFaddbegin \ElsIf{\(\max_{b\in\calB}\delta_t(b) > \frac{1}{9600L^2}\)}
        \DIFaddend \State Add \(t\) to \(\mathcal E_\delta\) and \(\mathcal E_\hob\), and draw
        \(b_t\in\argmax_{b\in\calB}\delta_t(b)\). Set
        \(j_t=j_t(b_t)\).
    \ElsIf{$R_t > (d^{\frac{1}{2}} + \Delta_\hob^{\frac{1}{2}})T^{-\frac{1}{4}}$}
        \State Draw the value-exploration bid \(b_t\sim\mathrm{Bern}(\frac{1}{2})\).
    \Else
        \State Run Algorithm~\ref{alg:ucb_branching} with $c=\frac{1}{20L}$ and $\zeta=\frac{1}{8}$ to obtain
        \((\calB_t,i_t,u_t)\).
        \If{\(u_t=1\)}
            \State Draw the value-exploration bid \(b_t\sim\mathrm{Bern}(\frac{1}{2})\).
        \Else
            \State Set \(z_t\in\argmax_{z\in\Clow_t(\calB_t)}
            \{\hl_t^{i_t}(z;\rho_t)+w_t^{i_t}(z)\}\).
            \State Sample $b_t$ from the induced two-bid randomization from $z_t$ (c.f. Fact \ref{fact:two_vertex_decompose}).
        \EndIf
    \EndIf

    \State Submit \(b_t\) and observe the available feedback.
    \State Under \(\mathrm{FB}=\bin\): if \(t\in\mathcal E_x\), update the
    shift state in \eqref{eq:binary_shift_estimator}. Otherwise, use the
    estimated threshold \(\widehat u_t\) to assign \(o_t\) to a bin and
    increment that bin's count.
    \If{\(t\notin\mathcal E_\hob\)}
        \State Construct \((\te_t,\sigma_t)\) as in Section~\ref{sec:value_est}
        and update \((A_{t+1},y_{t+1},\htheta_{t+1})\) by
        \eqref{eq:wls_defs}.
    \Else
        \State Set \((A_{t+1},y_{t+1},\htheta_{t+1})=(A_t,y_t,\htheta_t)\).
    \EndIf

    \State Update the constraint-module dual state and $\rho_{t+1}$
    using \eqref{eq:budget_module} under \(\bgt\) or \eqref{eq:ros_module} under \(\ros\).

    \If{$\mathrm{CON}=\bgt$ and budget depletes}
    \State Break the loop and stop.
    \EndIf
\EndFor
\end{algorithmic}
\end{algorithm}
\endgroup

\smallskip
\noindent\textit{Budget constraint.}\quad
In the budget setting, at the end of time \(t\) in
Algorithm~\ref{alg:dual-aware-lte}, we set
\begin{equation}\label{eq:budget_module}
    \mu_{t+1}
    =
    \Pi_{[0,1]}\mu_t\exp\parr*{\eta\parr*{c_t-\frac{B}{T}}},\qquad
    \rho_{t+1}=1+\sqrt{\frac{LT}{B}}\mu_{t+1},
\end{equation}
where $c_t = \1[b_t\ge m_t]b_t$ denotes the realized spend at time $t$.
\DIFadd{The pacing update increases the multiplier when realized spend exceeds
the per-round spending target \(B/T\) and decreases it when realized spend
falls below that target.}

\smallskip
\noindent\textit{RoS constraint.}\quad
\DIFadd{In the RoS setting, we require an input multiplier bound $\Lambda>0$ and restrict $\lambda_t$ to $[0,\Lambda]$. Define an upper confidence bound (UCB) for the RoS slack by}
\begin{equation}\label{eq:slack_ucb}
\bg_t(b) = \Pi_{[-\kappa,1]}\parr*{\hF_t(b)(\hv_t - \kappa b) + R_t + (1+\kappa)\delta_t(b)} \ge g_t(b) = F_t(b)(v_t-\kappa b).
\end{equation}
We update the dual states by
\begin{equation}\label{eq:ros_module}
    \lambda_{t+1}
    =
    \Pi_{[0,\Lambda]}\lambda_t\exp(-\eta\bg_t(b_t)),\qquad \rho_{t+1} = \frac{\kappa\lambda_{t+1}}{1+\lambda_{t+1}}.
\end{equation}
\DIFadd{When the estimated RoS slack is negative, the update increases the
multiplier and shadow price, making subsequent bidding more conservative.
Positive slack reduces constraint pressure and permits more aggressive value
acquisition.}
The following lemma shows that \([0,\Lambda]\) contains every
population-optimal RoS multiplier whenever \(\Lambda\ge 1/\delta_S\).
\begin{lemma}[Valid lambda projection]\label{lem:lambda_bound}
Suppose Assumption \ref{assump:slater} holds. Then every minimizer $\lambda_*\in\argmin_{\lambda\ge 0}\overline{D}(\lambda)$ satisfies $\lambda_* \le \frac{1}{\delta_S}$, where $\overline{D}$ is defined in \eqref{eq:def_overline_D}.
\end{lemma}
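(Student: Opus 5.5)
The plan is the standard Slater argument for bounding optimal dual variables. I will compare \(\overline D(\lambda_*)\) with a lower bound obtained by evaluating the inner maximization at the Slater policy's action.

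First, I will record an upper bound on \(\overline D(\lambda_*)\). Taking \(\lambda=0\) gives
\[
\overline D(0)=\E_{x_t}\max_{z\in\calC_t} v_t q \le \E[v_t]\le 1,
\]
since \(q\le 1\) and \(v_t=\theta_*^\top x_t\in[0,1]\). The latter holds because \(v_{t,1}-v_{t,0}\in[0,1]\). Since \(\lambda_*\) minimizes \(\overline D\) over \(\lambda\ge0\), we get \(\overline D(\lambda_*)\le \overline D(0)\le 1\).

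Next, I will derive a lower bound for arbitrary \(\lambda\ge0\). For each context, the Slater policy \(\pi_{\mathrm{sl}}\) induces an action \((q_{\mathrm{sl}}(x_t),c_{\mathrm{sl}}(x_t))\in\calC_t\); this uses the exactness of the convexification, since every randomized bid distribution maps to a point of the convex hull. Plugging this feasible action into the inner maximum gives
\[
\overline D(\lambda)\ge \E\bigl[(1+\lambda)v_tq_{\mathrm{sl}}-\lambda\kappa c_{\mathrm{sl}}\bigr]
=\E[v_tq_{\mathrm{sl}}]+\lambda\,\E\bigl[v_tq_{\mathrm{sl}}-\kappa c_{\mathrm{sl}}\bigr].
\]
The first term is nonnegative. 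For the second, I need to identify \(\E[v_tq_{\mathrm{sl}}-\kappa c_{\mathrm{sl}}]\) with the per-round quantity \(\E[r_t^{\ros}(\pi_{\mathrm{sl}}(x_t))-\kappa c_t(\pi_{\mathrm{sl}}(x_t))]\). This follows from the tower property together with Assumption~\ref{ass:unconfound}: conditional on \(x_t\), winning is independent of the potential outcomes, so \(\E[o_t(v_{t,1}-v_{t,0})\mid x_t]=v_tF_t(b)\), and \(\E[c_t(b)\mid x_t]=bF_t(b)\). By stationarity, each round has the same expectation, so the averaged Slater condition in Assumption~\ref{assump:slater} gives a per-round slack of at least \(\delta_S\). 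Hence \(\overline D(\lambda)\ge \lambda\delta_S\).

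Combining the two bounds at \(\lambda=\lambda_*\) yields \(\lambda_*\delta_S\le\overline D(\lambda_*)\le 1\), so \(\lambda_*\le 1/\delta_S\). The only delicate step is the identification of the Slater quantity with the hull-action expression, that is, the conditional-independence and stationarity bookkeeping; the rest is routine. If one wanted to avoid relying on \(\overline D(0)\le1\), the bound \(\E[v_tq]\le 1\) for every hull action serves the same purpose.
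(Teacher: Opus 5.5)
Your proposal is correct and follows the paper's proof: both obtain \(\overline D(\lambda)\ge\lambda\delta_S\) by plugging the Slater policy's hull action into the inner maximum, and compare this with \(\overline D(\lambda_*)\le\overline D(0)\le 1\). Your additional bookkeeping, which uses Assumption~\ref{ass:unconfound} and stationarity to identify the time-averaged Slater slack with the per-round hull-action slack, makes explicit a step the paper leaves implicit.
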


\begin{remark}[Choice of the RoS projection bound]
\DIFadd{The RoS dual update in }\eqref{eq:ros_module}\DIFadd{ projects the
multiplier onto \([0,\Lambda]\). Lemma~}\ref{lem:lambda_bound}\DIFadd{ shows
that \(\lambda_*\le 1/\delta_S\), while the regret and violation guarantees
below use}
\[
    \DIFadd{\Lambda\ge\max\left\{1,\frac{2}{\delta_S}\right\}.
}
\]
\DIFadd{If \(\delta_S\) is unknown, a conservative lower confidence bound
\(\underline\delta_S\le\delta_S\) may be obtained from offline data or an
online calibration procedure, and one may set}
\[
    \DIFadd{\Lambda
    =
    \max\left\{1,\frac{2}{\underline\delta_S}\right\}.
}
\]
\end{remark}


\section{Regret Guarantees and Instantiations}
\label{sec:regret_instantiations}


\DIFadd{This section presents the end-to-end performance guarantees for
Dual-LTE. Subsection~}\ref{sec:reg_vio_budget} \DIFadd{establishes regret under
the budget constraint, while Subsection~}\ref{sec:reg_vio_ros}
\DIFadd{establishes regret and violation guarantees under the RoS constraint.
After substituting the feedback-specific HoB bounds, the regret rates scale as
\(\widetilde O(\sqrt T)\) under full-information feedback and
\(\widetilde O(T^{2/3})\) under binary feedback when non-horizon problem
parameters are fixed. These rates match the optimal horizon dependence
established by }\citet{wen2025joint} \DIFadd{for unconstrained joint value and
HoB learning. Thus, incorporating budget or RoS constraints preserves the
known optimal regret exponents in \(T\).}

\subsection{Regret under Budget Constraint}
\label{sec:reg_vio_budget}

In this setting, Algorithm \ref{alg:dual-aware-lte} stops bidding whenever the hard budget constraint is met (Line 24). The pacing is controlled by comparing the realized cost against the averaged budget $\frac{B}{T}$ in \eqref{eq:budget_module}. The following result summarizes the performance of Algorithm \ref{alg:dual-aware-lte}.

\begin{theorem}[Budget regret]\label{thm:budget_regret}
Suppose Assumptions~\ref{ass:unconfound}--\ref{ass:linear_budget} hold,
and suppose the HoB oracle  \DIFaddbegin \DIFadd{satisfies
}\DIFaddend Abstraction~\ref{abs:oracle_hob} \DIFaddbegin \DIFadd{. Use confidence level
\(\delta=T^{-3}\) in the value radius of Lemma~\ref{lem:wls}.
}\DIFaddend Run the Budget instantiation of
Algorithm~\ref{alg:dual-aware-lte} with an evenly discretized bid grid
\(\calB\subseteq[0,1]\) of size \(K\) and
learning rate \(\eta=\Theta(T^{-\frac{1}{2}})\). Then
\[
    \Reg_T^{\bgt}
    =
    O\parr*{
        d\sqrt T\log^2T
        +\Delta_{\hob}\sqrt{dT}\log T
        +H_\hob
        + \frac{T}{K}
    } .
\]
\end{theorem}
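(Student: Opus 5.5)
We will follow the standard primal--dual template for budget-constrained bidding (in the style of \citet{balseiro2019learning,wang2023learning}), with the unknown value and HoB handled through the confidence machinery of Section~\ref{sec:dual_aware_lte_framework}. Write \(\rho(\mu)=1+\sqrt{LT/B}\,\mu\), so that the per-round Lagrangian is \(\ell_t(z;\mu)=q_t(z)v_t-\rho(\mu)c_t(z)+\mu B/T\), up to the affine reparametrization of \(\rho\). We first relate \(\OPT^{\bgt}\) to the grid benchmark. Rounding any bid in \([0,1]\) up to the next grid point of \(\calB\) changes the surplus by at most \(1/K\) per round, which gives the \(T/K\) term. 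By weak duality, for every \(\mu\in[0,1]\) we have \(\OPT^{\bgt}-1/K\le\E_{x}\max_{z\in\calC_t^\star}\ell_t(z;\mu)\). The budget reward \(r^{\bgt}=o(v-b)\) corresponds to \(\rho\ge1\), which is why the shadow price is offset by \(1\). Boundedness of the optimal multiplier in the \(\sqrt{LT/B}\) scaling follows from Assumption~\ref{ass:linear_budget} and Lipschitzness of \(G\), because lowering the bid reduces spend at a controlled rate.

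Next, we decompose the rounds \(t\le\tau\) into four classes. The first is HoB exploration \(\mathcal E_\hob\); each such round loses \(O(1)\), giving \(H_\hob\). The second is value exploration triggered by \(R_t>(d^{1/2}+\Delta_\hob^{1/2})T^{-1/4}\). Here we use the elliptical potential lemma with the weighted design \(A_t\): endpoint rounds have \(\sigma_t=1\), so \(\sum\min\{1,\|x_t\|^2_{A_t^{-1}}\}=O(d\log T)\). Combined with \(\beta_t\le 1+C\sqrt{d\log T}+C\Delta_\hob\) from Lemma~\ref{lem:wls}, this bounds the number of such rounds by \(O\bigl(\beta_T^2 d\log T\,\sqrt T/(d+\Delta_\hob)\bigr)\), which fits within \(d\sqrt T\log^2T+\Delta_\hob\sqrt{dT}\log T\). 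The third class is rounds where Algorithm~\ref{alg:ucb_branching} returns \(u_t=1\); by Lemma~\ref{lem:branching_control} these cost \(O(R_t)\) in Lagrangian loss. The fourth is exploitation rounds. There Lemma~\ref{lem:branching_control} guarantees that the population maximizer lies in the hull over \(\calB_t\), and Lemma~\ref{lem:two_lag_width_valid} gives optimism. The standard UCB argument then bounds the instantaneous Lagrangian gap by \(2w_t^{i_t}(z_t)\). Using the coefficient bound \(\min\{\hq,1-\hq\}\le 20L\hq(1-\hq)=20L\sigma_t^{-2}\), we obtain \(\hq_t R_t\lesssim\beta_t\|\sigma_t^{-2}x_t\|_{A_t^{-1}}\). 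Cauchy--Schwarz and the elliptical potential on \(A_t\) then give \(\sum R_t\)-type terms of order \(\beta_T\sqrt{dT\log T}\), i.e.\ \(d\sqrt T\log^2T+\Delta_\hob\sqrt{dT}\log T\). The HoB terms satisfy \((1+\rho_t)\sum\delta_t(b_t)\le O(\sqrt{T/c})\sqrt T D_{\hob,T}\). This is the term that needs care: with \(\mu\le1\) and \(B=cT\), \(\rho_t=O(1)\), so the contribution is \(O(\sqrt T\Delta_\hob)\). The third-class costs \(O(\sum R_t)\) are absorbed in the same way.

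Finally, we handle the dual side. The multiplicative-weights update \eqref{eq:budget_module} on \([0,1]\) with \(\eta=\Theta(T^{-1/2})\) gives an online-mirror-descent regret bound of \(O(\sqrt T\log T)\) against any fixed \(\mu\in[0,1]\) for the linear losses \(\mu(c_t-B/T)\). Summing the primal inequalities, we compare against \(\mu=0\) if \(\tau=T\). If the budget depletes early, we compare against \(\mu=1\), which converts \(\sum_{t\le\tau}(c_t-B/T)\approx B(1-\tau/T)\) into a credit that pays for the \((T-\tau)\OPT^{\bgt}\) missing reward, since \(\OPT^{\bgt}\le\rho_{\max}B/T\) in the right units. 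The realized-versus-expected discrepancies are martingales and cost \(O(\sqrt{T\log T})\) by Azuma; the failure events of probability \(O(T^{-3})\) cost \(O(1)\).

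The main obstacle is this last stopping-time step: making the \(\sqrt{LT/B}\) scaling of \(\rho\) large enough that the \(\mu=1\) comparison covers the lost tail, while keeping \((1+\rho_t)\) bounded in the width terms. We would first try the Balseiro--Lu--Mirrokni argument and check that \(\rho(1)\ge \OPT^{\bgt}T/B\) holds via Assumption~\ref{ass:linear_budget}.
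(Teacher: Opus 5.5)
Your architecture matches the paper's proof. It uses the identity \(r_t^{\bgt}(z_t^*)-r_t^{\bgt}(z_t)=\ell_t^0(z_t^*;\rho_t)-\ell_t^0(z_t;\rho_t)+Z\mu_t\bigl(c_t(z_t^*)-c_t(z_t)\bigr)\) with \(Z=\sqrt{LT/B}\). The Lagrangian-learning term is handled through the exploration counts and the better-of-two widths. Entropic mirror descent on \(\mu_t\) uses comparator \(0\) if \(\tau=T\) and comparator \(1\) if the budget depletes.

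The gap is the step you yourself flag as the main obstacle, and the route you propose would not close it. With comparator \(\mu=1\), the pacing term yields the credit \(-Z\bigl((T-\tau)B/T-1\bigr)\), up to \(O(\sqrt T)\). Its coefficient is \(Z=\rho(1)-1\), not \(\rho(1)\). The offset \(1\) in \(\rho_t\) is the payment already inside the surplus reward, not a dual weight. The same slip appears in your \(+\mu B/T\) term, which should be \(+Z\mu B/T\). So what must be shown is \(\OPT^{\bgt}\le Z\,B/T=\sqrt{LB/T}\).

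Assumption~\ref{ass:linear_budget} cannot deliver this. Together with boundedness it only gives \(B/T=c\) and \(\OPT^{\bgt}\le 1\), whereas \(\sqrt{LB/T}=\sqrt{Lc}\) can be far below \(1\) (e.g.\ \(L=1\), \(c=0.01\)). For the same reason, the generic Balseiro--Lu--Mirrokni step fails. It pays for the lost tail with the maximal per-round reward (here up to \(1\)), so it would need \(Z\ge T/B\), which the prescribed \(Z=\sqrt{L/c}\) violates whenever \(Lc<1\). You also do not need any bound on an optimal budget multiplier.

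The missing ingredient is the no-zero-price part of Assumption~\ref{ass:bound_density}. Since \(F_t(0)=0\) and \(G\) is \(L\)-Lipschitz, \(F_t(b)\le Lb\), hence \(F_t(b)^2\le L\,bF_t(b)=L\,\bc_t(b)\). For the budget-feasible benchmark bid \(b_t^*\), Jensen then gives
\[
    \E\bigl[r_t^{\bgt}(b_t^*)\bigr]
    \le\E\bigl[F_t(b_t^*)\bigr]
    \le\sqrt{\E\bigl[F_t(b_t^*)^2\bigr]}
    \le\sqrt{L\,\E\bigl[\bc_t(b_t^*)\bigr]}
    \le\sqrt{LB/T}.
\]
Hence the tail satisfies \(\E\bigl[\sum_{t>\tau}r_t^{\bgt}(b_t^*)\bigr]\le\E[T-\tau]\sqrt{LB/T}\). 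This cancels the credit \(-Z(T-\tau)B/T=-(T-\tau)\sqrt{LB/T}\) exactly. This is precisely why the shadow price is scaled by \(\sqrt{LT/B}\). Under Assumption~\ref{ass:linear_budget} the same scaling keeps \(\rho_t\le 1+\sqrt{L/c}=O(1)\) in the width terms, as you wanted. With this inequality inserted, your outline goes through.

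A minor point: the HoB term should read \((1+\rho_t)\sum_t\delta_t(b_t)\le O(\sqrt{L/c})\sqrt T\,D_{\hob,T}\). The factor \(O(\sqrt{T/c})\) you wrote is a typo, and your stated conclusion \(O(\sqrt T\,\Delta_\hob)\) is the correct one.
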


The full proofs of Theorem~\ref{thm:budget_regret} and
Corollary~\ref{cor:budget_feedback} are deferred to
Appendix~\ref{app:budget_proofs}, and we provide a proof sketch below.

\smallskip
\noindent\textit{Proof sketch of Theorem~\ref{thm:budget_regret}.}\quad
Let \(\tau\) be the last round before the hard budget is depleted. With
\(Z=\sqrt{LT/B}\), the budget shadow price is \(\rho_t=1+Z\mu_t\). The
regret is decomposed into a Lagrangian-learning term, a pacing term, and the
loss after stopping:
\begin{align*}
\Reg_T^{\bgt}
&\le
\E\!\left[\sum_{t=1}^{\tau}
    \ell_t^{0}(z_t^*;\rho_t)-\ell_t^{0}(z_t;\rho_t)\right]
+Z\E\!\left[\sum_{t=1}^{\tau}
    \mu_t\left(\frac{B}{T}-\bc_t(b_t)\right)\right]
+\E\!\left[\sum_{t=\tau+1}^{T} r_t^{\bgt}(b_t^*)\right].
\end{align*}
The first term is controlled by the dual-aware LTE learning bound: the
better-of-two Lagrangian construction pairs value-estimation error with the
allocation variance, while the HoB oracle contributes the cumulative width
\(\Delta_{\hob}\). This gives
\(O(d\sqrt T\log^2T+\Delta_{\hob}\sqrt{dT}\log T+H_\hob+T/K)\), where the
contribution of HoB-exploration rounds is accounted for once through
Lemma~\ref{lem:bounded_explore}. The pacing term is
controlled by the entropic mirror-descent update for \(\mu_t\) with feedback
\(B/T-c_t\). Under Assumption~\ref{ass:linear_budget}, \(B=\Theta(T)\), so
the resulting \(O(\sqrt T)\) contribution is dominated by the learning term.
\DIFadd{If \(\tau<T\), the stopping rule implies that the realized cumulative spend
differs from the budget by at most one,} and the same pacing argument bounds the remaining comparator
reward. The \(T/K\) term comes from approximating the continuous benchmark bid
by the evenly discretized grid.

\begin{corollary}[Budget rates under HoB feedback]\label{cor:budget_feedback}
Under the conditions of Theorem~\ref{thm:budget_regret} and discretization grid $K=\Omega(\sqrt{T})$, the full-information
HoB instantiation in Section \ref{sec:full_hob_est} satisfies
\[
    \Reg_T^{\bgt,\full}
    =
    \widetilde O\parr*{d\sqrt T} .
\]
With $K=\Omega(T^{\frac{1}{3}})$, the binary HoB instantiation in Section \ref{sec:bin_hob_est}
satisfies
\[
    \Reg_T^{\bgt,\bin}
    =
    \widetilde O\parr*{
        d\sqrt T+d^{\frac{2}{3}}T^{\frac{2}{3}}
    } .
\]
In particular, for fixed \(d\), these rates are
\(\widetilde O(\sqrt T)\) and \(\widetilde O(T^{\frac{2}{3}})\), respectively.
\end{corollary}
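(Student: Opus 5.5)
The corollary follows by substituting the feedback-specific quantities \(\Delta_\hob\) and \(H_\hob\) from Section~\ref{sec:hob_estimation_widths} into the bound of Theorem~\ref{thm:budget_regret}:
\[
    O\parr*{d\sqrt T\log^2T+\Delta_{\hob}\sqrt{dT}\log T+H_\hob+T/K}.
\]
We then choose \(K\) so that the discretization term \(T/K\) is dominated by the remaining terms. The first step is to confirm that each concrete implementation satisfies Abstraction~\ref{abs:oracle_hob}, so that Theorem~\ref{thm:budget_regret} applies. Lemma~\ref{lem:full_hob_est} gives this for the full-information estimator; its band holds for all \(b\in[0,1]\), hence in particular on \(\calB\). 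Lemma~\ref{lem:bin_hob_est} gives it for the binary estimator, with failure probability \(T^{-3}\) in both cases. We also check that the choice \(\delta=T^{-3}\) in Lemma~\ref{lem:wls} yields \(\log(T/\delta)=O(\log T)\), so the constants agree with those in the theorem.

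\emph{Full-information case.} Here \(\Delta_\hob=\widetilde O(\sqrt d)\), so
\[
    \Delta_\hob\sqrt{dT}\log T=\widetilde O(d\sqrt T).
\]
Lemma~\ref{lem:bounded_explore} gives \(H_\hob=O(d\log^2T)\), which is \(\widetilde O(d\sqrt T)\). Taking \(K=\Omega(\sqrt T)\) makes \(T/K=O(\sqrt T)\). Every term is therefore \(\widetilde O(d\sqrt T)\).

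\emph{Binary case.} Here \(\Delta_\hob=\widetilde O(\sqrt d+d^{1/6}T^{1/6})\) under \(L=O(1)\). Multiplying by \(\sqrt{dT}\log T\) gives
\[
    \widetilde O\parr*{d\sqrt T+d^{2/3}T^{2/3}}.
\]
Lemma~\ref{lem:bounded_explore} gives \(H_\hob=\widetilde O(d^{2/3}T^{2/3})\). Taking \(K=\Omega(T^{1/3})\) makes \(T/K=O(T^{2/3})\), which is absorbed into the same bound. For fixed \(d\), the two rates reduce to \(\widetilde O(\sqrt T)\) and \(\widetilde O(T^{2/3})\).

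There is one point to check beyond the bookkeeping. \(\Delta_\hob\) is defined as a deterministic bound on \(D_{\hob,T}\) that holds only on the oracle event. Off that event, which has probability at most \(2T^{-3}\), the regret is at most \(O(T)\), so it contributes only \(O(T^{-2})\) to the expectation. We must confirm that Theorem~\ref{thm:budget_regret} already accounts for this; if it does not, the expectation should be split on the good event. The remaining issue, which is also the main obstacle, is that Algorithm~\ref{alg:dual-aware-lte} uses \(\Delta_\hob\) in its value-exploration threshold, so the appendix derivation of \(\Delta_\hob\) and \(H_\hob\) must hold for the realized rounds \(\mathcal I_{\rm WLS}\). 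I would take this as given from Appendix~\ref{app:hob_width_bounds} and Lemma~\ref{lem:bounded_explore}.
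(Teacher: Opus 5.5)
Your proposal is correct and follows the paper's route: the corollary is obtained by substituting the feedback-specific values of \(\Delta_\hob\) and \(H_\hob\) from Section~\ref{sec:hob_estimation_widths} and Lemma~\ref{lem:bounded_explore} into Theorem~\ref{thm:budget_regret} and choosing \(K\) so that \(T/K\) is dominated, and your arithmetic checks out in both feedback modes. Both caveats you raise are already handled in the paper. The failure event is absorbed inside the proof of Theorem~\ref{thm:budget_regret} via \eqref{eq:failure_event_conversion}, and the derivation of \(\Delta_\hob\) in Appendix~\ref{app:hob_width_bounds} uses only HoB-estimator properties, so it does not depend circularly on the value-exploration threshold.
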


\subsection{Regret and Violation under RoS Constraint}\label{sec:reg_vio_ros}

In the RoS setting, the bidder optimizes the per-round reward \(r_t^\ros\)
subject to the long-term return-on-spend target \(\kappa\). We establish a
regret guarantee for Algorithm~\ref{alg:dual-aware-lte} together with a
sublinear bound on the cumulative violation \(\Viol_T^\ros\), which implies
near-optimal performance with vanishing average RoS violation.

\begin{theorem}[RoS regret and violation]\label{thm:ros_regret_violation}
Suppose Assumptions~\ref{ass:unconfound}--\ref{ass:bound_density} and
\ref{assump:slater} hold, and suppose the HoB oracle  \DIFaddbegin \DIFadd{satisfies
}\DIFaddend Abstraction~\ref{abs:oracle_hob} \DIFaddbegin \DIFadd{. Use confidence level
\(\delta=T^{-3}\) in the value radius of Lemma~\ref{lem:wls}. }\DIFaddend Run the RoS instantiation of
Algorithm~\ref{alg:dual-aware-lte} with an
evenly discretized bid grid \(\calB\subseteq[0,1]\) of size \(K\),
\(\Lambda\ge\max\{1,\frac{2}{\delta_S}\}\), and
\(\eta=\Theta(T^{-\frac{1}{2}})\). Then
\[
\begin{aligned}
    \Reg_T^{\ros}
    &=
    O\parr*{
        \Lambda\parr*{
        d\sqrt T\log^2T
        +\Delta_{\hob}\sqrt{dT}\log T
        +H_\hob
        +\frac{T}{K}}
    },
    \\
    \Viol_T^{\ros}
    &=
    O\parr*{
        d\sqrt T\log^2T
        +\Delta_{\hob}\sqrt{dT}\log T
        +H_\hob
        +
        \parr*{d^{\frac{1}{2}}
        +\Delta_{\hob}^{1/2}}T^{\frac{3}{4}}
        +\frac{T}{K}
    } .
\end{aligned}
\]
\end{theorem}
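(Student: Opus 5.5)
We follow the primal--dual template for online learning with long-term constraints, run through the hull representation of Section~\ref{sec:lower_conv_hull}. Write \(\rho_t=\kappa\lambda_t/(1+\lambda_t)\). The Lagrangian \((1+\lambda_t)(v_tq-\rho_tc)=(1+\lambda_t)v_tq-\lambda_t\kappa c\) is exactly the per-context integrand of \(\overline D(\lambda_t)\) in \eqref{eq:def_overline_D}. Let \(z_t^*\) maximize \(v_tq-\rho_tc\) over \(\calC_t^\star\). By convexification and Lemma~\ref{lem:lambda_bound}, strong duality gives \(\OPT^{\ros}=\min_{\lambda\ge0}\overline D(\lambda)\le\overline D(\lambda_t)\). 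Moving from \(\calC_t\) to the grid hull \(\calC_t^\star\) costs \(O(\Lambda/K)\) per round, using the Lipschitz condition in Assumption~\ref{ass:bound_density} and \(\lambda_t\le\Lambda\).

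Taking expectations over \(x_t\), which is independent of \(\mathcal H_{t-1}\), we get
\[
\OPT^{\ros}-\E r_t(z_t)\le(1+\lambda_t)\E\bigl[\ell_t^0(z_t^*;\rho_t)-\ell_t^0(z_t;\rho_t)\bigr]+\lambda_t\E g_t(z_t)+O(\Lambda/K),
\]
where \(\ell_t^0\) is the population Lagrangian and \(g_t(z_t)=v_tq_t(z_t)-\kappa c_t(z_t)\). Summing over \(t\), regret splits into three pieces: a learning term weighted by \(1+\lambda_t\le1+\Lambda\), a complementary-slackness term \(\sum_t\lambda_t g_t(b_t)\), and the discretization term \(\Lambda T/K\).

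\textbf{Learning term.} We reuse the same bound as in Theorem~\ref{thm:budget_regret}, since it depends on \(\rho_t\) only through the branching routine. HoB-exploration rounds contribute at most \(H_\hob\). On value-exploration rounds with \(R_t\) above the threshold, the elliptical-potential argument for \(A_t\) with unit weights bounds their number by \(\widetilde O(d+\Delta_\hob)\sqrt T\). Exploration triggered by \(u_t=1\) costs \(O(R_t)\) per round by Lemma~\ref{lem:branching_control}. On exploitation rounds, Lemma~\ref{lem:branching_control} ensures \(z_t^*\) is feasible in \(\Clow_t(\calB_t)\). Optimism together with Lemma~\ref{lem:two_lag_width_valid} then gives a gap of at most \(2w_t^{i_t}(z_t)\). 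Lemma~\ref{lem:branching_control} and \eqref{eq:allocation_variance_scale} bound the value part by \(O(L\sigma_t^{-2}R_t)\). Cauchy--Schwarz with the elliptical potential on \(\sigma_t^{-2}x_t\) and Lemma~\ref{lem:wls} turns \(\sum_t\sigma_t^{-2}\beta_t\|x_t\|_{A_t^{-1}}\) into \(O(d\sqrt T\log^2T+\Delta_\hob\sqrt{dT}\log T)\). The HoB part is \(\sum_t\delta_t\le\sqrt T\,\Delta_\hob\). Multiplying by \(1+\Lambda\) yields the stated regret.

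\textbf{Dual terms.} Bound \(\sum_t\lambda_t g_t(b_t)\le\sum_t\lambda_t\bg_t(b_t)\) using \eqref{eq:slack_ucb} and \(\lambda_t\ge0\). Exponentiated-gradient (mirror descent) regret on \([0,\Lambda]\) with losses \(\bg_t\in[-\kappa,1]\) and \(\eta=\Theta(T^{-1/2})\) gives, for any comparator \(\lambda\),
\[
\sum_t(\lambda_t-\lambda)\bg_t(b_t)\le O(\Lambda\sqrt T\log T).
\]
Taking \(\lambda=0\) bounds the complementary-slackness term, and hence the regret, up to the learning terms.

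\textbf{Violation.} Use the same inequality with comparator \(\lambda=\Lambda\), and combine it with the regret decomposition evaluated against \(\lambda_*\le1/\delta_S\) (Lemma~\ref{lem:lambda_bound}). Since \(\Lambda-\lambda_*\ge1/\delta_S\ge\tfrac12\Lambda\), the standard argument gives \(-\sum_t\bg_t(b_t)\le O(\text{learning terms})+O(\sqrt T)\), where we need \(\Lambda\ge2/\delta_S\) so that \(\Lambda-\lambda_*\) is bounded below. It remains to pass from \(\bg_t\) to \(g_t\): the gap is \(\sum_t(2R_t+2(1+\kappa)\delta_t)\) on every round, including exploration rounds and rounds in \(\mathcal E_\hob\).

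\textbf{Main obstacle.} On rounds where the value width is not weighted by \(\sigma_t^{-2}\), this gap sum is controlled only crudely. The threshold \(R_t\le(d^{1/2}+\Delta_\hob^{1/2})T^{-1/4}\) on non-exploration rounds gives \((d^{1/2}+\Delta_\hob^{1/2})T^{3/4}\). Exploration rounds contribute \(O(1)\) each, at most \(H_\hob+\widetilde O(d+\Delta_\hob)\sqrt T\) in total. I expect this step to be the hardest: we must make sure the unweighted \(R_t\) on \(u_t=1\) and exploitation rounds stays below the same threshold, and that the mirror-descent bound survives the truncation in \(\bg_t\). With that in place, the \(\delta_t\) part is \(\sqrt T\Delta_\hob\), which gives the stated violation bound. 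Finally, we add the \(T^{-3}\)-probability failure events of Abstraction~\ref{abs:oracle_hob} and Lemma~\ref{lem:wls}, which contribute \(O(1)\).
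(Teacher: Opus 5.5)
Your proposal is correct and follows essentially the paper's route: weak duality against \(\overline D(\lambda_t)\) isolates the \((1+\lambda_t)\)-weighted Lagrangian term \(\calP_T\); mirror descent on \(\lambda_t\) is applied with comparators \(0\) and \(\Lambda\); strong duality is used at \(\lambda_*\le1/\delta_S\) with \(\Lambda\ge2\lambda_*\); and the slack-UCB gap \(\calG_T\) is split into below-threshold rounds, \(\mathcal E_{\rm wid}\), and \(\mathcal E_{\hob}\). The steps you flag as obstacles are automatic (the value-width test in Algorithm~\ref{alg:dual-aware-lte} is checked before Algorithm~\ref{alg:ucb_branching} is invoked, so \(u_t=1\) and exploitation rounds already satisfy \(R_t\le(d^{1/2}+\Delta_{\hob}^{1/2})T^{-1/4}\), and since \(g_t(b)\in[-\kappa,1]\) the projection in \eqref{eq:slack_ucb} keeps \(\bg_t\ge g_t\) while only shrinking the gap), and your chain ``\(\Lambda-\lambda_*\ge1/\delta_S\ge\tfrac12\Lambda\)'' should read \(\Lambda-\lambda_*\ge\Lambda-1/\delta_S\ge\tfrac12\Lambda\).
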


The full proofs of Theorem~\ref{thm:ros_regret_violation} and
Corollary~\ref{cor:ros_feedback} are deferred to
Appendix~\ref{app:ros_proofs}, and we provide a proof sketch below.

\smallskip
\noindent\textit{Proof sketch of Theorem~\ref{thm:ros_regret_violation}.}\quad
The convexified benchmark makes the RoS-constrained problem linear in the
allocation--payment action. By the Slater condition and
Lemma~\ref{lem:lambda_bound}, strong duality holds and it suffices to project
the multiplier onto \([0,\Lambda]\). Under strong duality, the regret becomes
\[
\Reg_T^{\ros}  \DIFaddbegin \le \DIFaddend \E\!\left[
    \sum_{t=1}^T
    \DIFaddbegin \left\{\DIFaddend (1+\lambda_t)\parr*{\max_{z\in\calC_t}\ell_t^0(z;\rho_t)
    -\ell_t^0(z_t;\rho_t)}+\lambda_tg_t\DIFaddbegin \DIFadd{(b_t)}\right\}
    \DIFaddend \right] \DIFaddbegin \DIFadd{.
}\DIFaddend \]
where $\calP_T$ denotes the Lagrangian-learning term
\[
    \calP_T
    =
    \E\!\left[
    \sum_{t=1}^T
    (1+\lambda_t) \DIFaddbegin \parr*{\max_{z\in\calC_t}\ell_t^0(z;\rho_t)
    -\ell_t^0(z_t;\rho_t)}
    \DIFaddend \right].
\]
By a mirror-descent argument and $\lambda_t\le \Lambda$, $\sum_{t=1}^T\lambda_t\overline g_t=O(\Lambda\sqrt{T})$. The same lower-hull optimization and better-of-two Lagrangian argument used in
the budget case gives
\[
    \calP_T
    =
    O\!\left(\Lambda
        \parr*{d\sqrt T\log^2T
        +\Delta_{\hob}\sqrt{dT}\log T+H_\hob+\frac{T}{K}}
    \right),
\]
up to the separately controlled exploration contribution. For regret, weak
duality compares the algorithm to the convexified optimum at the current
\(\lambda_t\), and the mirror-descent update with comparator \(0\) controls the
dual term \(\sum_t\lambda_t g_t(b_t)\).

For violation, the same mirror-descent inequality is applied with comparator
\(\Lambda\). Rearranging the resulting bound converts cumulative negative
slack into \(\Viol_T^\ros\). The dual-aware exploitation rounds contribute the
same statistical terms as above, while value-exploration rounds add
\(O((d^{1/2}+\Delta_{\hob}^{1/2})T^{3/4})\) through the threshold defining
when the value radius is small enough for Lagrangian optimization. The grid
approximation contributes \(T/(K\Lambda)\) after converting the dual comparison
into a violation bound.
Again, Theorem \ref{thm:ros_regret_violation} is agnostic to the HoB feedback. Under the specific feedbacks we have considered and implemented, the end-to-end results are given below.

\begin{corollary}[RoS rates under HoB feedback]\label{cor:ros_feedback}
Under the conditions of Theorem~\ref{thm:ros_regret_violation}, the
full-information HoB instantiation with $K=\Omega(\sqrt{T})$ satisfies
\[
\begin{aligned}
    \Reg_T^{\ros,\full}
    &=
    \widetilde O\parr*{\Lambda d\sqrt T},
    &
    \Viol_T^{\ros,\full}
    &=
    \widetilde O\parr*{
        d\sqrt T+\sqrt d\,T^{\frac{3}{4}}
    } .
\end{aligned}
\]
The binary HoB instantiation with $K=\Omega(T^{\frac{1}{3}})$
satisfies
\[
\begin{aligned}
    \Reg_T^{\ros,\bin}
    &=
    \widetilde O\parr*{
        \Lambda\parr*{d\sqrt T+d^{\frac{2}{3}}T^{\frac{2}{3}}}
    },
    \\
    \Viol_T^{\ros,\bin}
    &=
    \widetilde O\parr*{
        d\sqrt T
        +d^{\frac{2}{3}}T^{\frac{2}{3}}
        +d^{\frac{1}{2}}T^{\frac{3}{4}}
        +d^{\frac{1}{12}}T^{\frac{5}{6}}
    } .
\end{aligned}
\]
In particular, for fixed \(d\), these regret rates are
\(\widetilde O(\sqrt T)\) and \(\widetilde O(T^{\frac{2}{3}})\), respectively.
\end{corollary}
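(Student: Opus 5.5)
The corollary follows by substituting the feedback-specific HoB quantities into Theorem~\ref{thm:ros_regret_violation}. The inputs are the bounds on \(\Delta_\hob\) and \(H_\hob\) from Section~\ref{sec:hob_estimation_widths} and Lemma~\ref{lem:bounded_explore}, together with the choice of grid size \(K\). First, I will check that each instantiation satisfies Abstraction~\ref{abs:oracle_hob}. For full-information feedback this is Lemma~\ref{lem:full_hob_est}; its band holds for all \(b\in[0,1]\), and in particular on \(\calB\). For binary feedback it is Lemma~\ref{lem:bin_hob_est}. The theorem therefore applies in both modes. I will keep \(L=O(1)\) and absorb logarithmic factors into \(\widetilde O\), per the standing convention.

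\emph{Full-information mode.} Here \(\Delta_\hob=\widetilde O(\sqrt d)\) and \(H_\hob=O(d\log^2T)\). The regret terms become
\[
d\sqrt T\log^2T+\sqrt d\cdot\sqrt{dT}\log T+d\log^2T,
\]
which is \(\widetilde O(d\sqrt T)\). With \(K=\Omega(\sqrt T)\) we have \(T/K=O(\sqrt T)\). Multiplying by \(\Lambda\) gives \(\widetilde O(\Lambda d\sqrt T)\). For the violation, the extra term is \((d^{1/2}+\Delta_\hob^{1/2})T^{3/4}\), with \(\Delta_\hob^{1/2}=\widetilde O(d^{1/4})\le\widetilde O(d^{1/2})\) since \(d\ge1\). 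This yields \(\widetilde O(d\sqrt T+\sqrt d\,T^{3/4})\).

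\emph{Binary mode.} Here \(\Delta_\hob=\widetilde O(\sqrt d+d^{1/6}T^{1/6})\) and \(H_\hob=\widetilde O(d^{2/3}T^{2/3})\). The cross term is
\[
\Delta_\hob\sqrt{dT}=\widetilde O\bigl(d\sqrt T+d^{2/3}T^{2/3}\bigr).
\]
With \(K=\Omega(T^{1/3})\), we get \(T/K=O(T^{2/3})\), which is dominated by \(d^{2/3}T^{2/3}\) since \(d\ge1\). This gives the stated regret after multiplying by \(\Lambda\). For the violation,
\[
\Delta_\hob^{1/2}\le\widetilde O\bigl(d^{1/4}+d^{1/12}T^{1/12}\bigr),
\]
using \(\sqrt{a+b}\le\sqrt a+\sqrt b\). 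Multiplying by \(T^{3/4}\) produces \(d^{1/4}T^{3/4}\), absorbed into \(d^{1/2}T^{3/4}\), and \(d^{1/12}T^{5/6}\). Collecting terms gives the displayed violation bound.

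The final claim follows by fixing \(d\) and \(\Lambda\): the regret rates are \(\widetilde O(\sqrt T)\) and \(\widetilde O(T^{2/3})\). There is no real obstacle; the main care point is the bookkeeping of the square root of \(\Delta_\hob\) in the violation term. I also need to confirm that the grid term \(T/K\) is dominated under each stated choice of \(K\).
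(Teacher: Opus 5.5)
Your proposal is correct and takes the same route as the paper. The paper obtains the corollary by substituting the feedback-specific bounds on \(\Delta_\hob\) and \(H_\hob\) (from Section~\ref{sec:hob_estimation_widths} and Lemma~\ref{lem:bounded_explore}), together with the stated choices of \(K\), into Theorem~\ref{thm:ros_regret_violation}, and your bookkeeping checks out: this includes the \(\Delta_\hob^{1/2}T^{3/4}\) term producing \(d^{1/12}T^{5/6}\) under binary feedback and \(T/K\) being dominated in both modes.
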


\section{Numerical Experiments}
\label{sec:numerical_experiments}

We evaluate the constrained bidding algorithms on semi-synthetic experiments
built from the iPinYou real-time bidding dataset~\citep{zhang2014real}. We use
four Season~3 campaigns with identifiers 2259, 2261, 2821, and 2997. Contexts
are built from auction-side categorical fields, excluding click and price
fields, and mapped to normalized hashed representations of dimension \(64\) for
full information and \(8\) for binary feedback. Since iPinYou logs come from
second-price auctions, we use the logged bid price as a proxy for \(v_{t,1}\)
and generate latent uplift and \(\hob\) from controlled contextual linear
models:
\[
    v_t=\Pi_{[0,0.8]}(\theta_*^\top x_t),
    \qquad
    v_{t,0}=v_{t,1}-v_t,
    \qquad
    m_t=\Pi_{[0,1]}(\phi_*^\top x_t+\xi_t).
\]
\DIFadd{For every observation retained in the experiments, the normalized
logged bid-price proxy satisfies \(v_{t,1}\in[0.8,1]\). Because the generated
uplift satisfies \(v_t\in[0,0.8]\), the construction ensures
\(v_{t,0}=v_{t,1}-v_t\in[0,1]\). Thus both potential outcomes satisfy the
boundedness assumption without additional projection.}
All curves average over 10 seeds. We compare Dual-LTE with a single-branch
optimistic baseline (1UCB), a plug-in primal-dual baseline, and an
\(\varepsilon\)-greedy baseline. In the Budget experiments, Budget-PD
implements the budget primal-dual rule of \citet{wang2023learning}, whereas
RoS-PD implements the RoS primal-dual rule of \citet{li2025no}.
Both plug-in baselines replace the unknown \(v_t\) by the current estimate
\(\widehat v_t=\widehat\theta_t^\top x_t\). All baselines use the same value
and \(\hob\) estimators and bid grid across methods.

\subsection{Full-information Feedback}
\label{sec:ipinyou_full_information_experiments}

In full-information feedback, the learner observes the realized \(\hob\) after
each auction. For the Budget experiments, we set \(B=T/4\), use \(T=30000\),
and measure regret against the budget-feasible benchmark.

\begin{figure}[!t]
    \vspace{-0.3em}
    \centering
    \includegraphics[width=\textwidth]{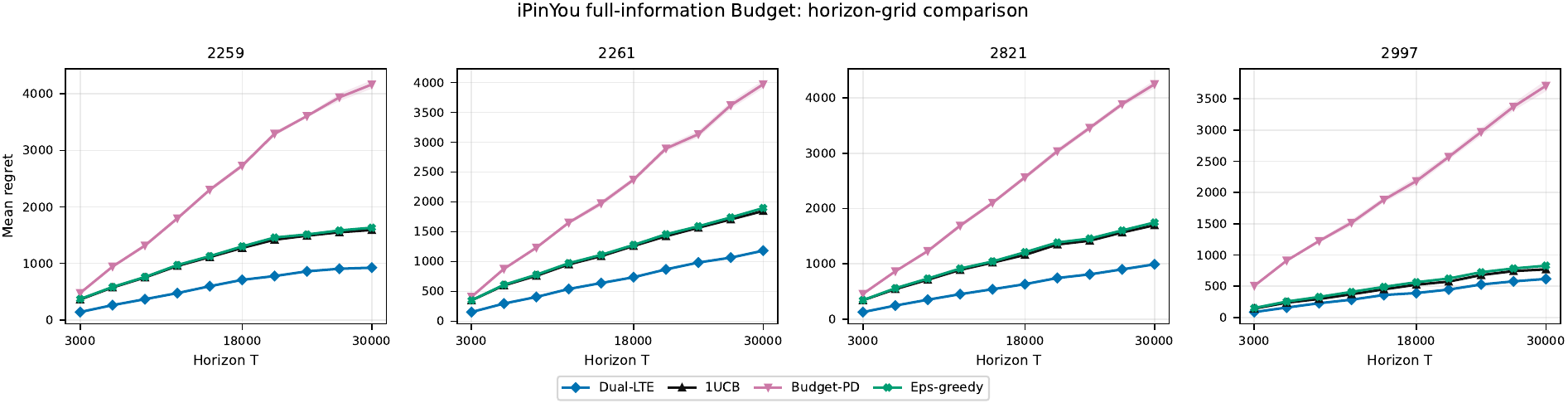}
    \vspace{-0.5em}
    \caption{Full-information Budget regret on four iPinYou campaigns. Curves
    show means over 10 seeds, with shaded bands showing \(\pm 1\) standard
    error.}
    \label{fig:budget_ipinyou_full}
\end{figure}

Figure~\ref{fig:budget_ipinyou_full} shows that Dual-LTE obtains the lowest
final regret on all four campaigns, reflecting the benefit of pairing value
learning with the budget shadow price.
Although the regret panels do not directly display budget usage, the recorded
spend ratios at \(T=30000\) show that Dual-LTE nearly depletes the budget on
all four campaigns, with spend ratios between \(0.989\) and \(0.992\). By
contrast, Budget-PD substantially under-spends, with spend ratios between
\(0.325\) and \(0.385\), which explains its larger regret. The improvement is
therefore not only due to value learning, but also to using the budget shadow
price when deciding whether and how to explore.

\smallskip
\noindent\textit{RoS constraint.}\quad
We next impose an RoS target \(\kappa=1\), so feasibility is governed by the
RoS slack defined in Section~\ref{sec:formulation}. Accordingly, regret is
measured relative to the RoS-feasible benchmark, and violation is measured as
the cumulative RoS constraint violation. The plug-in primal-dual baseline in
this setting is RoS-PD, which uses the algorithm of \citet{li2025no} with
\(v_t\) replaced by \(\widehat v_t\).

\begin{figure}[!htbp]
    \vspace{-0.3em}
    \centering
    \includegraphics[width=\textwidth]{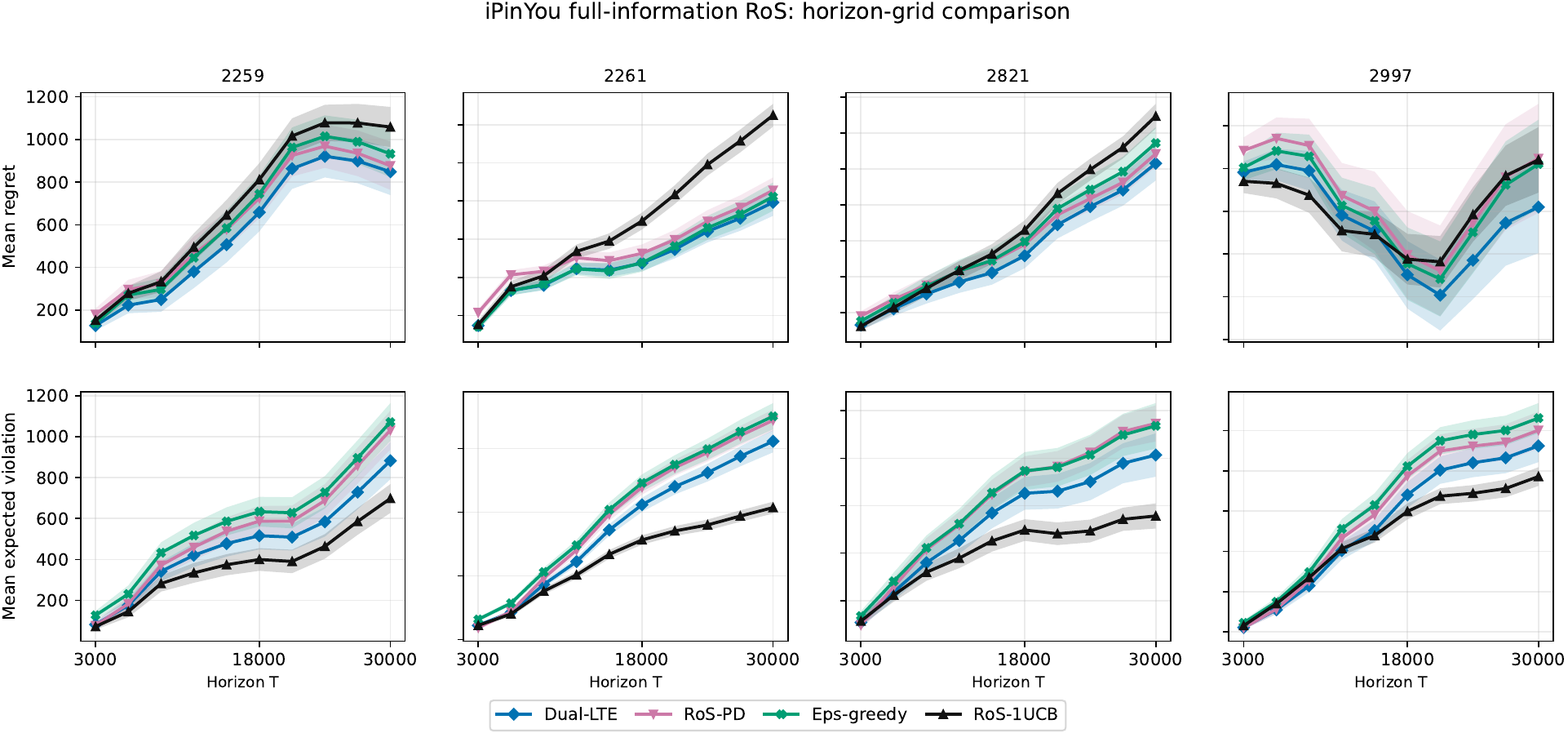}
    \vspace{-0.5em}
    \caption{Full-information RoS regret and expected violation on four iPinYou
    campaigns. Curves show means over 10 seeds, with shaded bands showing
    \(\pm 1\) standard error.}
    \label{fig:ros_ipinyou_full}
\end{figure}

Figure~\ref{fig:ros_ipinyou_full} shows that Dual-LTE has the lowest regret on
all four campaigns. The tuned 1UCB baseline is more conservative and has lower
violation in this instance, so the full-information RoS results illustrate a
regret--violation tradeoff rather than uniform dominance on both metrics.
This is consistent with the role of the RoS multiplier: a more conservative
policy can protect the constraint by forgoing value, while Dual-LTE uses the
dual-aware Lagrangian to trade off value learning and RoS slack.

\subsection{Binary Feedback}
\label{sec:ipinyou_binary_feedback_experiments}

We next repeat the experiments under binary feedback, where the learner observes
only win/loss outcomes and estimates the \(\hob\) distribution using
Section~\ref{sec:bin_hob_est}.

\smallskip
\noindent\textit{Budget constraint under binary feedback.}\quad
For the Budget experiments, we keep \(B=T/4\), use horizons
\(9000,18000,\ldots,90000\), and compare against the same Budget baselines,
now with the binary \(\hob\) estimator.

\begin{figure}[!htbp]
    \vspace{-0.3em}
    \centering
    \includegraphics[width=\textwidth]{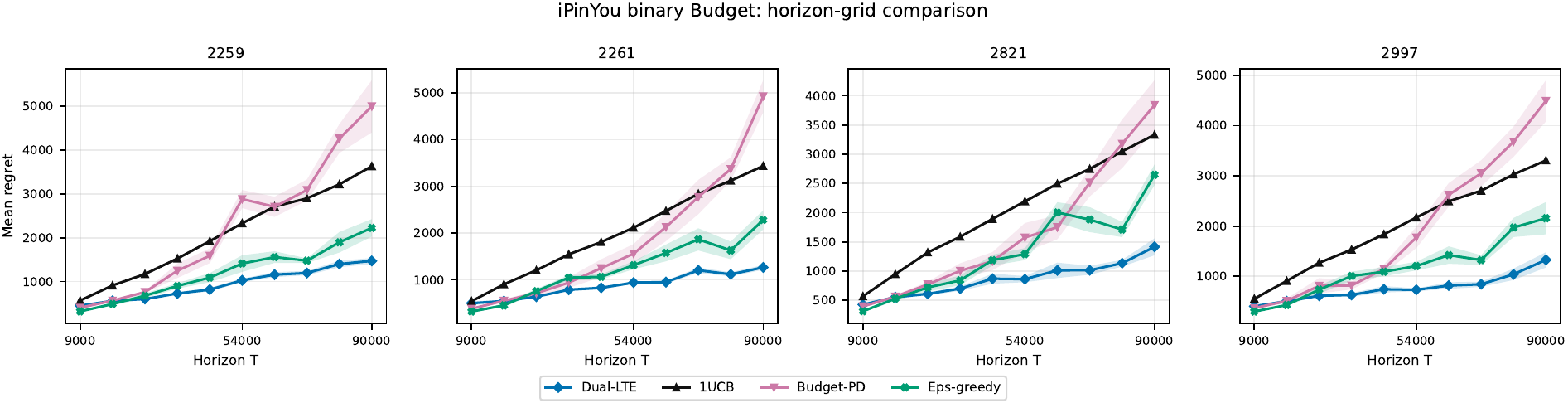}
    \vspace{-0.5em}
    \caption{Binary-feedback Budget regret on four iPinYou campaigns. Curves
    show means over 10 seeds, with shaded bands showing \(\pm 1\) standard
    error.}
    \label{fig:budget_ipinyou_binary}
\end{figure}

Figure~\ref{fig:budget_ipinyou_binary} shows that Dual-LTE has the lowest
regret on all four campaigns.
The binary estimator is less informative than full-information feedback, so
early exploration is more visible in the regret curves. Nevertheless, the same
constraint-aware exploration rule directs exploration toward bids that remain
aligned with the current budget state.

\smallskip
\noindent\textit{RoS constraint under binary feedback.}\quad
\DIFadd{For RoS, binary feedback is particularly demanding because inaccurate
HoB estimates early in the horizon can generate accumulated RoS debt. We
therefore evaluate Dual-LTE-SR, an experimental implementation heuristic that
augments Dual-LTE with HoB calibration and safe recovery. It initially uses
randomized exploration bids to initialize the value and HoB estimators. It then
continues HoB calibration using a conservative dual-aware bidding rule and
imposes a lower bound on the RoS dual variable to discourage bids that would
increase the accumulated RoS deficit. After calibration, it returns to the
Dual-LTE bid-selection rule while retaining the recovery safeguard. Dual-LTE-SR
is used only in the numerical experiments and is not part of
Algorithm~\ref{alg:dual-aware-lte} or covered by its theoretical guarantees.}
We compare against the same RoS baselines, including RoS-PD with the binary
\(\hob\) estimator and plug-in value estimate \(\widehat v_t\).

\begin{figure}[!htbp]
    \vspace{-0.3em}
    \centering
    \includegraphics[width=\textwidth]{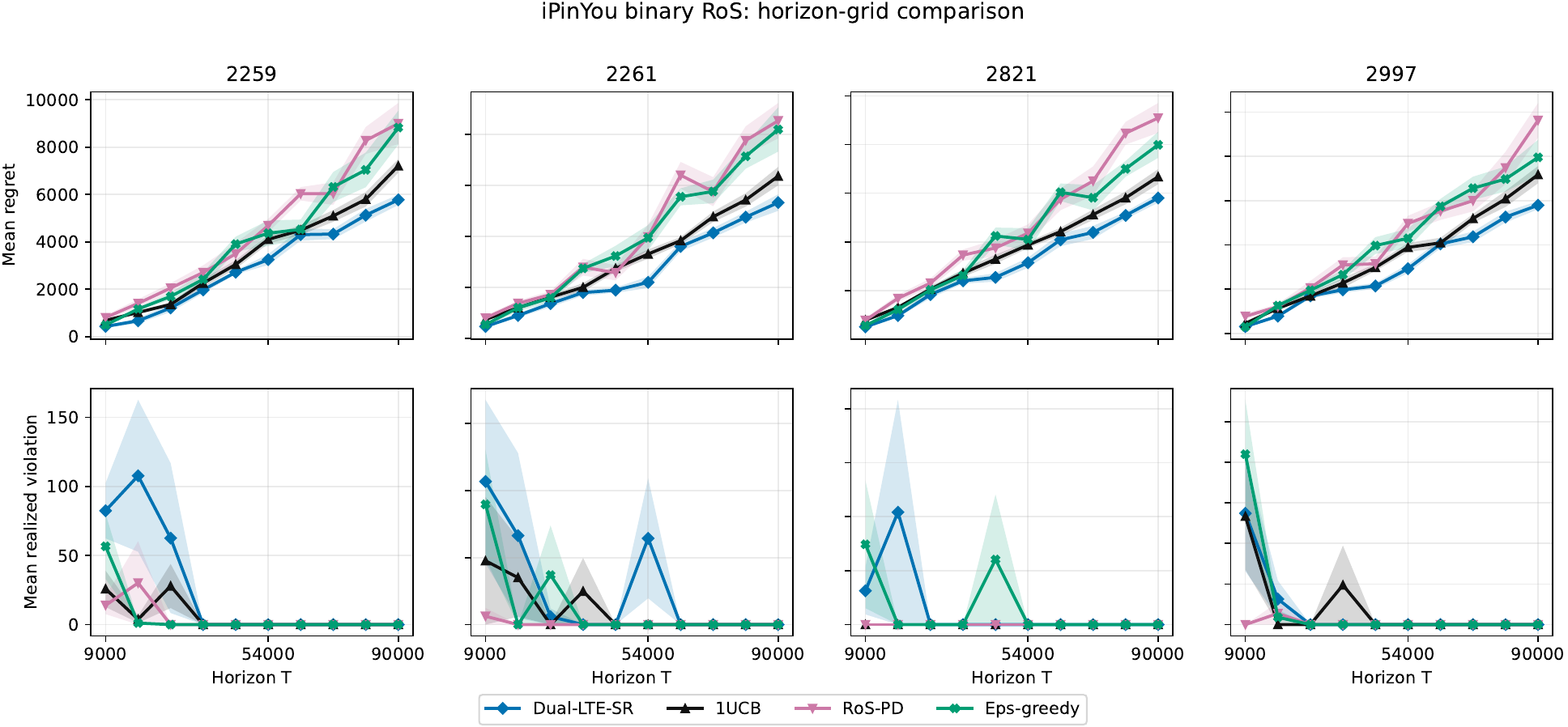}
    \vspace{-0.5em}
    \caption{Binary-feedback RoS regret and realized violation on four iPinYou
    campaigns. Curves show means over 10 seeds, with shaded bands showing
    \(\pm 1\) standard error. Dual-LTE-SR includes the safe-recovery calibration
    phase.}
    \label{fig:ros_ipinyou_binary}
\end{figure}

Figure~\ref{fig:ros_ipinyou_binary} shows that, as the horizon grows, all
methods eventually drive realized RoS violation to zero, while Dual-LTE-SR
achieves the lowest average regret.
The safe-recovery phase is useful because binary HoB feedback delays reliable
CDF calibration. During this early period, aggressive exploitation can create
RoS debt that is difficult to repay. Dual-LTE-SR uses the calibration phase to
stabilize the dual state before switching to the ordinary Lagrangian
comparison.

\section{Conclusion}
\label{sec:conclusion}

\DIFadd{This paper develops a dual-aware learning framework for automated
bidding in repeated first-price auctions under budget and RoS constraints. The
framework integrates treatment-effect learning with HoB estimation and
coordinates exploration with the current constraint pressure. Our analysis
shows how feedback quality affects achievable performance: the framework
attains \(\widetilde O(\sqrt T)\) regret under full-information HoB feedback and
\(\widetilde O(T^{2/3})\) regret under binary feedback, together with the
corresponding constraint guarantees. Semi-synthetic experiments using iPinYou
auction covariates show that the proposed approach achieves lower regret than
the considered baselines across the budget and RoS settings while maintaining
favorable constraint performance.}

\DIFadd{Operationally, our results suggest that auto-bidders should not
separate value learning from budget pacing or RoS control. Exploration should
instead respond jointly to estimation uncertainty and the current shadow price
of the advertiser's constraint. This coordination enables the bidder to gather
information when it is most valuable while limiting decisions that could
create budget inefficiency or accumulated RoS deficits.}

\bibliography{ref}

\ECSwitch
\ECHead{Electronic Companion}

\section{Auxiliary Estimation Results}
\label{app:estimation}

\subsection{Proof of Lemma \ref{lem:full_hob_est}}
\begin{proof}
 \DIFaddbegin \DIFadd{It suffices to consider \(t\ge2\), since the claim at \(t=1\) follows from
the unit range of a CDF. Define
}\[
    \DIFadd{e_\tau=(\hphi_\tau-\phi_*)^\top x_\tau.
}\]
\DIFadd{Because }\DIFaddend \(\hphi_\tau\) is computed from  \DIFaddbegin \DIFadd{observations before round \(\tau\),
the standard self-normalized inequality for ridge regression
gives}\DIFaddend , with probability at least
 \DIFaddbegin \DIFadd{\(1-\frac{1}{2}T^{-3}\),
}\begin{equation}\DIFadd{\label{eq:full_shift_confidence}
    |e_\tau|
    \le
    \beta_\hob\|x_\tau\|_{A_{\tau,\full}^{-1}}
    \qquad\text{simultaneously for every }\tau\in[T].
}\end{equation}\DIFaddend 
 \DIFaddbegin \DIFadd{Since \(\phi_*^\top x_\tau=\E[m_\tau\mid x_\tau]\in[0,1]\), and
\(A_{\tau,\full}\succeq\lambda_\phi I\), the event
\eqref{eq:full_shift_confidence} implies that the proxy residuals
\(\widehat\xi_\tau=m_\tau-\hphi_\tau^\top x_\tau\) and all evaluation
thresholds \(b-\hphi_t^\top x_t\) lie in
}\[
    \DIFadd{\mathcal U_T
    =\left[-1-\frac{\beta_\hob}{\sqrt{\lambda_\phi}},
       1+\frac{\beta_\hob}{\sqrt{\lambda_\phi}}\right].
}\]\DIFaddend 
 \DIFaddbegin \DIFadd{For fixed \(u\in\mathcal U_T\), predictability of \(e_\tau\) and
Assumption~\ref{ass:cond_ind} give
}\[
    \DIFadd{\E\!\left[\1[\widehat\xi_\tau\le u]
       \mid\mathcal H_{\tau-1},x_\tau\right]
    =G(u+e_\tau).
}\]\DIFaddend 
 \DIFaddbegin \DIFadd{Azuma--Hoeffding concentration, calibrated at each grid point and followed by
}\DIFaddend a union bound over  \DIFaddbegin \DIFadd{\(t\le T\) and a \(T^{-1/2}\)}\DIFaddend -grid of  \DIFaddbegin \DIFadd{\(\mathcal U_T\),
therefore gives, with probability at least \(1-\frac{1}{2}T^{-3}\),
}\begin{equation}\DIFadd{\label{eq:full_grid_concentration}
    \left|
    \frac{1}{t-1}\sum_{\tau<t}
       \left(\1[\widehat\xi_\tau\le u]-G(u+e_\tau)\right)
    \right|
    \le C\sqrt{\frac{\log T}{t-1}}
}\end{equation}
\DIFadd{simultaneously for all grid points and all \(t\ge2\).
}

\DIFadd{The bias induced by the predictable shifts is uniform in \(u\). Indeed,
Lipschitzness of \(G\), \eqref{eq:full_shift_confidence}, and the predictable
elliptical-potential bound give
}\begin{align}
\DIFadd{\left|
\frac{1}{t-1}\sum_{\tau<t}G(u+e_\tau)-G(u)
\right|
}&\DIFadd{\le
\frac{L\beta_\hob}{t-1}
\sum_{\tau<t}\|x_\tau\|_{A_{\tau,\full}^{-1}}
\notag}\\
&\DIFadd{\le
L\beta_\hob
\sqrt{\frac{2d\log(1+(t-1)/(\lambda_\phi d))}{t-1}}.
\label{eq:full_proxy_bias}
}\end{align}
\DIFadd{To extend the grid-point bound, fix arbitrary \(u\in\mathcal U_T\), and let
\(u^-\le u\le u^+\) be adjacent grid points. Monotonicity gives
}\DIFaddend \[
     \DIFaddbegin \hG\DIFadd{_t(u^-)}\DIFaddend \le \DIFaddbegin \hG\DIFadd{_t(u)}\DIFaddend \le \DIFaddbegin \hG\DIFadd{_t(u^+),
    \qquad
    G(u^-)}\le \DIFadd{G(u)}\le \DIFadd{G(u^+).
}\DIFaddend \]
 \DIFaddbegin \DIFadd{Since \(G\) is \(L\)-Lipschitz and \(u^+-u^-\le T^{-1/2}\), combining
\eqref{eq:full_grid_concentration} and \eqref{eq:full_proxy_bias} yields
}\begin{equation}\DIFadd{\label{eq:full_uniform_cdf}
    \sup_{u\in\mathcal U_T}|\hG_t(u)-G(u)|
    \le
    C\beta_\hob\sqrt{\frac{d\log T}{t-1}},
}\end{equation}

\DIFadd{Absorbing the interpolation and lower-order terms into the universal constant
\(C\) yields the stated bound.}

\DIFadd{Finally, for every \(b\in[0,1]\),
}\DIFaddend \begin{align*}
 \DIFaddbegin \DIFadd{|F_t(b)-}\hF\DIFadd{_t(b)|
}\DIFaddend & \le
 \DIFaddbegin \DIFadd{|G(b-\phi_*^\top }\DIFaddend x_t \DIFaddbegin \DIFadd{)-G(b-}\hphi\DIFadd{_t^\top x_t)|}\\
&\DIFadd{\quad}\DIFaddend +
 \DIFaddbegin \DIFadd{|G(b-}\hphi\DIFadd{_t^\top x_t)-}\hG\DIFadd{_t(b-}\hphi\DIFadd{_t^\top x_t)|}\DIFaddend \\
&\le
L\beta_\hob\|x_t\|_{A_{t,\full}^{-1}}
+ \DIFaddbegin \DIFadd{C}\DIFaddend \beta_\hob \DIFaddbegin \DIFadd{\sqrt{\frac{d\log T}{t-1}}}\DIFaddend .
\end{align*}
\DIFaddbegin \DIFadd{Since \((t-1)^{-1}\le2t^{-1}\) for \(t\ge2\), }\DIFaddend

\DIFadd{absorbing this factor into the constant implicit in \(\beta_\hob\) yields the stated bound.}
\end{proof}

\subsection{Proof of Lemma \ref{lem:bin_hob_est}}
\begin{proof}
\DIFadd{Let \(\mathcal G_{\mathrm{shift}}\) denote the self-normalized shift
event in \eqref{eq:binary_shift_event}. On this event, if}
\(\tau\notin\mathcal E_x\), then
\begin{equation}\label{eq:trusted_residual_error}
    \abs*{\widehat u_\tau-(b_\tau-\phi_*^\top x_\tau)}
    =\abs*{(\hphi_\tau-\phi_*)^\top x_\tau}
    \le a_T.
\end{equation}
The membership indicator
\(q_{\tau,j}=\1[\tau\notin\mathcal E_x,\widehat u_\tau\in I_j]\) is
\(\mathcal F_\tau^-\)-measurable, and
\(q_{\tau,j}\bigl(o_\tau-G(b_\tau-\phi_*^\top x_\tau)\bigr)\) is a bounded
martingale difference. \DIFadd{Define \(\mathcal G_{\mathrm{bin}}\) as the
event on which, simultaneously for all \(t,j\) with \(n_{t,j}>0\),
\begin{equation}\label{eq:adaptive_bin_concentration}
    \left|\overline g_{t,j}
    -\frac{1}{n_{t,j}}
       \sum_{\substack{\tau<t:\,\tau\notin\mathcal E_x,\\
                       \widehat u_\tau\in I_j}}
       G(b_\tau-\phi_*^\top x_\tau)\right|
    \le r_{t,j}.
\end{equation}
For each \(j\), enumerate its predictable}
selection times as \(\tau_{j,1}<\tau_{j,2}<\cdots\). Azuma--Hoeffding for each
prefix \(n\le T\), followed by a union bound over \((j,n)\), gives
\DIFadd{\(\Pr(\mathcal G_{\mathrm{bin}}^c)\le\frac{1}{2}T^{-3}\). Indeed,}
\(r_{t,j}=\sqrt{4\ell_T/(n+1)}\ge\sqrt{2\ell_T/n}\), and
\(\ell_T=\log(4JT^4)\) \DIFadd{gives the stated failure probability. This
argument allows adaptive bids and bin assignments because their membership
indicators are predictable.}

\DIFadd{On \(\mathcal G_{\mathrm{shift}}\cap\mathcal G_{\mathrm{bin}}\), fix
\(t,j\) with \(n_{t,j}>0\).} For every summand in bin \(j\),
\eqref{eq:trusted_residual_error}, the bin diameter, and Lipschitzness give
\[
    \abs*{G(b_\tau-\phi_*^\top x_\tau)-G(u_j)}
    \le L(a_T+\operatorname{diam}(I_j))\le2La_T.
\]
Combining this with \eqref{eq:adaptive_bin_concentration} yields
\begin{equation}\label{eq:bin_center_confidence}
    \abs*{\overline g_{t,j}-G(u_j)}\le r_{t,j}+2La_T.
\end{equation}
Consequently
\[
    G(u_j)-2r_{t,j}-4La_T
    \le\underline g_{t,j}\le G(u_j).
\]
The upper inequality also holds for an empty bin because its lower estimate is
zero. Since \(G\) is nondecreasing and the representatives are ordered, for
\(u\in I_j\),
\[
    G(u)-2r_{t,j}-5La_T
    \le \widehat g_{t,j}
    =\max_{k\le j}\underline g_{t,k}
    \le G(u)+La_T.
\]
Thus
\begin{equation}\label{eq:monotone_bin_confidence}
    \abs*{\hG_t(u)-G(u)}\le2r_{t,j}+5La_T.
\end{equation}
When \(n_{t,j}=0\), the right side before capping exceeds one for \(T\ge2\),
so the same statement follows from the unit range of a CDF.

Finally, for \(b\in\calB\), use \eqref{eq:monotone_bin_confidence} at
\(u=b-\hphi_t^\top x_t\) and the shift event to obtain
\[
\begin{aligned}
    \abs*{F_t(b)-\hF_t(b)}
    &\le L\abs*{(\hphi_t-\phi_*)^\top x_t}
       +2r_{t,j_t(b)}+5La_T\\
    &\le L\beta_\hob\|x_t\|_{A_{t,\bin}^{-1}}
       +2r_{t,j_t(b)}+5La_T.
\end{aligned}
\]
Capping this expression at one preserves validity and gives
\eqref{eq:finite_binary_hob_width}.
\DIFadd{The events \(\mathcal G_{\mathrm{shift}}\) and
\(\mathcal G_{\mathrm{bin}}\) each fail with probability at most
\(\frac{1}{2}T^{-3}\). A union bound therefore gives the stated probability
of at least \(1-T^{-3}\).}
\end{proof}

\subsection{Derivation of the HoB Estimation Widths}
\label{app:hob_width_bounds}

For full HoB feedback, after excluding the CDF-width exploration rounds,
the standard leverage and empirical-CDF arguments give
\(D_{\hob,T}^2=\widetilde O(d)\) and
\(\Delta_\hob=\widetilde O(\sqrt d)\).

For binary feedback, \(\mathcal E_\hob=\mathcal E_x\cup\mathcal E_\delta\).
Every \(t\in\mathcal I_{\rm WLS}\) occurs when the shift-uncertainty condition
\eqref{eq:bin_hob_width_test} is not satisfied, so
\(L\beta_\hob\|x_t\|_{A_{t,\bin}^{-1}}\le La_T\). Moreover, every such round
increments the trustworthy count of its selected bin. Consequently, if \(M_j\)
is the number of WLS-eligible visits to bin \(j\),
\begin{equation}\label{eq:binary_hob_harmonic_squares}
    \sum_{t\in\mathcal I_{\rm WLS}}
       \frac{1}{N_{t,j_t(b_t)}}
    \le
    \sum_{j=1}^J\sum_{n=1}^{M_j}\frac{1}{n}
    \le J\bigl(1+\log(T+1)\bigr).
\end{equation}
Using \eqref{eq:finite_binary_hob_width},
\((a+b+c)^2\le3(a^2+b^2+c^2)\), and
\(a_T^2=d^{1/3}T^{-2/3}\), we obtain
\begin{equation}\label{eq:binary_cumulative_width_squared}
\begin{aligned}
    D_{\hob,T}^2
    &\le
    78L^2T a_T^2
    +48\ell_T
       \sum_{t\in\mathcal I_{\rm WLS}}
       \frac{1}{N_{t,j_t(b_t)}}\\
    &\le
    78L^2d^{1/3}T^{1/3}
    +48\ell_TJ\bigl(1+\log(T+1)\bigr)\\
    &=\widetilde O\!\left(d+L^2d^{1/3}T^{1/3}\right).
\end{aligned}
\end{equation}
Therefore, one may take $\Delta_\hob
    =\widetilde O\!\left(\sqrt d+L d^{1/6}T^{1/6}\right)$.
Under the standing convention \(L=O(1)\), this becomes
\(\widetilde O(\sqrt d+d^{1/6}T^{1/6})\). A newly accessible empty bin is
assigned to \(\mathcal E_\delta\), accounted for separately, and excluded from
\(D_{\hob,T}\). Its query increases the count used on later rounds.

\subsection{Proof of Lemma \ref{lem:bias-variance}}
\begin{proof}
\textbf{(Bias)}
Denote the bias by $\zeta_t(b) = \E_t[\widetilde{e}_t(b)] - \theta_*^{\top}x_t$. 
By taking expectation with respect to the HoB $m_t$, we have
\begin{align*}
\E_{m_t}\parq*{\widetilde{e}_t(b)} - \parr*{v_{t,1}-v_{t,0}} &= \parr*{\frac{F_t(b)}{\widehat{F}_t(b)}-1}v_{t,1} - \parr*{\frac{1-F_t(b)}{1-\widehat{F}_t(b)} -1}v_{t,0}.
\end{align*}
By the bounded potential-outcome condition, Jensen's inequality, and the
triangle inequality give, for  \DIFaddbegin \DIFadd{every }\DIFaddend \(b\) \DIFaddbegin \DIFadd{satisfying \(0<\hF_t(b)<1\)}\DIFaddend ,
\begin{align*}
\abs*{\E\parq*{\widetilde{e}_t(b)} - \theta_*^{\top}x_t} &= \abs*{\E_{v_{t,1}, v_{t,0}}\parq*{\E_{m_t}\parq*{\widetilde{e}_t(b)} - (v_{t,1}-v_{t,0})}}\\
&\leq \E_{v_{t,1}, v_{t,0}}\abs*{\E_{m_t}\parq*{\widetilde{e}_t(b)} - (v_{t,1}-v_{t,0})}\\
&\leq \abs*{\frac{F_t(b)}{\widehat{F}_t(b)}-1} + \abs*{\frac{1-F_t(b)}{1-\widehat{F}_t(b)} -1}\le 2\delta_t(b)\sigma_t(b)^2.
\end{align*}

\textbf{(Variance)} 
We bound the variance of \(\widetilde{e}_t(b)\) by its second moment. The
bounded potential-outcome condition gives
\begin{align*}
\Var(\widetilde{e}_t(b)) \le \E[\widetilde{e}_t(b)^2] &\overset{\text{(a)}}{\le} \frac{2\E[\1[b\ge m_t]]}{\widehat{F}_t(b)^2} + \frac{2\E[\1[b<m_t]]}{(1-\widehat{F}_t(b))^2}\\
&\le \frac{2}{\widehat{F}_t(b)^2} + \frac{2}{(1-\widehat{F}_t(b))^2}\le 4\sigma_t(b)^4
\end{align*}
where (a) follows from the AM-GM inequality $(a+b)^2 \le 2a^2 + 2b^2$.
\end{proof}

\subsection{Proof of Lemma \ref{lem:wls}}

\begin{proof}
Fix \(t\) \DIFaddbegin \DIFadd{. Ordinary rounds with
\(\hF_\tau(b_\tau)\in\{0,1\}\) have zero WLS weight and therefore contribute
nothing to \(A_t\) or \(y_t\). Let
\(\mathcal I_t\) denote the remaining rounds in
\(\mathcal I_{{\rm WLS},t}\)}\DIFaddend . For each
\(\tau\in\mathcal I_t\), define
\(\varepsilon_\tau=\te_\tau-\E_\tau[\te_\tau]\) and
\(\zeta_\tau=\E_\tau[\te_\tau]-\theta_*^\top x_\tau\). Then
\(\te_\tau=\theta_*^\top x_\tau+\varepsilon_\tau+\zeta_\tau\) and
\(\E_\tau[\varepsilon_\tau]=0\).
On endpoint value-exploration rounds, the estimator in
Section~\ref{sec:value_est} is unbiased, so \(\zeta_\tau=0\),
\(\sigma_\tau=1\), and \(\varepsilon_\tau\) is bounded. On every other
WLS-eligible round, Lemma~\ref{lem:bias-variance} gives
\[
    \abs*{\zeta_\tau}\le2\delta_\tau(b_\tau)\sigma_\tau^2,
    \qquad
    \Var_\tau(\te_\tau)\le4\sigma_\tau^4.
\]
No observation from \(\mathcal E_\hob\) appears in these definitions.

The ridge normal equations imply
\[
\htheta_t-\theta_*
=A_t^{-1}\!\left(
    \sum_{\tau\in\mathcal I_t}\sigma_\tau^{-4}x_\tau\varepsilon_\tau
   +\sum_{\tau\in\mathcal I_t}\sigma_\tau^{-4}x_\tau\zeta_\tau
   -\theta_*\right).
\]
For an ordinary IPW round,
\[
    \sigma_\tau^{-2}\te_\tau
    =(1-\hF_\tau(b_\tau))\1[b_\tau\ge m_\tau]v_{\tau,1}
    -\hF_\tau(b_\tau)\1[b_\tau<m_\tau]v_{\tau,0}
\]
lies in \([-1,1]\). The endpoint estimator is bounded by an absolute constant
as well. Therefore \(\sigma_\tau^{-2}\varepsilon_\tau\) is conditionally
sub-Gaussian with an absolute constant parameter. The self-normalized
martingale inequality applied only to
\(\{\sigma_\tau^{-2}x_\tau:\tau\in\mathcal I_t\}\) gives, simultaneously in
\(t\),
\[
    \left\|\sum_{\tau\in\mathcal I_t}
       \sigma_\tau^{-4}x_\tau\varepsilon_\tau\right\|_{A_t^{-1}}
    \le C_1\sqrt{d\log(T/\delta)}.
\]
For the predictable bias, the matrix inequality
\(X^\top(I+XX^\top)^{-1}X\preceq I\) gives
\[
\begin{aligned}
    \left\|\sum_{\tau\in\mathcal I_t}
       \sigma_\tau^{-4}x_\tau\zeta_\tau\right\|_{A_t^{-1}}
    &\le
    \sqrt{\sum_{\tau\in\mathcal I_t}
       (\sigma_\tau^{-2}\zeta_\tau)^2}\\
    &\le C_2\sqrt{\sum_{\tau\in\mathcal I_t}
       \delta_\tau(b_\tau)^2}.
\end{aligned}
\]
Finally \(A_t\succeq I\) and \(\|\theta_*\|_2\le1\). Combining the three
terms proves the stated bound. In binary mode,
\eqref{eq:binary_cumulative_width_squared} bounds the last display before this
lemma is invoked, so the WLS argument does not feed back into either HoB
exploration count.
\end{proof}

\section{Auxiliary Optimization and Online-Learning Results}
\label{app:online-learning}

\subsection{Proof of Lemma \ref{lem:lambda_bound}}
\begin{proof}
For any \(\lambda\ge0\),
\(\overline D(\lambda)\ge
\E_{x_t}[r_t(\pi^{\mathrm{sl}}(x_t))
+\lambda g_t(\pi^{\mathrm{sl}}(x_t))]
\ge\lambda\delta_S\).
Moreover,
\(\overline D(0)=\E_{x_t}[\max_{z\in\calC_t}r_t(z)]\le1\).
If a minimizer satisfied \(\lambda_*>1/\delta_S\), then
\(\overline D(\lambda_*)>1\ge\overline D(0)\), contradicting its optimality.
\end{proof}

\subsection{Proof of Lemma \ref{lem:branching_control}}

\smallskip
\noindent\textit{Proof structure.}\quad
The proof of Lemma~\ref{lem:branching_control} is split into three auxiliary
claims, one for each branch of Algorithm~\ref{alg:ucb_branching}. The final
paragraph of this subsection then assembles these claims to prove the lemma.

\begin{lemma}\label{lem:branching_small_est_err}
Let $c=\frac{1}{20L}$ and $[b_{\mathrm{left}},b_{\mathrm{right}}]$ be the interval found in Algorithm \ref{alg:ucb_selection}. When $c^2\rho_t \ge R_t + \Delta_t(\widehat{v}_t + R_t+\rho_t)$, it holds that
\begin{enumerate}
    \item  \DIFaddbegin \DIFadd{$\argmax_{b^*\in[0,1]}F_t(b^*)(v_t-\rho_t b^*)
    \subseteq[b_{\mathrm{left}},b_{\mathrm{right}}]$}\DIFaddend .

    \item
    $\widehat{F}_t(b_{\mathrm{right}}) - \widehat{F}_t(b_{\mathrm{left}}) \le 1 - 2c$.
\end{enumerate}
where $R_t\ge \abs*{\widehat{v}_t-v_t}$  \DIFaddbegin \DIFadd{and
}\DIFaddend $\Delta_t = \max_{b\in\calB}\delta_t(b)$ \DIFaddbegin \DIFadd{.
}\DIFaddend \end{lemma}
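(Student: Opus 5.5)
Throughout, write $f(b)=F_t(b)(v_t-\rho_t b)$ for the true objective and $\hat f_\pm(b)=\hF_t(b)(\widehat v_t\pm2\rho_t c-\rho_t b)$ for the two perturbed empirical objectives maximized in Algorithm~\ref{alg:ucb_selection}. The plan is to derive both claims from a monotone-comparative-statics argument on these one-dimensional objectives.

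\textbf{Part 1 (localization).} The key structural fact is that $\{F(b)(w-\rho b)\}$ has increasing differences in $(b,w)$: for $b'>b$ and $w'>w$,
\[
F(b')(w'-\rho b')-F(b)(w'-\rho b)-\bigl[F(b')(w-\rho b')-F(b)(w-\rho b)\bigr]=(w'-w)\bigl(F(b')-F(b)\bigr)\ge0 .
\]
Hence any maximizer at a larger value $w$ has a weakly larger win probability. The estimated objective is only close to the true one. Under the hypothesis $c^2\rho_t\ge R_t+\Delta_t(\widehat v_t+R_t+\rho_t)$, we have $|\hat f_0(b)-f(b)|\le \hF_t(b)R_t+\Delta_t(v_t+\rho_t)\le c^2\rho_t$ uniformly in $b$.

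We then compare. Let $b^*$ maximize $f$. Suppose some bid $b$ has $\hF_t(b)>\hF_t(b_+)+c$. Then the perturbation $+2\rho_tc$ in $\hat f_+$ gives $b$ an extra advantage over $b_+$ of $2\rho_t c\,(\hF_t(b)-\hF_t(b_+))>2\rho_tc^2$. This exceeds the total approximation slack of $2c^2\rho_t$ coming from the two estimation errors, so $b$ cannot be an optimizer of $f$. The lower side is symmetric, using $b_-$.

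For bids $b^*\in[0,1]$ off the grid, I would also use $|\hF_t-F_t|\le\Delta_t$ together with $L$-Lipschitz monotonicity, to argue that $F_t(b^*)$ lies in $[\hF_t(b_-)-c,\hF_t(b_+)+c]$ up to the slack. Monotonicity of $F_t$ in $b$ then converts this into $b^*\in[b_{\mathrm{left}},b_{\mathrm{right}}]$.

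I expect the main obstacle to be precisely this step: turning a win-probability window into a bid interval. The set $U$ is defined via $\hF_t$, which need not be strictly monotone, and the continuous maximizer $b^*$ is not on the grid. I would handle it by sandwiching $b^*$ between grid bids whose estimated CDFs stay inside the window.

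\textbf{Part 2 (width of the window).} I would show $\hF_t(b_+)-\hF_t(b_-)\le 1-4c$. The window then has width at most $1-4c+2c=1-2c$.

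To bound the gap, compare optimality of $b_+$ for $\hat f_+$ with optimality of $b_-$ for $\hat f_-$. Adding the two optimality inequalities gives
\[
4\rho_tc\,\bigl(\hF_t(b_+)-\hF_t(b_-)\bigr)\le \rho_t\bigl[b_+\hF_t(b_+)-b_-\hF_t(b_-)\bigr]+\widehat v_t\bigl[\hF_t(b_-)-\hF_t(b_+)\bigr]+\cdots .
\]
A large CDF jump between $b_-$ and $b_+$ requires a bid gap of at least $(\text{jump})/L$, since $G$ is $L$-Lipschitz. That bid gap in turn costs payment $\rho_t\cdot(\text{gap})$, which bounds the jump by roughly $4cL\cdot$const.

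With $c=1/(20L)$ the jump is a small constant, so it is at most $1-4c$. This constant bookkeeping is routine but fiddly, and I would verify it last.
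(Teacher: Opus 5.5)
Your Part~1 is the paper's argument in contrapositive form. Optimality of $b_\pm$ for the perturbed empirical scores, the uniform slack $\gamma_t=R_t+\Delta_t(\widehat v_t+R_t+\rho_t)\le c^2\rho_t$, and a strict inequality give $\hF_t(b^*)\in[\hF_t(b_-)-c,\hF_t(b_+)+c]$. Your reduction of Part~2 to $\hF_t(b_+)-\hF_t(b_-)\le 1-4c$ is also exactly the paper's.

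One caveat on Part~1: the argument only covers $b^*\in\calB$. Because $b_\pm$ maximize over $\calB$ only, the inequality $\hat f_\pm(b^*)\le\hat f_\pm(b_\pm)$ is unavailable off the grid. Under binary feedback the HoB band also holds only on $\calB$. Since $U\subseteq\calB$, nothing excludes a continuous maximizer lying between $b_{\mathrm{right}}$ and the next grid bid, and that next grid bid may already have its estimated CDF outside the window. So your sandwich step cannot be completed in general. Drop it: the paper's proof also treats $b^*$ as a grid bid, and that is all Lemma~\ref{lem:branching_control} needs, since $\calC_t^\star$ is generated by $\calB$.

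The genuine gap is the proof of $\hF_t(b_+)-\hF_t(b_-)\le 1-4c$. Adding the two optimality inequalities cancels everything except the perturbation and leaves $4\rho_tc\,\bigl(\hF_t(b_+)-\hF_t(b_-)\bigr)\ge0$, i.e.\ only monotonicity; your display is not what this gives. Write $\Delta F=\hF_t(b_+)-\hF_t(b_-)$ and $\Delta P=b_+\hF_t(b_+)-b_-\hF_t(b_-)$. Used separately, the two inequalities give only
\[
(\widehat v_t-2\rho_tc)\,\Delta F\le\rho_t\,\Delta P\le(\widehat v_t+2\rho_tc)\,\Delta F .
\]
This is compatible with $\Delta F=1$: take $\hF_t(b)=b$, $b_-=0$, $b_+=1$, $\widehat v_t=\rho_t$. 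Both pairwise inequalities hold and Lipschitzness is respected.

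The ``bid gap costs payment'' heuristic fails because $\rho_t\Delta P=\rho_tb_+\Delta F+\rho_t\hF_t(b_-)(b_+-b_-)$. The gap is charged only through $\hF_t(b_-)$, which is tiny precisely when the jump is large. The remaining term is offset by the value gain once $\widehat v_t\approx\rho_tb_+$. Nor is the jump small in general. A CDF with a long flat stretch makes $b\mapsto F_t(b)(v-\rho_tb)$ bimodal, and a value change of $4\rho_tc$ can switch modes with a jump of order one; the lemma only keeps the jump away from $1$. So no argument comparing $b_+$ with $b_-$ alone can work. In the example above, it is an intermediate bid that rules the configuration out: $b=1/2$ beats $b=1$ under $\hat f_+$ since $c<1/4$.

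The missing idea is that third comparison point, as in Lemma~\ref{lem:approx_CDF_UCB_Lip}. Suppose $\Delta F>1-4c$, so $\hF_t(b_-)<4c$ and $\hF_t(b_+)>1-4c$, and let $b_0=\inf\{b\in\calB:\hF_t(b)\ge 1/2\}$. Three facts are needed: Lipschitzness, $|\hF_t-F_t|\le\Delta_t$ on $\calB$, and a grid bid within $1/(50L)$ below $b_0$. Together they force
\[
b_+-b_0\ge a:=\frac{1/2-4c-2\Delta_t}{L}-\frac{1}{50L},
\]
and, using $F_t(1)=1$, also $b_0\le 1-\frac{1/2-\Delta_t}{L}+\frac{1}{50L}$. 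Optimality of $b_+$ against $b_0$ then gives $\widehat v_t-2\rho_tc-\rho_tb_0\ge\rho_t(2a-4c)$. Optimality of $b_-$ against $b_0$ gives $\widehat v_t-2\rho_tc-\rho_tb_0\le\frac{8c}{1-8c}\rho_tb_0$. With $c=1/(20L)$ and the bound on $b_0$, these two are incompatible.

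This argument needs two inputs absent from your plan. The first is the accuracy $\Delta_t\le c^2=1/(400L^2)$, which does follow from your hypothesis when $\rho_t>0$, since $\Delta_t\rho_t\le c^2\rho_t$; it is also enforced by the CDF-width gate in Algorithm~\ref{alg:dual-aware-lte}. The second is a mesh condition on $\calB$ guaranteeing the grid bid just below $b_0$.
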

\begin{proof}
Write $\gamma_t
    =
    R_t+\Delta_t(\widehat v_t+R_t+\rho_t)$,
so the lemma assumption gives \(\gamma_t\le c^2\rho_t\). Recall that
\[
    b_{\pm}
    \in
    \argmax_b\widehat F_t(b)
    \parr*{\widehat v_t\pm 2\rho_tc-\rho_tb},
\]
and
\[
    U
    =
    \bra*{
        b:\widehat F_t(b_-)-c
        \le \widehat F_t(b)
        \le \widehat F_t(b_+)+c
    }.
\]
Let \(b_t^*\in\argmax_b F_t(b)(v_t-\rho_tb)\). We first show that \(b_t^*\)
is not below the returned candidate interval. Using the value and HoB
confidence bounds, the optimality of \(b_-\) for the empirical score with the
downward value perturbation, and \(b_t^*\)-optimality for the population score,
we obtain
\begin{align*}
    F_t(b_t^*)(v_t-\rho_tb_t^*)
    &\le
    F_t(b_-)(v_t-\rho_tb_-)
    +2\gamma_t
    +2\rho_tc\parr*{\widehat F_t(b_t^*)-\widehat F_t(b_-)} .
\end{align*}
Because \(b_t^*\) maximizes the population score,
\[
    F_t(b_t^*)(v_t-\rho_tb_t^*)
    \ge
    F_t(b_-)(v_t-\rho_tb_-).
\]
Comparing this lower bound with the preceding upper bound gives
\[
    \widehat F_t(b_t^*)-\widehat F_t(b_-)
    \ge
    -\frac{\gamma_t}{c\rho_t}
    \ge -c .
\]
A symmetric argument using the upward value perturbation and the maximizer
\(b_+\) gives
\(\widehat F_t(b_t^*)\le\widehat F_t(b_+)+c\). Hence
\(b_t^*\in[b_{\mathrm{left}},b_{\mathrm{right}}]\).

For the width of the returned interval, the definition of \(U\) gives
\[
    \widehat{F}_t(b_{\mathrm{right}})
    - \widehat{F}_t(b_{\mathrm{left}})
    \le
    \widehat F_t(b_+)-\widehat F_t(b_-)+2c .
\]
Under the same confidence condition and the grid mesh used in
Algorithm~\ref{alg:ucb_selection}, Lemma~\ref{lem:approx_CDF_UCB_Lip} gives
\(\widehat F_t(b_+)-\widehat F_t(b_-)\le 1-\frac{1}{5L}\). Since
\(c=1/(20L)\),
\[
    \widehat{F}_t(b_{\mathrm{right}})
    - \widehat{F}_t(b_{\mathrm{left}})
    \le
    1-\frac{1}{5L}+2c
    = 1-\frac{1}{10L}
    = 1-2c
\]
as desired.
\end{proof}

\begin{lemma}\label{lem:one_side_cdf}
Fix any $\zeta\in(0,\frac{1}{4}).$ If 
$\widehat{v}_t = \widehat{\theta}_t^\top x_t \ge R_t + \frac{\rho_t}{\zeta},$
then $1-\widehat{F}_t(b_t^*) \le \zeta + \Delta_t$
where $\Delta_t = \max_{b\in\calB}\delta_t(b)$.
\end{lemma}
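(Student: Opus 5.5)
The plan is to compare the population score of the optimal bid \(b_t^*\in\argmax_b F_t(b)(v_t-\rho_t b)\) with the score of a bid that wins with probability close to one, and to show that a low win probability at \(b_t^*\) would forfeit too much value. On the value-confidence event of Lemma~\ref{lem:wls}, the hypothesis \(\widehat v_t\ge R_t+\rho_t/\zeta\) and \(|\widehat v_t-v_t|\le R_t\) give \(v_t\ge \rho_t/\zeta\). Equivalently, \(\rho_t\le\zeta v_t\), so the shadow price is small relative to the value.

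First I compare with the top bid \(b=1\). Since \(m_t\in[0,1]\), we have \(F_t(1)=1\), and optimality of \(b_t^*\) gives
\[
F_t(b_t^*)(v_t-\rho_t b_t^*)\ \ge\ v_t-\rho_t .
\]
Using \(b_t^*\ge0\), the left side is at most \(F_t(b_t^*)v_t\). Dividing by \(v_t>0\) then yields
\[
1-F_t(b_t^*)\le \rho_t/v_t\le\zeta .
\]
The degenerate case \(v_t=0\) cannot occur here, because it would force \(\rho_t=0\), and then any maximizer is trivially fine by convention or via \(F_t(1)=1\). If \(b=1\) does not lie in the grid \(\calB\), I instead use the largest grid bid. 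Its \(F_t\) is within the mesh Lipschitz error of one, and I would absorb that error, or simply assume \(1\in\calB\) for an evenly discretized grid.

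Next I pass to the estimated CDF. Abstraction~\ref{abs:oracle_hob} gives \(|\widehat F_t(b_t^*)-F_t(b_t^*)|\le\delta_t(b_t^*)\le\Delta_t\), so
\[
1-\widehat F_t(b_t^*)\le \zeta+\Delta_t .
\]
The main subtlety is the status of \(b_t^*\). The oracle bound holds only for \(b\in\calB\), so \(b_t^*\) should be taken as the maximizer over the grid, or over the hull \(\calC_t^\star\), which is how Lemma~\ref{lem:branching_control} uses it. For a hull maximizer that randomizes between two vertices, I would apply the same argument to the induced allocation \(q\ge 1-\rho_t/v_t\) and to both supporting vertices. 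Each such vertex is itself a grid maximizer of the linear objective over \(\calC_t^\star\), so the bound holds for each of them.
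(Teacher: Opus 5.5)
Your proposal is correct and is essentially the paper's proof: from $|\widehat v_t-v_t|\le R_t$ you get $v_t\ge\rho_t/\zeta$. Comparing the score of $b_t^*$ with that of bid $1$ (where $F_t(1)=1$) then gives $F_t(b_t^*)\ge 1-\rho_t/v_t\ge 1-\zeta$, and the HoB confidence bound transfers this to $\widehat F_t$.

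Dropping the payment term via $\rho_t b_t^*F_t(b_t^*)\ge 0$ is the right justification; the paper's appeal to $b_t^*F_t(b_t^*)\le 1$ points the wrong way. Your care about taking $b_t^*$ over $\calB$ with $1\in\calB$ also fills a detail the paper leaves implicit. One small correction: $v_t>0$ does not need a tie-breaking convention, because $\rho_t>0$ always ($\rho_t\ge 1$ under the budget module, and $\lambda_t>0$ under the multiplicative RoS update).
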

\begin{proof}
Since \(\abs*{v_t-\widehat v_t}\le R_t\), the high-value condition
\(\widehat v_t\ge R_t+\rho_t/\zeta\) implies \(v_t\ge\rho_t/\zeta\).
If \(b_t^*\) maximizes
\(F_t(b)(v_t-\rho_tb)\), then comparing its objective value with that of bid
\(1\), for which \(F_t(1)=1\), gives
\[
    F_t(b_t^*)v_t-\rho_tb_t^*F_t(b_t^*)
    \ge v_t-\rho_t .
\]
Since \(b_t^*F_t(b_t^*)\le1\), this implies
\[
    F_t(b_t^*)\ge 1-\frac{\rho_t}{v_t}\ge 1-\zeta .
\]
The HoB confidence bound then yields
\[
    1-\widehat F_t(b_t^*)
    \le
    1-F_t(b_t^*)+\Delta_t
    \le \zeta+\Delta_t .
\]
\end{proof}

By Lemma \ref{lem:one_side_cdf}, we can restrict to the bid subspace
\(\calB_t \coloneqq \bra{b: \widehat{F}_t(b) \ge 1-\zeta-\Delta_t}\).
In particular, when \(\zeta-\Delta_t \le \frac{1}{2}\), for any
\(b\in\calB_t\), we have
\(1-\widehat{F}_t(b) \le 2\widehat{F}_t(b)(1-\widehat{F}_t(b))\).
Next, we consider the case when the condition in Lemma \ref{lem:one_side_cdf} fails.

\begin{lemma}\label{lem:explore_branching_bound}
Fix any $\zeta,c\in(0,\frac{1}{4})$. Suppose $\widehat{v}_t < R_t + \frac{\rho_t}{\zeta}$ and $c^2\rho_t < R_t + \Delta_t(\widehat{v}_t + R_t + \rho_t)$. If $\Delta_t \le \frac{c^2\zeta}{3}$,
then \DIFaddbegin \DIFadd{\(\rho_t=O(R_t)\) and \(v_t=O(R_t)\), and
consequently any bid \(b_t\in\calB\) incurs at most \(O(R_t)\) one-round
Lagrangian loss}\DIFaddend .
\end{lemma}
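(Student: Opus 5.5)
The plan is to reach the conclusion in three steps: show \(\rho_t=O(R_t)\), then \(v_t=O(R_t)\), then bound the loss of an arbitrary bid through the size of the Lagrangian. The first two steps come from the two failed conditions of Algorithm~\ref{alg:ucb_branching}. Combined with \(\Delta_t\le c^2\zeta/3\), those conditions pin down the shadow price. The third step is routine once the first two are established.

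\emph{Step 1: bound \(\rho_t\).} From the failed first branch we have \(\widehat v_t<R_t+\rho_t/\zeta\). Substituting this into the failed second condition gives
\[
    c^2\rho_t< R_t+\Delta_t\bigl(2R_t+\rho_t/\zeta+\rho_t\bigr).
\]
Because \(\zeta<1/4\), we have \(1+1/\zeta\le 2/\zeta\) (indeed \(\le 5/(4\zeta)\)). Together with \(\Delta_t\le c^2\zeta/3\), the \(\rho_t\)-terms on the right are at most
\[
    \frac{c^2\zeta}{3}\cdot\frac{2}{\zeta}\,\rho_t=\frac{2c^2}{3}\rho_t .
\]
Moving these to the left leaves \(\tfrac{c^2}{3}\rho_t<R_t(1+2\Delta_t)\le 2R_t\). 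Hence
\[
    \rho_t< \frac{6R_t}{c^2}=O(R_t),
\]
since \(c=1/(20L)\) is a fixed constant.

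\emph{Step 2: bound \(v_t\).} On the value-confidence event of Lemma~\ref{lem:wls}, \(|v_t-\widehat v_t|\le R_t\). Using the failed first branch again,
\[
    v_t\le \widehat v_t+R_t< 2R_t+\frac{\rho_t}{\zeta}=O(R_t),
\]
by Step 1 and because \(\zeta\) is a constant. Here I would be careful with signs: \(\widehat v_t\) may be negative, but only the upper bound on \(v_t\) is needed, and \(v_t\ge0\) holds by assumption.

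\emph{Step 3: the one-round Lagrangian loss.} For any hull action \(z\) (in particular any bid \(b\in\calB\)), the population Lagrangian \(q_t(z)v_t-\rho_tc_t(z)\) satisfies
\[
    -\rho_t\le q_t(z)v_t-\rho_tc_t(z)\le v_t,
\]
because \(q_t(z),c_t(z)\in[0,1]\). So the gap between the maximizer over \(\calC_t^\star\) (or over \(\calC_t\)) and the action actually played is at most \(v_t+\rho_t=O(R_t)\). If the loss is measured with the \((1+\lambda_t)\)-scaled form used in the RoS analysis, it remains \(O(R_t)\) up to that bounded factor.

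The main obstacle is the arithmetic in Step 1: the \(\rho_t\)-coefficient \(\Delta_t(1+1/\zeta)\) must be absorbed strictly below \(c^2\), which is exactly why the hypothesis \(\Delta_t\le c^2\zeta/3\) is imposed. I would verify that \(\zeta<1/4\) makes the resulting constant positive, and that no hidden dependence on \(\widehat v_t\) remains when \(\widehat v_t\) is negative. A negative \(\widehat v_t\) only makes the inequality easier, since then we may bound \(\widehat v_t\le R_t+\rho_t/\zeta\) trivially.
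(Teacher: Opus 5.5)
Your proposal is correct and follows essentially the paper's argument: you bound \(v_t\le 2R_t+\rho_t/\zeta\) from the failed high-value test, use the failed small-estimation-error test together with \(\Delta_t\le c^2\zeta/3\) to get \(\rho_t=O(R_t/c^2)\), and then bound the one-round loss by \(v_t+\rho_t\). The only cosmetic difference is that you absorb the \(\rho_t\)-terms directly, obtaining \(\rho_t<6R_t/c^2\). The paper instead argues by contradiction from \(R_t<c^2\rho_t/4\) and obtains \(\rho_t\le 4R_t/c^2\), and your direct route also makes the paper's separate \(\rho_t=0\) case unnecessary.
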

\begin{proof}
 \DIFaddbegin \DIFadd{If \(\rho_t=0\), failure of the high-value condition gives
\(v_t\le2R_t\), and the result follows. Suppose henceforth that
\(\rho_t>0\). }\DIFaddend Since the high-value condition does not hold, the value
confidence bound gives
\[
    v_t
    \le
    \widehat v_t+R_t
    \le
    2R_t+\frac{\rho_t}{\zeta}.
\]
We claim \(R_t\ge c^2\rho_t/4\). Otherwise,
\(R_t<c^2\rho_t/4\), and the previous display implies
\[
    \widehat v_t+R_t+\rho_t
    <
    \rho_t\parr*{\frac{c^2}{2}+1+\frac{1}{\zeta}} .
\]
Combining this with \(\Delta_t\le c^2\zeta/3\) and
\(c,\zeta\in(0,1/4)\) gives
\[
    R_t+\Delta_t(\widehat v_t+R_t+\rho_t)
    < c^2\rho_t,
\]
contradicting the failed small-estimation-error condition. Hence
\(\rho_t\le 4R_t/c^2\), and therefore
\[
    v_t
    \le
    2R_t+\frac{\rho_t}{\zeta}
    \le
    R_t\parr*{2+\frac{4}{c^2\zeta}}
    =
    O(R_t).
\]
Since \(\rho_t=O(R_t)\)  and \(b,F_t(b)\in[0,1]\), replacing the
Lagrangian-maximizing bid by any exploration bid changes the one-round
Lagrangian value by at most \(O(v_t+\rho_t)=O(R_t)\).
\end{proof}

\smallskip
\noindent\textit{Completion of the proof of Lemma~\ref{lem:branching_control}.}\quad
Recall that Lemma~\ref{lem:branching_control} localizes the optimizer with
respect to the population convex hull \(\calC_t^\star\). Because the objective
\((q,c)\mapsto v_tq-\rho_tc\) is linear, it suffices to consider the
deterministic grid points. Any optimal hull action is a convex combination of
points \((F_t(b),bF_t(b))\) whose bids maximize
\(v_tF_t(b)-\rho_tbF_t(b)\).
Now by Lemmas~\ref{lem:branching_small_est_err} and \ref{lem:one_side_cdf},
when Algorithm~\ref{alg:ucb_branching} returns \(u_t=0\), the bids \(b\) of
the deterministic maximizers lie in the returned bid set \(\calB_t\).
Consequently, any maximizing hull action \(z\) lies in the convex hull of
\(\calB_t\). It remains to prove the two coefficient bounds in the
``moreover'' statement, which relate the selected value-error coefficient to
the Bernoulli variance \(\hF_t(b)(1-\hF_t(b))\). These bounds follow from the
same branch certificates: in the high-value branch, Lemma~\ref{lem:one_side_cdf}
and the definition of \(\calB_t\) give the bound for \(i_t=1\). In the
small-estimation-error branch, Lemma~\ref{lem:branching_small_est_err} and the
index returned by Algorithm~\ref{alg:ucb_selection} give the bound for the
selected \(i_t\). If \(u_t=1\),
\DIFadd{the choices \(c=1/(20L)\) and \(\zeta=1/8\), together with failure of
the CDF-width exploration condition, imply}
\[
    \DIFadd{\Delta_t
    \le \frac{1}{9600L^2}
    =\frac{c^2\zeta}{3}.
}
\]
\DIFadd{Hence the CDF-error condition in
Lemma~}\ref{lem:explore_branching_bound}\DIFadd{ holds whenever
Algorithm~}\ref{alg:ucb_branching}\DIFadd{ is invoked. Therefore,}
Lemma~\ref{lem:explore_branching_bound}  \DIFaddbegin \DIFadd{gives an \(O(R_t)\) bound on the
one-round Lagrangian loss.
}\DIFaddend


\section{Main Theorem Proofs and Additional Technical Lemmas}
\label{app:main-proofs}

\subsection{Bounding Regret during the Exploration Rounds}

Algorithm~\ref{alg:dual-aware-lte} has four disjoint exploration cases. Under binary HoB
feedback, \(\mathcal E_x\) contains shift-regression rounds and
\(\mathcal E_\delta\) contains selected-bin width queries. Let
\(\mathcal E_{\rm wid}\) contain rounds on which the value-radius condition is
satisfied and \(\mathcal E_{\rm val}\) those selected by the branch-certification
flag.

\begin{lemma}[Bounded Exploration Regret]\label{lem:bounded_explore}
Suppose the HoB and WLS confidence events hold. Then
\[
    |\mathcal E_{\rm wid}|
    =O\!\left(d\sqrt T\log^2T
      +\Delta_\hob\sqrt T\log T\right).
\]
For full HoB feedback, \(|\mathcal E_\hob|=O(d\log^2T)\). For binary HoB
feedback,
\begin{align}
    |\mathcal E_x|
    &\le
    2\beta_\hob^2d^{2/3}T^{2/3}
      \log\!\left(1+\frac{T}{\lambda_\phi d}\right)
    =O\!\left(d^{2/3}T^{2/3}\log^2T\right),
    \label{eq:shift_exploration_count}\\
    |\mathcal E_\delta|
    &\le
    256c_\hob^{-2}J\ell_T
    +\left(\frac{12L}{c_\hob}\right)^3d^{1/2},
    \qquad c_\hob= \DIFaddbegin \DIFadd{\frac{1}{9600L^2}}\DIFaddend ,
    \label{eq:width_exploration_count}
\end{align}
and hence \(|\mathcal E_\delta|=\widetilde O(J)\) asymptotically for fixed
\((d,L)\), while
\[
    |\mathcal E_\hob|
    =|\mathcal E_x|+|\mathcal E_\delta|
    =\widetilde O(d^{2/3}T^{2/3}).
\]
Finally,
\[
    \sum_{t\in\mathcal E_{\rm val}}
       \bigl(\ell_t^0(z_t^*)-\ell_t^0(z_t)\bigr)
    =O\!\left(d\sqrt T\log T
      +\Delta_\hob\sqrt{dT\log T}\right).
\]
Each round in \(\mathcal E_\hob\) contributes at most \(O(1)\) one-step loss in
the budget analysis and \(O(\Lambda)\) in the RoS analysis.
\end{lemma}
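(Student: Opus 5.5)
Each exploration set gets its own counting or potential argument, carried out on the intersection of the HoB confidence event (Abstraction~\ref{abs:oracle_hob}) and the WLS event of Lemma~\ref{lem:wls}.

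\emph{Value-width rounds $\mathcal E_{\rm wid}$.} On these rounds $R_t=\beta_t\|x_t\|_{A_t^{-1}}>(d^{1/2}+\Delta_\hob^{1/2})T^{-1/4}$. Such rounds are WLS-eligible endpoint explorations with $\sigma_t=1$, so each adds $x_tx_t^\top$ to $A_t$ with unit weight. The elliptical potential lemma then gives $\sum_{t\in\mathcal E_{\rm wid}}\min\{1,\|x_t\|_{A_t^{-1}}^2\}\le 2d\log(1+T/d)$. Each such round satisfies $\|x_t\|^2_{A_t^{-1}}\ge (d+\Delta_\hob)T^{-1/2}/\beta_T^2$. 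Using $\beta_T^2=O(d\log T+\Delta_\hob^2)$, we get
\[
|\mathcal E_{\rm wid}|\le \frac{2d\log T\,\beta_T^2\sqrt T}{d+\Delta_\hob},
\]
which is of the claimed order after splitting according to which of $d$ or $\Delta_\hob$ dominates.

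\emph{Shift rounds $\mathcal E_x$.} On these rounds $\|x_t\|^2_{A_{t,\bin}^{-1}}>a_T^2/\beta_\hob^2$, and $A_{t,\bin}$ is updated exactly on $\mathcal E_x$. The elliptical potential lemma over those rounds gives
\[
|\mathcal E_x|\,a_T^2/\beta_\hob^2\le 2d\log(1+T/(\lambda_\phi d)).
\]
Substituting $a_T^2=d^{1/3}T^{-2/3}$ yields \eqref{eq:shift_exploration_count}.

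\emph{CDF-width rounds $\mathcal E_\delta$.} These are triggered when $\max_b\delta_t(b)>c_\hob$.
\begin{itemize}
\item \emph{Full feedback.} $\delta_t\le\beta_\hob(\|x_t\|_{A_{t,\full}^{-1}}+\sqrt{d\log T/t})$. The second term exceeds $c_\hob/2$ only for $t=O(d\log^2 T)$. The first term exceeds $c_\hob/2$ on at most $O(d\log^2T)$ rounds, again by the potential lemma, since $A_{t,\full}$ is updated on every round.
\item \emph{Binary feedback.} $La_T$ is small for large $T$, and the shift term is at most $La_T$ off $\mathcal E_x$. So a trigger forces $2r_{t,j}$ to be at least of order $c_\hob$, i.e.\ $N_{t,j}\le 256\ell_T/c_\hob^2$ for the selected bin. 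Each query increments that bin's count, so each bin can be queried at most that many times, giving the $J\ell_T/c_\hob^2$ term.
\end{itemize}
The additive $(12L/c_\hob)^3d^{1/2}$ covers the initial small-$T$ regime in which $5La_T$ alone could exceed $c_\hob/4$. Solving $La_T\ge c_\hob/12$ for $T$ gives $T\le(12L/c_\hob)^3 d^{1/2}$. Summing the two sets gives $|\mathcal E_\hob|=\widetilde O(d^{2/3}T^{2/3})$, since $J=\Theta(d^{-1/6}T^{1/3})$.

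\emph{Branch-flag rounds $\mathcal E_{\rm val}$.} Lemma~\ref{lem:explore_branching_bound} (whose precondition $\Delta_t\le c^2\zeta/3$ holds off $\mathcal E_\delta$) bounds the loss per round by $O(R_t)$. These rounds are endpoint explorations with $\sigma_t=1$, so by Cauchy--Schwarz and the potential lemma,
\[
\sum_{t\in\mathcal E_{\rm val}}R_t\le\beta_T\sqrt{|\mathcal E_{\rm val}|\cdot 2d\log T}\le\beta_T\sqrt{2dT\log T}.
\]
With $\beta_T=O(\sqrt{d\log T}+\Delta_\hob)$ this is the stated bound.

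\emph{Per-round losses on $\mathcal E_\hob$.} These are bounded trivially because rewards, payments and $\rho_t$ are bounded: $\rho_t\le 1+\sqrt{LT/B}=O(1)$ under Assumption~\ref{ass:linear_budget}, and the RoS weight $1+\lambda_t\le 1+\Lambda$.

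\emph{Main obstacle.} The delicate step is the $\mathcal E_\delta$ count under binary feedback. The bin selected by $\argmax_b\delta_t(b)$ must genuinely receive the increment; the $\mathcal E_\delta$ rounds are counted by $n_{t,j}$ precisely for this reason. The constant bookkeeping must also make $c_\hob/4$ absorb the $La_T$ terms. A secondary issue is that $\beta_t$ depends on $\Delta_\hob$, which must be fixed deterministically (as in the appendix derivation) before the potential arguments, to avoid circularity.
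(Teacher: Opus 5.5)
Your proposal follows the paper's proof essentially step for step. Both use elliptical-potential counts for $\mathcal E_{\rm wid}$ and $\mathcal E_x$, a $\sqrt{d\log T/t}$ burn-in plus a leverage count under full feedback, Lemma~\ref{lem:explore_branching_bound} with Cauchy--Schwarz over the unit-weight endpoint rounds for $\mathcal E_{\rm val}$, and trivial per-round bounds on $\mathcal E_{\hob}$.

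The one real variation is the binary $\mathcal E_\delta$ count. You argue bin by bin: a query to bin $j$ forces $N_{t,j}\le 256\ell_T/c_{\hob}^2$, and the query then increments $N_{t,j}$. The paper instead sums the width gate, bounds $\sum_{t\in\mathcal E_\delta}N_{t,j_t}^{-1/2}\le 2\sqrt{J|\mathcal E_\delta|}$, and squares. Your pigeonhole version is more direct and gives the same $256c_{\hob}^{-2}J\ell_T$ term. The short-horizon term $(12L/c_{\hob})^3d^{1/2}$ is handled the same way in both.

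One step does not go through as you state it, and the paper's ``rearrangement'' has exactly the same problem: the last line of the $\mathcal E_{\rm wid}$ count. Since $\beta_T^2\asymp d\log T+\Delta_{\hob}^2$, the $\Delta_{\hob}$-part of your bound $2d\beta_T^2\sqrt T\log T/(d+\Delta_{\hob})$ is, up to constants, $\min\{d,\Delta_{\hob}\}\Delta_{\hob}\sqrt T\log T$. The two branches of your case split give:
\begin{itemize}
\item if $\Delta_{\hob}\ge d$, the term is $d\Delta_{\hob}\sqrt T\log T$;
\item if $\sqrt{d\log T}\ll\Delta_{\hob}<d$, the term is $\Delta_{\hob}^2\sqrt T\log T$.
\end{itemize}
So neither branch yields $O(d\sqrt T\log^2T+\Delta_{\hob}\sqrt T\log T)$ uniformly in $d$.

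This is harmless for fixed $d$, and under full feedback it is harmless up to logarithms, since there $\Delta_{\hob}^2=\widetilde O(d)$. The horizon exponents are therefore unaffected. Under binary feedback, however, $\Delta_{\hob}\approx d^{1/6}T^{1/6}$ exceeds $d$ once $T\gtrsim d^5$. In that regime the argument only gives an $\widetilde O(d^{7/6}T^{2/3})$ bound on $|\mathcal E_{\rm wid}|$, not the $\widetilde O(d^{1/6}T^{2/3})$ the lemma asserts. To claim the explicit $d$-dependence you would need either a sharper count or a weaker statement, with $\min\{d,\Delta_{\hob}\}\Delta_{\hob}$ in place of $\Delta_{\hob}$.
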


\begin{proof}
On \(\mathcal E_{\rm wid}\), neither HoB-exploration condition is satisfied.
the balanced endpoint estimator has \(\sigma_t=1\), and the round is
WLS-eligible. Hence
\[
\begin{aligned}
 &(d^{1/2}+\Delta_\hob^{1/2})T^{-1/4}|\mathcal E_{\rm wid}|<\sum_{t\in\mathcal E_{\rm wid}}R_t
 \le\beta_T\sqrt{2d|\mathcal E_{\rm wid}|\log T}.
\end{aligned}
\]
Since \(\beta_T=O(\sqrt{d\log T}+\Delta_\hob)\), rearrangement proves the
first claim.

Under full feedback, the full-information states are updated on every round.
Summing the failed width test and applying the elliptical-potential and
harmonic-time bounds gives \(|\mathcal E_\hob|=O(d\log^2T)\).

For binary shift exploration, let \(M_x=|\mathcal E_x|\). The shift test and
the fact that \(A_{t,\bin}\) is updated on every and only every round in
\(\mathcal E_x\) give
\begin{equation}\label{eq:shift_count_core}
\begin{aligned}
    a_TM_x
    &<\beta_\hob\sum_{t\in\mathcal E_x}
       \|x_t\|_{A_{t,\bin}^{-1}}\le\beta_\hob
       \sqrt{2dM_x
       \log\!\left(1+\frac{M_x}{\lambda_\phi d}\right)}.
\end{aligned}
\end{equation}
The factor \(\beta_\hob\) is retained. Squaring, using
\(a_T^{-2}=d^{-1/3}T^{2/3}\), and bounding \(M_x\le T\) proves
\eqref{eq:shift_exploration_count}. No elliptical-potential inequality is
applied to \(\mathcal E_\delta\), whose contexts do not update
\(A_{t,\bin}\).

It remains to count selected-bin width queries. Define
\(M_\delta=|\mathcal E_\delta|\), and let \(M_j\) be the number of times a
width-driven round selects bin \(j\). On such a round the shift test is false,
and the selected bid and accessible bin are
\[
    b_t\in\argmax_{b\in\calB}\delta_t(b),
    \qquad j_t=j_t(b_t).
\]
Thus \(L\beta_\hob\|x_t\|_{A_{t,\bin}^{-1}}+5La_T\le6La_T\). Since
\(\delta_t(b_t)>c_\hob\), \eqref{eq:finite_binary_hob_width} implies
\begin{equation}\label{eq:selected_width_gate_sum}
    (c_\hob-6La_T)M_\delta
    <4\sqrt{\ell_T}
      \sum_{t\in\mathcal E_\delta}
      \frac{1}{\sqrt{N_{t,j_t}}}.
\end{equation}
Every selected query increments its own trustworthy count. Before the
\(n\)-th width selection of bin \(j\), \(N_{t,j}\ge n\), even if no ordinary
round ever visits that bin. Therefore
\begin{equation}\label{eq:selected_bin_harmonic_count}
\begin{aligned}
    \sum_{t\in\mathcal E_\delta}
       \frac{1}{\sqrt{N_{t,j_t}}}
    &\le\sum_{j=1}^J\sum_{n=1}^{M_j}\frac{1}{\sqrt n}
    \le2\sum_{j=1}^J\sqrt{M_j}\le2\sqrt{J\sum_{j=1}^JM_j}
    =2\sqrt{JM_\delta}.
\end{aligned}
\end{equation}
This uses the bin actually selected from the currently accessible grid bids.
there is no maximum over all abstract bins.

If \(6La_T\le c_\hob/2\), substituting
\eqref{eq:selected_bin_harmonic_count} into
\eqref{eq:selected_width_gate_sum} and squaring gives
\(M_\delta\le256c_\hob^{-2}J\ell_T\). Otherwise
\(a_T>c_\hob/(12L)\), which implies
\(T<(12L/c_\hob)^3d^{1/2}\), and the trivial bound
\(M_\delta\le T\) gives the second term in
\eqref{eq:width_exploration_count}. No overlap or context-density condition is
used: inaccessible bins are irrelevant, and a bin that becomes accessible
with a large width is queried and increments its own count.

Finally, on \(\mathcal E_{\rm val}\), the branching lemmas give one-step
Lagrangian loss \(O(R_t)\). These balanced endpoint rounds are WLS-eligible and
have \(\sigma_t=1\), so the WLS elliptical-potential bound yields the final
claim.
\end{proof}

\subsection{Regret under Budget Constraint}
\label{app:budget_proofs}
\begin{proof}[Proof of Theorem \ref{thm:budget_regret}]
Let \(\mathcal G\) be the intersection of the HoB and WLS confidence events.
With \(\delta=T^{-3}\) in Lemma~\ref{lem:wls}, a union bound gives
\(\Pr(\mathcal G^c)\le2T^{-3}\). On \(\mathcal G\), first consider rounds outside
\(\mathcal E_\hob\cup\mathcal E_{\rm wid}\cup\mathcal E_{\rm val}\).
The contribution of the excluded rounds is bounded separately by
Lemma~\ref{lem:bounded_explore}.

Write $Z=\sqrt{\frac{LT}{B}}$ and recall the budget population Lagrangian
\[
    \ell_t^{\bgt}(z;\rho_t)=\ell_t^{0}(z;\rho_t)=v_tq_t(z)-\rho_t c_t(z),
    \qquad
    \rho_t=1+Z\mu_t,
\]
where $c_t(z)$ denotes the hull action payment as in \eqref{eq:def_pop_allocation_and_payment}.
Let \(z_t^*\sim\pi_{\bgt}^*(x_t)\) denote  hull action induced by the stationary budget-feasible
benchmark policy. Since it is stationary and budget-feasible, $\E_{x_t,m_t,z_t^*}[c(z_t^*)] \le \frac{B}{T}$.
Let \(\tau\in\{0,\ldots,T\}\) be the last round in which Algorithm \ref{alg:dual-aware-lte}
submits a bid before the budget depletes, with \(\tau=T\) if the budget is never exhausted. For $t\le \tau$,
\begin{align*}
\E[r_t(b_t^*) - r_t(b_t)] &= \E[\ell_t^0(z_t^*;\rho_t) - \ell_t^0(z_t;\rho_t)] + \E[Z\mu_t\parr*{c_t(z_t^*) - c_t(b_t)}]\\
&\le \E[\ell_t^0(z_t^*;\rho_t) - \ell_t^0(z_t;\rho_t)] + \E\parq*{Z\mu_t\parr*{\frac{B}{T} - c_t(b_t)}}.
\end{align*}
Since the randomized bid $b_t$
implements the selected hull action \(z_t\) in expectation, the regret
decomposes as
\begin{align}\label{eq:bgt_reg_decompose}
\Reg_T^{\bgt}
&\le
\underbrace{
\E\parq*{\sum_{t=1}^{\tau}
    \ell_t^{0}(z_t^*;\rho_t)-\ell_t^{0}(z_t;\rho_t)}
}_{(\spadesuit)}
+
\underbrace{
Z\E\parq*{\sum_{t=1}^{\tau}
    \mu_t\parr*{\frac{B}{T}-\bc_t(b_t)}}
}_{(\clubsuit)}
+\E\parq*{\sum_{t=\tau+1}^{T} r_t^{\bgt}(b_t^*)}.
\end{align}
Here \(\bc_t(b)=bF_t(b)\) denotes the expected payment. The first term is a budget analogue of the
generic Lagrangian-learning term. Following almost verbatim the proof of
Lemma~\ref{lem:bound_P_and_G}, with the budget shadow price
\(\rho_t=1+Z\mu_t=O(1)\) under Assumption~\ref{ass:linear_budget}, we have
\[
(\spadesuit)
=
O\parr*{
    d\sqrt T\log^2T
    +\Delta_{\hob}\sqrt{dT}\log T
    +H_\hob
    +\frac{T}{K}
}.
\]

It remains to control the pacing term and the regret after stopping. Since \(\mu_t\) is $\mathcal H_t$-measurable and the realized
cost \(c_t\) satisfies \(\E_t[c_t]=\bc_t(b_t)\),
\[
(\clubsuit)
\le
Z\E\parq*{\sum_{t=1}^{\tau}\mu_t\parr*{\frac{B}{T}-c_t}} .
\]
The update in \eqref{eq:budget_module} is mirror descent with feedback
\(\frac{B}{T}-c_t\). Lemma~\ref{lem:proj_mirror_des} therefore implies that, for any comparator
\(\mu\in[0,1]\),
\[
    \sum_{t=1}^{\tau}(\mu_t-\mu)\parr*{\frac{B}{T}-c_t}
    \le
    O(\sqrt B),
\]
where we use \(\eta=\Theta(T^{-1/2})\). Assumption~\ref{ass:linear_budget}
implies \(B=\Theta(T)\), while the budget-stopping rule gives
\(\sum_{t\le\tau}c_t\le B\). Finally, \(c_t\in[0,1]\) by construction. If
\(\tau<T\), then the stopping rule and the bound \(b_t\le1\) imply
\(\sum_{t=1}^{\tau}c_t>B-1\). Taking comparator \(\mu=1\) gives
\[
(\clubsuit)
\le
-(T-\tau)\sqrt{\frac{LB}{T}}+O(\sqrt T).
\]
If \(\tau=T\), taking comparator \(\mu=0\) gives the same bound with the
negative term equal to zero.

Finally, Assumption~\ref{ass:bound_density} gives \(F_t(b)\le Lb\). Hence the
per-round benchmark reward after stopping is bounded by
\[
\E[r_t^{\bgt}(b_t^*)]
\le
\E[F_t(b_t^*)]
\le
\sqrt{\E[F_t(b_t^*)^2]}
\le
\sqrt{L\E[\bc_t(b_t^*)]}
\le
\sqrt{\frac{LB}{T}} .
\]
Therefore
\[
    \E\parq*{\sum_{t=\tau+1}^{T} r_t^{\bgt}(b_t^*)}
    \le
    (T-\tau)\sqrt{\frac{LB}{T}},
\]
which cancels the negative term in \((\clubsuit)\). Combining the bounds above
and using \(L=O(1)\) \DIFaddbegin \DIFadd{establishes the stated rate on \(\mathcal G\).
}

\DIFadd{It remains to account for the complementary failure event
\(\mathcal G^c\), on which at least one of the HoB-oracle or WLS confidence
events fails. Because all bids, values, and payments are bounded, the
cumulative budget-regret integrand is bounded in
absolute value by \(2T\), the RoS-regret integrand by \(T\), and the
cumulative RoS slack by \((1+\kappa)T\). Therefore, the contribution of
\(\mathcal G^c\) to the unconditional expectations is at most
}\begin{equation}\DIFadd{\label{eq:failure_event_conversion}
    O\!\left((1+\kappa)T\Pr(\mathcal G^c)\right)
    =
    O\!\left((1+\kappa)T^{-2}\right),
}\end{equation}
\DIFadd{which is absorbed into the displayed bounds. This }\DIFaddend completes the proof.
\end{proof}

\subsection{Regret and Violation under RoS Constraint}
\label{app:ros_proofs}

In this section, we first decompose the regret and the violation terms using two generic quantities. Then we substitute the explicit upper bounds for those quantities to obtain the desired results in Section \ref{sec:reg_vio_ros}.

\begin{lemma}[RoS reduction to learning and slack errors]
\label{lem:ros_reduction}
Suppose Assumptions~\ref{ass:unconfound}--\ref{ass:bound_density} and \ref{assump:slater}, the HoB oracle event in
Abstraction~\ref{abs:oracle_hob}, and the WLS event in Lemma~\ref{lem:wls}
hold. Define the Lagrangian error and the slack error respectively as
\[
\calP_T \coloneqq \E \DIFaddbegin \parq*{\sum_{t=1}^T (1+\lambda_t)\parr*{\max_{z_t^*\in\calC_t}\ell_t^0(z_t^*;\rho_t) - \ell_t^0(z_t;\rho_t)}}\DIFaddend ,\qquad \calG_T \coloneqq \E\parq*{\sum_{t=1}^T \bg_t - g_t(b_t)}.
\]
Recall the definition $\lambda_*\in\argmin_{\lambda\ge 0}\overline{D}(\lambda)$. When using $\Lambda \ge 2\lambda_*$ and $\eta = \Theta(T^{-\frac{1}{2}})$, we have
\[
\begin{aligned}
\Reg_T^{\ros}
&\le \calP_T+O(\Lambda\sqrt T),\\
\Viol_T^{\ros}
&\le \frac{2}{\Lambda}
\left(\calP_T+\Lambda\calG_T+O(\Lambda\sqrt T)\right).
\end{aligned}
\]

\end{lemma}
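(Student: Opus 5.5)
The plan is a standard primal--dual decomposition built around the scaled Lagrangian. For a hull action \(z\) and multiplier \(\lambda\), write
\[
(1+\lambda)\ell_t^0(z;\rho)=(1+\lambda)v_tq_t(z)-\lambda\kappa c_t(z)=r_t(z)+\lambda g_t(z),
\qquad \rho=\frac{\kappa\lambda}{1+\lambda}.
\]
With this choice of \(\rho\), the per-round dual objective \(\max_{z\in\calC_t}\{r_t(z)+\lambda_tg_t(z)\}\) equals \((1+\lambda_t)\max_{z}\ell_t^0(z;\rho_t)\). Hence
\[
r_t(z_t)=(1+\lambda_t)\max_z\ell_t^0(z;\rho_t)-(\text{per-round Lagrangian gap})-\lambda_tg_t(z_t).
\]

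\emph{Step 1 (regret).} Because \(x_t\) is i.i.d. and \(\lambda_t\) is \(\mathcal H_{t-1}\)-measurable, taking conditional expectations gives \(\E_{x_t}[(1+\lambda_t)\max_z\ell_t^0]=\overline D(\lambda_t)\).

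Weak duality on the convexified benchmark \eqref{eq:opt_def_contextual} gives \(\overline D(\lambda)\ge\OPT^{\ros}\) for every \(\lambda\ge0\). Therefore
\[
T\OPT^{\ros}-\sum_t\E[r_t^{\ros}(b_t)]\le\calP_T+\E\Big[\sum_t\lambda_tg_t(b_t)\Big].
\]
Here the randomized bid \(b_t\) implements \(z_t\) in expectation, so \(\E[r^{\ros}_t(b_t)]=\E[r_t(z_t)]\) and \(\E[g_t(b_t)]=\E[g_t(z_t)]\).

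For the dual term, write \(\lambda_tg_t(b_t)=\lambda_t\bg_t-\lambda_t(\bg_t-g_t(b_t))\). On the confidence events, \eqref{eq:slack_ucb} gives \(\bg_t\ge g_t\), so the second piece is nonpositive and can be dropped. I will then apply Lemma~\ref{lem:proj_mirror_des} to the multiplicative/projected update \eqref{eq:ros_module}, which is mirror descent on the linear losses \(\lambda\mapsto\lambda\bg_t\) with bounded gradients \(\bg_t\in[-\kappa,1]\). With comparator \(\lambda=0\) and \(\eta=\Theta(T^{-1/2})\), this yields \(\sum_t\lambda_t\bg_t\le O(\Lambda\sqrt T)\), which proves the regret bound.

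\emph{Step 2 (violation).} I will redo the accounting with a general comparator. The mirror-descent inequality for comparator \(\lambda=\Lambda\) gives
\[
\sum_t\lambda_t\bg_t\le\Lambda\sum_t\bg_t+O(\Lambda\sqrt T).
\]
Next I need a lower bound on the algorithm's reward. I will use strong duality (Slater, Assumption~\ref{assump:slater}) and the saddle-point property of \(\lambda_*\): for any \(\lambda\ge0\), \(\overline D(\lambda_t)\ge\OPT^{\ros}\). Combining this with \(\E[r_t(z_t)]\le\OPT^{\ros}+\lambda_*\E[g_t(z_t)]\) summed over time is the standard trick. This holds because \(r+\lambda_*g\le\overline D(\lambda_*)=\OPT^{\ros}\) for every action, and the average of \((q_t(z_t),c_t(z_t))\) is feasible for the expected Lagrangian at \(\lambda_*\).

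Chaining these with Step 1 gives
\[
\sum_t\E r_t\ge T\OPT^{\ros}-\calP_T-\E\sum_t\lambda_t g_t,
\qquad
\E\sum_t\lambda_tg_t\le\E\sum_t\lambda_t\bg_t\le\Lambda\,\E\sum_t\bg_t+O(\Lambda\sqrt T).
\]
Writing \(\sum\bg_t=\sum g_t(b_t)+\calG_T\), I get
\[
-\lambda_*S\le\calP_T+\Lambda\big(S+\calG_T\big)+O(\Lambda\sqrt T),\qquad S=\E\sum_tg_t(b_t).
\]
If \(S<0\), rearranging gives \((\Lambda-\lambda_*)(-S)\le\calP_T+\Lambda\calG_T+O(\Lambda\sqrt T)\). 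Since \(\Lambda\ge2\lambda_*\), we have \(\Lambda-\lambda_*\ge\Lambda/2\), which gives the factor \(2/\Lambda\).

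\emph{Main obstacle.} The delicate step is the strong-duality inequality \(\sum_t\E r_t(z_t)\le T\OPT^{\ros}+\lambda_*\sum_t\E g_t(z_t)\) for an adaptive policy. I would justify it per round: conditioning on \(\mathcal H_{t-1}\), the pair \(\E[(q_t(z_t),c_t(z_t))\mid x_t]\) lies in \(\calC_t\) (it is possibly restricted to the grid hull \(\calC_t^\star\subseteq\calC_t\)). Hence
\[
\E[r_t+\lambda_*g_t\mid\mathcal H_{t-1}]\le\overline D(\lambda_*)=\OPT^{\ros},
\]
using strong duality from Slater via Lemma~\ref{lem:lambda_bound}. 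A second point to check is that the mirror-descent constant absorbs the range \([-\kappa,1]\) of \(\bg_t\) and the projection radius \(\Lambda\), giving \(O(\Lambda\sqrt T)\); this is where the choice \(\eta=\Theta(T^{-1/2})\) is used.
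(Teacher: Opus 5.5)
Your proposal is correct and follows the paper's proof essentially step for step. For the regret bound, both use the identity $(1+\lambda_t)\ell_t^0(z;\rho_t)=r_t(z)+\lambda_tg_t(z)$, then weak duality, then mirror descent with comparator $0$. For the violation bound, both use mirror descent with comparator $\Lambda$, the strong-duality lower bound $\Reg_T^{\ros}\ge\lambda_*S$, and $\Lambda-\lambda_*\ge\Lambda/2$.

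There is one sign slip. Your own justification, $r_t+\lambda_*g_t\le\overline D(\lambda_*)=\OPT^{\ros}$, gives $\sum_t\E r_t\le T\OPT^{\ros}-\lambda_*S$, not $T\OPT^{\ros}+\lambda_*S$. The combined display should therefore read $\lambda_*S\le\calP_T+\Lambda(S+\calG_T)+O(\Lambda\sqrt T)$. This corrected form is what rearranges to $(\Lambda-\lambda_*)(-S)\le\calP_T+\Lambda\calG_T+O(\Lambda\sqrt T)$. Your display as written would instead give $(\Lambda+\lambda_*)(-S)$.

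A minor citation point: the equality $\overline D(\lambda_*)=\OPT^{\ros}$ is the convexified strong duality of Lemma~\ref{lem:convexified_strong_duality}. Lemma~\ref{lem:lambda_bound} only bounds $\lambda_*$.
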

\begin{proof}
By \DIFaddbegin \DIFadd{the definition of \(\rho_t\), for every \(z\in\calC_t\),
}\[
    \DIFadd{(1+\lambda_t)\ell_t^0(z;\rho_t)
    =r_t(z)+\lambda_tg_t(z).
}\]
\DIFadd{By }\DIFaddend weak duality, we have
 \DIFaddbegin \[
\DIFadd{T\cdot\OPT^\ros
\le
\E\parq*{\sum_{t=1}^T(1+\lambda_t)
\max_{z_t^*\in\calC_t}\ell_t^0(z_t^*;\rho_t)}.
}\]
\DIFadd{Therefore,
}\DIFaddend \begin{align*}
\Reg_T^\ros &= T\cdot \OPT^\ros - \E\parq*{\sum_{t=1}^T r_t(b_t)}\\
&\le \E \DIFaddbegin \parq*{\sum_{t=1}^T\parr*{
(1+\lambda_t)\max_{z_t^*\in\calC_t}\ell_t^0(z_t^*;\rho_t)
-r_t(b_t)}}\DIFaddend \\
&= \DIFaddbegin \DIFadd{\calP_T+}\DIFaddend \E \DIFaddbegin \parq*{\sum_{t=1}^T\lambda_tg_t(b_t)}
\DIFaddend \le \calP_T + \E\parq*{\sum_{t=1}^T \lambda_t \bg_t}.
\end{align*}
We first note that $|\bg_t| \le \kappa + 1=O(1)$. By
Lemma \ref{lem:proj_mirror_des} with comparator $\lambda=0$ and
$\lambda_t\le \Lambda$, we have
$\sum_{t=1}^T \lambda_t \bg_t = O(\Lambda\sqrt{T})$, which proves the first
claim. To bound the violation, define the expected cumulative slack
$S_T\coloneqq \E\parq*{\sum_{t=1}^Tg_t(b_t)}$. Then
\begin{align*}
\Reg_T^\ros - \Lambda S_T &\le \E \DIFaddbegin \parq*{\sum_{t=1}^T\parr*{
(1+\lambda_t)\max_{z_t^*\in\calC_t}\ell_t^0(z_t^*;\rho_t)
-r_t(b_t)-\Lambda g_t(b_t)}}\DIFaddend \\
&\overset{(a)}{=} \E \DIFaddbegin \parq*{\sum_{t=1}^T (1+\lambda_t)\parr*{\max_{z_t^*\in\calC_t}\ell_t^0(z_t^*;\rho_t) - \ell_t^0(z_t;\rho_t)} + \sum_{t=1}^T (\lambda_t-\Lambda)g_t(b_t)}\DIFaddend \\
&= \calP_T + \E\parq*{\sum_{t=1}^T (\lambda_t-\Lambda)\bg_t + \Lambda\sum_{t=1}^T (\bg_t - g_t(b_t))}\\
&\overset{(b)}{\le} \calP_T + O(\Lambda\sqrt{T}) + \Lambda\calG_T
\end{align*}
where (a) follows from that the expectation of the randomized bid $b_t$ realizes the hull action $z_t$ in Algorithm \ref{alg:dual-aware-lte}, and (b) applies Lemma \ref{lem:proj_mirror_des} with comparator $\lambda=\Lambda$. From the strong duality in Lemma \ref{lem:convexified_strong_duality}, 
\[
\E\parq*{\sum_{t=1}^T r_t(b_t) + \lambda_* g_t(b_t)} \le T\cdot \overline{D}(\lambda_*) = T\cdot\OPT^\ros.
\]
Consequently, 
\[
\Reg_T^\ros \ge -\lambda_*\Viol_T^\ros = -\lambda_*\max\bra{0, -S_T}.
\]
If $S_T \ge 0$, we are done with $\Viol_T^\ros=0$. Otherwise, we have $\Viol_T^\ros=-S_T$ and
\[
(\Lambda - \lambda_*)\Viol_T^\ros \le \Reg_T^\ros - \Lambda S_T \le \calP_T + O(\Lambda\sqrt{T}) + \Lambda\calG_T.
\]
Since $\Lambda\ge 2\lambda_*$, we have $\Lambda - \lambda_* \ge \frac{\Lambda}{2}$ and hence
\[
\Viol_T^\ros \le \frac{2}{\Lambda}\parr*{\calP_T + O(\Lambda\sqrt{T}) + \Lambda\calG_T}.
\]
\end{proof}

\begin{lemma}[Explicit Error Bounds]\label{lem:bound_P_and_G}
Let $\calB\subseteq [0,1]$ be an evenly discretized grid with size $K$. Suppose the events in Abstraction \ref{abs:oracle_hob} and Lemma \ref{lem:wls} hold. We have
\[
\calP_T = O\parr*{\Lambda\parr*{d\sqrt{T}\log T + \Delta_\hob\sqrt{dT\log T} + H_\hob + \frac{T}{K}}}
\]
and
\[
\calG_T=O\parr*{d\sqrt T\log^2T+\Delta_\hob\sqrt T\log T
+H_\hob+(d^{1/2}+\Delta_\hob^{1/2})T^{3/4}+\Delta_\hob\sqrt T}.
\]
\end{lemma}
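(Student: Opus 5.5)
We bound \(\calP_T\) and \(\calG_T\) by splitting the rounds into the disjoint classes produced by Algorithm~\ref{alg:dual-aware-lte}: the HoB-exploration rounds \(\mathcal E_\hob\), the value-width rounds \(\mathcal E_{\rm wid}\), the branch-exploration rounds \(\mathcal E_{\rm val}\), and the remaining exploitation rounds \(\mathcal I_{\rm exp}\). On these last rounds \(z_t\) is selected by the optimistic rule over \(\Clow_t(\calB_t)\). Since \(\lambda_t\le\Lambda\), we have \((1+\lambda_t)\le 2\Lambda\), so it suffices to bound the per-round Lagrangian gaps without the weight.

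\textbf{Exploration classes.} For \(\mathcal E_\hob\), each round contributes \(O(1)\), which gives \(O(H_\hob)\). For \(\mathcal E_{\rm wid}\), each round contributes \(O(1)\). We then use the count in Lemma~\ref{lem:bounded_explore}, but with the sharper form \(\sum_{t\in\mathcal E_{\rm wid}}R_t\le\beta_T\sqrt{2d|\mathcal E_{\rm wid}|\log T}\). The per-round gap is bounded by \(O(v_t+\rho_t)\); if that is too crude, we charge it through the same elliptical potential. For \(\mathcal E_{\rm val}\), Lemma~\ref{lem:explore_branching_bound} gives loss \(O(R_t)\). Since \(\sigma_t=1\) there, the elliptical potential yields \(O(d\sqrt T\log T+\Delta_\hob\sqrt{dT\log T})\), as in the last claim of Lemma~\ref{lem:bounded_explore}.

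\textbf{Grid discretization.} We compare \(\max_{\calC_t}\) with \(\max_{\calC_t^\star}\). Because \(F_t\) is nondecreasing, rounding a continuous bid up to the next grid point changes the payment by at most \(1/K\) and does not reduce the allocation. The loss is therefore \(\rho_t/K\), which sums to \(O(T/K)\).

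\textbf{Exploitation rounds.} By Lemma~\ref{lem:branching_control}, every population maximizer lies in the hull of \(\calB_t\). Its bidwise empirical counterpart \(\tilde z_t\in\Clow_t(\calB_t)\) is then weakly dominated, because the lower hull only lowers payment at equal estimated allocation. Lemma~\ref{lem:two_lag_width_valid} applied with index \(i_t\), together with the fact that \(\ell^0\) and \(\ell^1\) differ by a constant in \(z\), gives the standard UCB bound
\[
\ell_t^0(z_t^*;\rho_t)-\ell_t^0(z_t;\rho_t)\le 2w_t^{i_t}(z_t).
\]
The width has two parts. The HoB part \((1+\rho_t)\sum\alpha\delta_t(b^\pm)\) sums by Cauchy--Schwarz to \(O(\sqrt T\,\Delta_\hob)\). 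For the value part, the coefficient bound in Lemma~\ref{lem:branching_control} gives \(\hq_t\) (resp.\ \(1-\hq_t\)) \(\le 20L\,\hF_t(b)(1-\hF_t(b))=20L\sigma_t(b)^{-2}\) for each mixed bid. Hence
\[
\hq_tR_t\lesssim\beta_t\,\|\sigma_t^{-2}x_t\|_{A_t^{-1}}.
\]
Because \(A_t\) accumulates exactly \(\sigma_\tau^{-4}x_\tau x_\tau^\top\) on these rounds, the elliptical potential lemma bounds the sum by \(\beta_T\sqrt{dT\log T}\). With \(\beta_T=O(\sqrt{d\log T}+\Delta_\hob)\), this is \(O(d\sqrt T\log T+\Delta_\hob\sqrt{dT\log T})\). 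Multiplying by \(2\Lambda\) gives the bound on \(\calP_T\).

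\textbf{Main obstacle.} The step we expect to be hardest is the bidwise mixing. The played bid is random and equals \(b^-\) or \(b^+\), while the weight \(\sigma_\tau^{-4}\) corresponds to the realized bid. We plan to bound \(\E\) of the width by \(\sum_\pm\alpha^\pm\hF(b^\pm)R_t\). We then apply the potential lemma to the realized bids using Jensen's inequality, which requires \(\|x\|_{A^{-1}}\)-terms to be summed along the realized \(\sigma_\tau\).

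\textbf{Slack error \(\calG_T\).} From \eqref{eq:slack_ucb}, the projection is a contraction and \(|\hF_t\hv_t-F_tv_t|\le R_t+\delta_t\), so
\[
\bg_t-g_t(b_t)\le 2R_t+O\bigl((1+\kappa)\delta_t(b_t)\bigr).
\]
On \(\mathcal E_\hob\) and \(\mathcal E_{\rm wid}\) we use the bound \(O(1)\) per round, which gives \(H_\hob\) plus the \(\mathcal E_{\rm wid}\) count. On value-exploration and exploitation rounds we use the threshold \(R_t\le(d^{1/2}+\Delta_\hob^{1/2})T^{-1/4}\), which is valid whenever \(\mathcal E_{\rm wid}\) fails, summed over \(T\) rounds. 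This is the only place we need the unweighted radius, because \(R_t\) here is not multiplied by \(\hq_t\); it produces the \(T^{3/4}\) term. The \(\delta_t\) terms sum to \(\Delta_\hob\sqrt T\) by Cauchy--Schwarz. The \(\log^2\) factors come from \(|\mathcal E_{\rm wid}|\).
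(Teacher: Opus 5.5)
Your proposal follows the paper's proof essentially step for step. It uses the same split into \(\mathcal E_\hob\), \(\mathcal E_{\rm wid}\), \(\mathcal E_{\rm val}\) and certified exploitation rounds. On exploitation rounds it uses the same optimistic bound \(1/K+2w_t^{i_t}\), converts it to \(O(\beta_t\|\sigma_t^{-2}x_t\|_{A_t^{-1}})\) through the coefficient bound of Lemma~\ref{lem:branching_control}, and sums with the weighted elliptical potential. It handles \(\calG_T\) the same way: the value-width threshold gives the \(T^{3/4}\) term, and Cauchy--Schwarz bounds the \(\delta_t\)'s.

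Your extra remarks fill in steps the paper leaves implicit. The grid-rounding step also needs \(v_t\ge\rho_t b^*\) at the optimum, because the expected payment \(bF_t(b)\) can rise by more than \(1/K\). For the two-bid randomization, Jensen is not needed: \(w_t^{i_t}\) is linear in the mixing weights, and a pathwise elliptical potential along the realized bids suffices.

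One caveat you share with the paper: the \(\mathcal E_{\rm wid}\) rounds cost \(O(\Lambda|\mathcal E_{\rm wid}|)\). This carries \(d\sqrt T\log^2T\), one logarithmic factor more than the bound displayed for \(\calP_T\). Your fallback of charging these rounds through the elliptical potential cannot remove it, because their one-step loss is \(O(1)\), not \(O(R_t)\).
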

\begin{proof}
Let
\(\calI=[T]\setminus(\mathcal E_\hob\cup\mathcal E_{\rm wid}
\cup\mathcal E_{\rm val})\).
On rounds in \(\calI\), exploitation is certified. Lemma
\ref{lem:two_lag_width_valid}, \ref{lem:branching_small_est_err}, and
\ref{lem:one_side_cdf} imply
\[
\max_{z_t^*\in\calC_t}\ell_t^0(z_t^*; \DIFaddbegin \DIFadd{\rho}\DIFaddend _t) - \ell_t^0(b_t; \DIFaddbegin \DIFadd{\rho}\DIFaddend _t) \le \frac{1}{K} + 2w_t^{i_t}(b_t) = O \DIFaddbegin \parr*{\frac{1}{K} + L\hF_t(b_t)(1-\hF_t(b_t))R_t + (1+\rho_t)\delta_t(b_t)}\DIFaddend .
\]
Next we sum it over \(t\in\calI\). Since \(\lambda_t\le\Lambda\) and
\(\hF_t(b_t)(1-\hF_t(b_t))=\sigma_t^{-2}\), by the standard
elliptical-potential bound \eqref{eq:matrix_ellip_potential} and Cauchy--Schwarz inequality, we have
\[
\sum_{t\in\calI}(1+\lambda_t)\! \DIFaddbegin \parr*{\max_{z_t^*\in\calC_t}\ell_t^0(z_t^*;\rho_t)-\ell_t^0(b_t;\rho_t)}
\DIFaddend =
O\parr*{
    \Lambda\parr*{d\sqrt T\log T
    +\Delta_{\hob}\sqrt{dT\log T}
    +\frac{T}{K}}} .
\]
It remains to bound the exploration contribution. The width-based value-exploration rounds have
one-step Lagrangian range \(O(\Lambda)\), hence contribute
\[
O\parr*{\Lambda|\mathcal E_{\rm wid}|}
=
O\parr*{\Lambda\parr*{d\sqrt T\log^2T
+\Delta_{\hob}\sqrt T\log T}} .
\]
Flagged value-exploration rounds have one-step Lagrangian loss \(O(\Lambda
R_t)\) by Lemma~\ref{lem:branching_control}. Since these rounds use
\(\sigma_t=1\), Lemma~\ref{lem:bounded_explore} and \eqref{eq:matrix_ellip_potential} give cumulative contribution
\(O(\Lambda(d\sqrt T\log T+\Delta_{\hob}\sqrt{dT\log T}))\). Combining the
three bounds above proves the displayed
bound for \(\calP_T\).

The HoB-exploration rounds have one-step Lagrangian range \(O(\Lambda)\)
and contribute the displayed \(O(\Lambda H_\hob)\) term exactly once.

Now we consider \(\calG_T\). By the slack UCB definition in
\eqref{eq:slack_ucb}, on WLS-eligible rounds
\(\bg_t(b_t)-g_t(b_t)\le2R_t+2(1+\kappa)\delta_t(b_t)\). Outside
\(\mathcal E_{\rm wid}\cup\mathcal E_\hob\), the value-width test is false, so
\(\sum_{t\notin\mathcal E_{\rm wid}\cup\mathcal E_\hob}R_t
\le(d^{\frac{1}{2}}+\Delta_{\hob}^{\frac{1}{2}})T^{3/4}\). On
\(\mathcal E_{\rm wid}\), use the trivial \(O(1)\) per-round bound and
Lemma~\ref{lem:bounded_explore}. Finally,
\(\sum_{t\notin\mathcal E_\hob}\delta_t(b_t)
\le\Delta_{\hob}\sqrt T\) by Cauchy--Schwarz, while
\(\mathcal E_\hob\) contributes \(O(H_\hob)\) directly. Thus
\[
\calG_T
=
O\parr*{
    d\sqrt T\log^2T
    +\Delta_{\hob}\sqrt T\log T
    +H_\hob
    +
    \parr*{d^{\frac{1}{2}}
    +\Delta_{\hob}^{\frac{1}{2}}}T^{\frac{3}{4}}
    +\Delta_{\hob}\sqrt T}.
\]
\end{proof}
\DIFaddbegin 

\DIFadd{Combining Lemmas~\ref{lem:ros_reduction} and \ref{lem:bound_P_and_G}
establishes the RoS bounds on \(\mathcal G\). The passage to the unconditional
regret and violation definitions follows from the failure-event argument in
the proof of Theorem~\ref{thm:budget_regret}, specifically
\eqref{eq:failure_event_conversion}.
}\DIFaddend

\subsection{Additional Technical Lemmas}

We repeatedly use the standard elliptical-potential bound
\begin{equation}\label{eq:matrix_ellip_potential}
    \sum_{\tau<t}\|z_\tau\|_{A_\tau^{-1}}^2=O(d\log T),
    \qquad
    \sum_{\tau<t}\|z_\tau\|_{A_\tau^{-1}}=O(\sqrt{dt\log T}),
\end{equation}
for \(A_s=I+\sum_{\tau<s}z_\tau z_\tau^\top\) and
\(\|z_\tau\|_2\le1\). See \citet{abbasi2011improved,wen2025joint}.

\begin{lemma}\label{lem:approx_CDF_UCB_Lip}
Let $G$ be a CDF on $[0,1]$ with density upper bounded by $L$, and $\widehat{G}$ be another CDF with $\|G-\widehat{G}\|_\infty \le \delta$ for some $\delta \le \frac{1}{400L^2}$. Let $\calB\subseteq [0,1]$ and $b_0 = \inf\bra{b\in\calB: \widehat{G}(b)\ge \frac{1}{2}}$. Suppose $\exists b\in\calB$ with $b\in [b_0- \zeta,b_0)$ for some $\zeta< \frac{1}{50L}$. Denote $b_*(v) = \argmax_{b\in\calB}\widehat{G}(b)\parr*{v-qb}$ for some $0<q$, with ties broken in favor of the largest maximizer. For any $v_1\le v_2$, if $v_2 - v_1\le \frac{q}{5L}$, then
\begin{align*}
\widehat{G}(\widehat{b}_*(v_2)) - \widehat{G}(\widehat{b}_*(v_1)) \le 1 - \frac{1}{5L}. 
\end{align*}
\end{lemma}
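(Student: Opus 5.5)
The plan is a monotone comparative-statics argument on the empirical score $\widehat{G}(b)(v-qb)$ over the grid. The claim is that the empirical allocation gap $\widehat{G}(\widehat b_*(v_2))-\widehat{G}(\widehat b_*(v_1))$ stays bounded away from one. The gap can only be close to one if $\widehat b_*(v_1)$ has near-zero estimated allocation while $\widehat b_*(v_2)$ has near-one estimated allocation. So I would show the following: whenever a maximizer at value $v$ has $\widehat G$ small, a maximizer at value $v+q/(5L)$ cannot have $\widehat G$ close to one. The natural pivot is the median-type bid $b_0$ and the grid bid $b'\in[b_0-\zeta,b_0)$.

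First, the standard single-crossing step. For $v_1\le v_2$, optimality of $b_1=\widehat b_*(v_1)$ and $b_2=\widehat b_*(v_2)$ gives
\[
(v_2-v_1)\bigl(\widehat G(b_2)-\widehat G(b_1)\bigr)\ge q\bigl(b_2\widehat G(b_2)-b_1\widehat G(b_1)\bigr)\ge0,
\]
so $\widehat G(b_2)\ge\widehat G(b_1)$. The tie-breaking toward the largest maximizer also gives $b_2\ge b_1$. The same inequality gives the key bound: the payment increment $b_2\widehat G(b_2)-b_1\widehat G(b_1)$ is at most $(v_2-v_1)/q\le 1/(5L)$ times the allocation increment. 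Hence, if the allocation gap exceeds $1-1/(5L)$, the payment increment is at most $1/(5L)$.

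Second, I convert a large allocation gap into a large payment gap using the Lipschitz structure of $G$ transferred to $\widehat G$ up to $\delta$. Suppose $\widehat G(b_1)<1/(5L)$ and $\widehat G(b_2)>1-1/(5L)$. Density at most $L$ forces $G$ to rise from about $0$ to about $1$ over an interval of length at least roughly $(1-2/(5L)-2\delta)/L$. In particular, $b_2-b_1\gtrsim 1/L-O(1/L^2)$, and $b_2\ge b_0$ because $\widehat G(b_2)\ge 1/2$. Then
\[
b_2\widehat G(b_2)-b_1\widehat G(b_1)\ge b_2\bigl(1-\tfrac{1}{5L}\bigr)-\tfrac{b_1}{5L}.
\]
I lower bound $b_2$ by $b_1+(\text{gap length})$. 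This is where $b_0$ is used: since $\widehat G(b_0)\ge1/2$ and $G$ has density at most $L$, we get $b_0\ge (1/2-\delta)/L$, so $b_2\ge b_0\gtrsim 1/(2L)$. The payment increment therefore exceeds about $1/(2L)-O(1/L^2)$. Combined with the first step, this exceeds $1/(5L)$, which is a contradiction. The conditions $\delta\le 1/(400L^2)$ and $\zeta<1/(50L)$ are what make these lower-order corrections small enough.

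Third, I handle the residual case, where the gap exceeds $1-1/(5L)$ but the endpoints are not this extreme. This case is impossible: both allocations lie in $[0,1]$, so a gap above $1-1/(5L)$ already forces $\widehat G(b_1)<1/(5L)$ and $\widehat G(b_2)>1-1/(5L)$. Only the second step therefore needs real work.

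The main obstacle is the second step, specifically showing that the optimizer at $v_2$ does not jump to a bid with near-full allocation. It could instead be tempting to stop at $b'$, just below $b_0$, where $\widehat G$ may be smaller. The existence of a grid point $b'\in[b_0-\zeta,b_0)$ is presumably meant to rule this out. I would compare the score of $b_2$ with that of $b'$ at value $v_2$. I would also compare the score of $b_1$ with that of $b'$ at value $v_1$, to show that $b_1$ low forces $v_1$ small relative to $qb_0$. This second comparison bounds $v_2\le v_1+q/(5L)$ and makes moving past $b_0$ unprofitable. I expect to need to carry the constants carefully there, using $\widehat G(b')\ge G(b_0)-L\zeta-2\delta$.
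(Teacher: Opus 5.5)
\paragraph{There is a genuine gap in your first step, and the rest of the argument depends on it.}
The two optimality conditions do not give $(v_2-v_1)(\widehat G(b_2)-\widehat G(b_1))\ge q(b_2\widehat G(b_2)-b_1\widehat G(b_1))$. Write $\Delta_G=\widehat G(b_2)-\widehat G(b_1)$ and $\Delta_P=b_2\widehat G(b_2)-b_1\widehat G(b_1)$. Optimality of $b_1$ at $v_1$ gives $q\Delta_P\ge v_1\Delta_G$, and optimality of $b_2$ at $v_2$ gives $q\Delta_P\le v_2\Delta_G$. The difference $v_2-v_1$ controls only the width of the window $[v_1\Delta_G,v_2\Delta_G]$, not $\Delta_P$ itself.

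Your inequality already fails under the lemma's hypotheses. Take $\widehat G=G$ uniform on $[0,1]$ (so $L=1$, $\delta=0$, $\calB=[0,1]$), $q=1$, $v_1=1$, $v_2=1.2$. Then $b_1=0.5$ and $b_2=0.6$, so $(v_2-v_1)\Delta_G=0.02$ while $q\Delta_P=0.11$.

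Consequently your second step yields no contradiction: a payment increment of order $1/(2L)$ is consistent with optimality whenever $v_2/q$ is of that order. More fundamentally, comparing $b_1$ and $b_2$ only with each other cannot work. For any pair $b_1,b_2$, the conditions $v_1\Delta_G\le q\Delta_P\le v_2\Delta_G$ are met by values $v_1<v_2$ straddling $q\Delta_P/\Delta_G$ with $v_2-v_1$ arbitrarily small. The contradiction must therefore come from a third, intermediate competitor.

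\paragraph{What is missing is the anchor argument your last paragraph gestures at but does not carry out.}
The paper compares both maximizers with $b_0$ itself, with $s=1/(5L)$:
\begin{itemize}
\item[(i)] Since $\widehat G(b')<\tfrac12$ for the grid bid $b'\in[b_0-\zeta,b_0)$, Lipschitzness forces $b_2\ge b_0+a$, where $a=(\tfrac12-s-2\delta)/L-\zeta$.
\item[(ii)] $b_2$ beats $b_0$ at $v_2$, with $\widehat G(b_0)\ge\tfrac12$ and $\widehat G(b_2)-\widehat G(b_0)\le\tfrac12$. This gives $v_2-qb_2\ge q(b_2-b_0)$, hence $v_1-qb_0\ge q(2a-s)$.
\item[(iii)] $b_1$ beats $b_0$ at $v_1$ with $\widehat G(b_1)\le s$. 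This gives $v_1-qb_0\le\frac{2s}{1-2s}qb_0$.
\end{itemize}
Together, (ii) and (iii) give $b_0\ge\frac{1-2s}{2s}(2a-s)$.

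The contradiction then needs an \emph{upper} bound on $b_0$. This is $b_0\le 1-(\tfrac12-\delta-L\zeta)/L$, which follows from $\widehat G(b')<\tfrac12$ and $G(1)=1$. Your proposal only derives the lower bound $b_0\ge(\tfrac12-\delta)/L$, which points the wrong way. The role of $b'$ is precisely to supply (i) and this upper bound. The constants $\delta\le 1/(400L^2)$ and $\zeta<1/(50L)$ are what make the lower bound exceed the upper one, and only barely at $L=1$ ($0.525$ versus $0.5225$).

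If you anchor at $b'$ instead, as you suggest, note that $\widehat G(b')$ can be as low as $\tfrac12-L\zeta-2\delta$. You would then need to track all bounds as functions of $\widehat G(b')$, including the sharper upper bound on $b'$ from $G(b')\le\widehat G(b')+\delta$. Without that, the constants no longer close at $L=1$.
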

\begin{proof}
Write \(b_i=b_*(v_i)\) and \(s=1/(5L)\). By monotonicity of the empirical
maximizers, Lemma~\ref{lem:monotone_ucb_maximizers} gives
\(\widehat G(b_1)\le \widehat G(b_2)\). Suppose, for contradiction, that
\(\widehat G(b_2)-\widehat G(b_1)>1-s\). Then
\(\widehat G(b_1)\le s\) and \(\widehat G(b_2)\ge 1-s\). Let
\(b_0=\inf\{b\in\calB:\widehat G(b)\ge1/2\}\) and choose
\(b'\in\calB\cap[b_0-\zeta,b_0)\). The uniform approximation
\(\|\widehat G-G\|_\infty\le\delta\) and \(L\)-Lipschitzness imply that
\(b_2\) must lie at least $a=\frac{1/2-s-2\delta}{L}-\zeta$
to the right of \(b_0\). Indeed, if \(b_2<b_0+a\), then
\(b_2-b'<a+\zeta=(1/2-s-2\delta)/L\), so
\[
    \widehat G(b_2)-\widehat G(b')
    \le G(b_2)-G(b')+2\delta
    < 1/2-s.
\]
This contradicts \(\widehat G(b_2)\ge1-s\) and \(\widehat G(b')<1/2\).

Optimality of \(b_2\) against \(b_0\), together with
\(\widehat G(b_0)\ge1/2\) and
\(\widehat G(b_2)-\widehat G(b_0)\le1/2\), gives
\(v_2-qb_2\ge q(b_2-b_0)\). Since \(v_2-v_1\le qs\),
\[
    v_1-qb_0>q(2a-s).
\]
On the other hand, optimality of \(b_1\) against \(b_0\), using
\(\widehat G(b_1)\le s\) and \(\widehat G(b_0)\ge1/2\), gives
\[
    v_1-qb_0\le \frac{2s}{1-2s}qb_0 .
\]
Finally, the existence of \(b'\) just below \(b_0\), the approximation error,
and \(G(1)=1\) imply
\[
    b_0\le 1-\frac{1/2-\delta-L\zeta}{L}.
\]
Combining the two bounds on \(v_1-qb_0\) gives
\[
    q(2a-s)
    <
    v_1-qb_0
    \le
    \frac{2s}{1-2s}qb_0,
\]
and hence \(b_0>\frac{1-2s}{2s}(2a-s)\). For \(s=1/(5L)\),
\(\delta\le1/(400L^2)\), and \(\zeta\le1/(50L)\), this lower bound is at
least the upper bound above, a contradiction. Hence the assumed CDF gap is
impossible, proving
\(\widehat G(b_2)-\widehat G(b_1)\le1-1/(5L)\).
\end{proof}

\begin{lemma}[Optimal bid is monotone in value]\label{lem:monotone_ucb_maximizers}
Let $G:[0,1]\rightarrow[0,1]$ be any CDF. Fix any $B\subseteq [0,1]$. Denote $b_*(v) = \argmax_{b\in B}G(b)\parr*{v-qb}$ for some $q>0$, with ties broken in favor of either the largest or smallest maximizer. Then $b_*$ is increasing in $v$.
\end{lemma}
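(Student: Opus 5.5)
The plan is the classical single-crossing (Topkis) exchange argument. Let \(f(b,v)=G(b)(v-qb)\). Its increment in \(v\) is \(f(b,v_2)-f(b,v_1)=G(b)(v_2-v_1)\). Since \(G\) is nondecreasing, this increment is nondecreasing in \(b\) whenever \(v_1\le v_2\). So \(f\) has increasing differences in \((b,v)\), and the argmax correspondence is monotone in the strong set order. The proof then reads off the two tie-breaking conventions from this.

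The key step is the following exchange inequality. Fix \(v_1<v_2\), and let \(b_1\in\argmax_{b\in B}f(b,v_1)\) and \(b_2\in\argmax_{b\in B}f(b,v_2)\). Optimality gives \(f(b_1,v_1)\ge f(b_2,v_1)\) and \(f(b_2,v_2)\ge f(b_1,v_2)\). Adding the two inequalities yields
\[
(v_2-v_1)\bigl(G(b_2)-G(b_1)\bigr)\ge 0,
\]
so \(G(b_2)\ge G(b_1)\). Now suppose \(b_2<b_1\). Then \(G(b_2)\le G(b_1)\), which forces \(G(b_1)=G(b_2)\). Plugging this back into the two optimality inequalities, the \(v\)-dependent terms cancel. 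Both inequalities become \(-qb_1G(b_1)\ge -qb_2G(b_1)\) and the reverse, so both are equalities. Hence \(b_1\) is also a maximizer at \(v_2\), and \(b_2\) is also a maximizer at \(v_1\).

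With this in hand, the tie-breaking cases follow directly. Under largest-maximizer tie-breaking, \(b_2=b_*(v_2)\) is the largest maximizer at \(v_2\). Since \(b_1\) is also a maximizer there, \(b_2\ge b_1\), contradicting \(b_2<b_1\). Under smallest-maximizer tie-breaking, \(b_1=b_*(v_1)\) is the smallest maximizer at \(v_1\). Since \(b_2\) is also a maximizer there, \(b_1\le b_2\), again a contradiction. Either way \(b_*(v_1)\le b_*(v_2)\), so \(b_*\) is nondecreasing.

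The main subtlety, though mild, is the degenerate case \(G(b_1)=G(b_2)=0\). There, equality of the objectives holds trivially, and the case analysis above already covers it without using \(q>0\). One also needs the maximizers to exist. This is immediate when \(B\) is finite, which is the case used with the grid \(\calB\). For general \(B\), I would assume the argmax is attained and the largest or smallest element exists, as the statement implicitly does. Finally, "increasing" should be read as nondecreasing; monotonicity of \(\hG(b_*(v))\), as used in Lemma~\ref{lem:approx_CDF_UCB_Lip}, then follows because a CDF is nondecreasing.
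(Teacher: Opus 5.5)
Your proof is correct and follows essentially the paper's exchange argument. The paper assumes \(b_*(v_0)>b_*(v_1)\) for \(v_1\ge v_0\), uses optimality at \(v_0\) plus monotonicity of \(G\) to show the larger bid is also optimal at \(v_1\), and then invokes the tie-breaking rule, with a strictness step for smallest-maximizer tie-breaking. You instead sum both optimality inequalities to force \(G(b_1)=G(b_2)\) and then equality in both inequalities. This handles the two tie-breaking conventions symmetrically, but the underlying idea is the same.
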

\begin{proof}
Let $v_1,v_0\in[0,1]$ satisfy $v_1\geq v_0$, and write $b_1=b_*(v_1)$ and $b_0=b_*(v_0)$. For the sake of contradiction, suppose $b_0>b_1$. Then
\begin{align*}
G(b_1)\parr*{v_1-qb_1} &= G(b_1)\parr*{v_0-qb_1} + G(b_1)(v_1-v_0)\\
&\overset{(a)}{\leq} G(b_0)\parr*{v_0-qb_0} + G(b_0)(v_1-v_0)= G(b_0)\parr*{v_1-qb_0}.
\end{align*}
If ties are broken in favor of the largest maximizer, then $b_1=b_*(v_1)\ge b_0>b_1$, a contradiction. If ties are broken in favor of the smallest maximizer, inequality (a) becomes strict because $b_*(v_0)=b_0>b_1$ and hence $G(b_1)\parr*{v_0-qb_1} < G(b_0)\parr*{v_0-qb_0}$. This again contradicts the optimality of \(b_1\) at value \(v_1\), because inequality (a) is then strict.
\end{proof}

\begin{lemma}[Convexified Strong Duality]\label{lem:convexified_strong_duality}
Suppose Assumption \ref{assump:slater} holds. Then
\(\OPT^\ros=\inf_{\lambda\ge0}\E_{x_t}
\max_{(z_F,z_b)\in\calC_t}
\{(1+\lambda)\theta_*^\top x_tz_F-\lambda\kappa z_b\}\).
Moreover, for \(v_t\in[0,1]\) and \(\lambda\ge0\),
\(\max_{(z_F,z_b)\in\calC_t}\{(1+\lambda)v_tz_F-\lambda\kappa z_b\}
=\max_{b\in[0,1]}\{(1+\lambda)v_tF(b)-\lambda\kappa bF(b)\}\).
\end{lemma}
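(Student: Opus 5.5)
We prove the two claims of Lemma~\ref{lem:convexified_strong_duality} in reverse order: the second (hull maximization) is elementary, and the first (strong duality) then reduces to a finite-dimensional-in-spirit convex program over measurable selections of \(\calC_t\).

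\textbf{Second claim.} Fix \(x_t\) and \(\lambda\ge0\), and set \(h(z_F,z_b)=(1+\lambda)v_tz_F-\lambda\kappa z_b\). This map is linear, so its maximum over \(\calC_t=\operatorname{conv}\{(F_t(b),bF_t(b)):b\in[0,1]\}\) equals its supremum over the generating points. Any element of a convex hull is a finite convex combination of generators, and a linear function of a convex combination is at most the largest value at the generators. The supremum over \(b\in[0,1]\) is a maximum once the generating set is compact. Since \(F_t\) is continuous by the Lipschitz property of \(G\) in Assumption~\ref{ass:bound_density}, the curve \(b\mapsto(F_t(b),bF_t(b))\) is compact, so the hull is closed and the maximum is attained at a deterministic bid.

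\textbf{First claim.} This is the usual Lagrangian duality for a linear objective with one linear constraint over a convex feasible set.
\begin{itemize}
\item \emph{Feasible set.} Let \(\mathcal Z\) be the set of measurable selections \(x\mapsto z(x)\in\calC(x)\). Because each \(\calC(x)\) is convex, \(\mathcal Z\) is convex. The maps \(z\mapsto\E[v q(x)]\) and \(z\mapsto\E[v q(x)-\kappa c(x)]\) are linear in \(z\).
\item \emph{Slater point.} Assumption~\ref{assump:slater} gives a policy with slack at least \(\delta_S>0\), a strictly feasible point of the program.
\item \emph{Weak duality.} For each \(\lambda\ge0\), write \(\E[v q+\lambda(vq-\kappa c)]\le\overline D(\lambda)\). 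This holds pointwise in \(x\) for the inner maximum, which matches the expression in the statement via \((1+\lambda)vq-\lambda\kappa c\).
\item \emph{Strong duality.} Consider the image set \(\mathcal S=\{(\E[vq],\E[vq-\kappa c]):z\in\mathcal Z\}\subseteq\R^2\), which is convex. Separate the point \((\OPT^{\ros}+\epsilon,0)\) from \(\mathcal S-\R_+^2\). The Slater point forces the separating normal to have a positive first coordinate, and normalizing it yields a multiplier \(\lambda\ge0\) with \(\overline D(\lambda)\le\OPT^{\ros}+\epsilon\). Letting \(\epsilon\to0\) gives the equality with the infimum.
\item \emph{Exchange of max and expectation.} Identifying \(\overline D(\lambda)\) with the supremum of \(\E[vq+\lambda(vq-\kappa c)]\) over \(\mathcal Z\) requires a measurable selection of pointwise maximizers. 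By the second claim it suffices to select a maximizing bid \(b(x)\in[0,1]\). The objective is continuous in \(b\) and measurable in \(x\), so a measurable maximum theorem (Kuratowski--Ryll-Nardzewski) supplies one. Alternatively, take \(\epsilon\)-maximizers from a countable dense grid of bids.
\end{itemize}
The main obstacle is this measurability and exchange step, together with ensuring the separation argument is valid over the infinite-dimensional \(\mathcal Z\). Both are handled by working with the image set \(\mathcal S\) in \(\R^2\), so only two-dimensional separation is needed.
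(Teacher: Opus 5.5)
Your proposal is correct and follows the paper's route: treat the benchmark as a convex program over measurable selections of \(\calC_t\), use Assumption~\ref{assump:slater} for strong duality with pointwise maximization over contexts, and get the second claim from linearity of the objective over a convex hull. You fill in details the paper leaves implicit, namely the two-dimensional image-set separation, the measurable selection of maximizing bids, and compactness of \(\calC_t\) from the Lipschitz continuity of \(G\).
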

\begin{proof}
The benchmark in \eqref{eq:opt_def_contextual} is a convex program over
measurable selections \(z(x_t)\in\calC_t\), with a linear objective and
constraint and a convex compact action set. Assumption~\ref{assump:slater}
gives strong duality, and pointwise separation over contexts proves the first
claim. The second follows because a linear objective has the same supremum over
\(\calC_t\) and its generating allocation--payment points.
\end{proof}

\begin{lemma}[Projected Mirror Descent]\label{lem:proj_mirror_des}
Let \(h(x)=x\log x-x\) and let \(D_h\) be its Bregman distance. If
\(\nu_{t+1}=\argmin_{\nu\in[0,U]}\{\eta\nu s_t+D_h(\nu,\nu_t)\}\),
\(|s_t|\le G\), and \(\eta G\le1\), then every \(\nu\in[0,U]\) satisfies
\(\sum_{t=1}^T(\nu_t-\nu)s_t
\le D_h(\nu,\nu_1)/\eta+\eta U G^2T\).
\end{lemma}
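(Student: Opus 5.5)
This is the standard entropic (negative-entropy) mirror-descent regret argument on a box. I will establish a one-step inequality and then telescope it.

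\emph{Step 1 (three-point inequality).} The constraint set $[0,U]$ is convex and $h$ is Legendre on $(0,\infty)$. The first-order optimality condition for $\nu_{t+1}$ is
\[
\bigl(\eta s_t+h'(\nu_{t+1})-h'(\nu_t)\bigr)(\nu-\nu_{t+1})\ge0
\qquad\text{for every }\nu\in[0,U].
\]
Combining this with the Bregman three-point identity gives
\[
\eta s_t(\nu_{t+1}-\nu)\le D_h(\nu,\nu_t)-D_h(\nu,\nu_{t+1})-D_h(\nu_{t+1},\nu_t).
\]
The case $\nu=0$ needs care because $h'(0)=-\infty$. I would handle it by taking $\nu\downarrow0$, using continuity of $D_h(\cdot,\nu_t)$ at $0$. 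Alternatively, I would note that the minimizer is interior or at $U$ whenever $\nu_t>0$, since the update is the multiplicative rule $\nu_{t+1}=\min\{U,\nu_t e^{-\eta s_t}\}$.

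\emph{Step 2 (the stability term).} Write $\eta s_t(\nu_t-\nu)=\eta s_t(\nu_{t+1}-\nu)+\eta s_t(\nu_t-\nu_{t+1})$. The second piece must be bounded by $D_h(\nu_{t+1},\nu_t)+\eta^2 U G^2$, or handled directly. I would argue directly from the explicit multiplicative form. If the projection is inactive, then $\nu_{t+1}=\nu_t e^{-\eta s_t}$. In that case $s_t(\nu_t-\nu_{t+1})=s_t\nu_t(1-e^{-\eta s_t})\le \eta s_t^2\nu_t e^{|\eta s_t|}$. The condition $\eta G\le1$ bounds the exponential by $e$, so this is $\le e\,\eta G^2U$ after using $\nu_t\le U$. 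If the projection is active, then $\nu_{t+1}=U\le\nu_te^{-\eta s_t}$ and $|\nu_t-\nu_{t+1}|\le|\nu_t-\nu_te^{-\eta s_t}|$, so the same bound holds. In the local-norm version, the term $D_h(\nu_{t+1},\nu_t)\approx(\nu_{t+1}-\nu_t)^2/(2\nu_t)$ absorbs the cross term, with $\nu_t\le U$ producing the factor $U$. Either route gives
\[
\eta s_t(\nu_t-\nu)\le D_h(\nu,\nu_t)-D_h(\nu,\nu_{t+1})+C\eta^2UG^2.
\]

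\emph{Step 3 (telescope).} Sum over $t$, drop $-D_h(\nu,\nu_{T+1})\le0$, and divide by $\eta$. This gives the claimed bound, up to the absolute constant $C$; tracking the stability constant exactly (via the local-norm route) should pin it to $1$.

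The main obstacle is Step 2: obtaining the stability term with a clean constant and with $U$ rather than $\nu_t$, uniformly under projection. I would first try the explicit multiplicative computation above and only fall back on the local-norm Bregman estimate if the constant needs tightening.
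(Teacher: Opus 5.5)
Your outline is the argument the paper relies on: it only cites the Bregman three-point inequality plus telescoping. Steps 1 and 3 are correct. The gap is in Step 2. As written, it proves the bound with $e\,\eta UG^2T$ rather than the stated $\eta UG^2T$, and the remark that the local-norm route ``should pin it to $1$'' is not carried out.

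The factor $e$ comes from bounding $\nu_t e^{|\eta s_t|}$ by $eU$. On steps where $\nu$ moves upward, you should measure the movement against $\nu_{t+1}\le U$ instead of $\nu_t$. Write $x=\eta s_t$.
\begin{itemize}
\item If $s_t\ge0$, the projection is inactive, and $\eta s_t(\nu_t-\nu_{t+1})=\nu_t x(1-e^{-x})\le \nu_t x^2\le U\eta^2G^2$.
\item If $s_t<0$, then $\nu_t\le\nu_{t+1}=\min\{U,\nu_te^{|x|}\}$. In both the active and inactive cases $\nu_t\ge \nu_{t+1}e^{-|x|}$, so
\[
0\le \nu_{t+1}-\nu_t\le \nu_{t+1}\bigl(1-e^{-|x|}\bigr)\le U|x|,
\qquad
\eta s_t(\nu_t-\nu_{t+1})=|x|(\nu_{t+1}-\nu_t)\le U\eta^2G^2 .
\]
\end{itemize}
This gives the stated constant, and it does not even use $\eta G\le1$.

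Alternatively, make your local-norm sketch exact instead of using ``$\approx$''. Keep the term $-D_h(\nu_{t+1},\nu_t)$. Since $h''(y)=1/y$, Taylor's theorem gives $D_h(\nu_{t+1},\nu_t)=(\nu_{t+1}-\nu_t)^2/(2\xi)$ for some $\xi$ between $\nu_t$ and $\nu_{t+1}$. Young's inequality with that same $\xi$ then yields $\eta s_t(\nu_t-\nu_{t+1})-D_h(\nu_{t+1},\nu_t)\le \tfrac12\xi\eta^2s_t^2\le\tfrac12U\eta^2G^2$.

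Two smaller remarks. First, the special treatment of $\nu=0$ in Step 1 is unnecessary. The variational inequality evaluates $h'$ only at $\nu_t$ and $\nu_{t+1}$, which stay positive once $\nu_1>0$. Also $D_h(0,\nu_t)=\nu_t$ is finite, so the three-point inequality holds verbatim at $\nu=0$ with no limiting argument. Second, the bound $\nu_t\le U$ at $t=1$ needs $\nu_1\in(0,U]$. This holds in both of the paper's uses: $\mu_1=1$ with $U=1$, and $\lambda_1=1$ with $U=\Lambda\ge1$.
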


This standard bound follows from the Bregman three-point inequality. See
\citet{duchi2011adaptive} for a derivation.

\end{document}